\def\thanks#1{\protected@xdef\@thanks{\@thanks
        \protect\footnotetext{#1}}}
\title{Approximation Bounds for Transformer Networks with Application to Regression}
\author{
Yuling Jiao, Yanming Lai, Defeng Sun, Yang Wang, Bokai Yan
\thanks{Yuling Jiao is with the School of Artificial Intelligence and the School of Mathematics and Statistics, Wuhan University, Wuhan, China (email: yulingjiaomath@whu.edu.cn).}
\thanks{Yanming Lai, Yang Wang and Bokai Yan are with the Department of Mathematics, Hong Kong University of Science and Technology, Clear Water Bay, Hong Kong, China (email: ylaiam@connect.ust.hk, yangwang@ust.hk, byanac@connect.ust.hk).}
\thanks{Defeng Sun is with the Department of Applied Mathematics and the Research Center for Intelligent Operations Research, The Hong Kong Polytechnic University, Hung Hom, Hong Kong, China (email: defeng.sun@polyu.edu.hk).}
}
\date{\vspace{-5ex}}
\begin{document}
\maketitle

\begin{abstract}
We explore the approximation capabilities of Transformer networks for H\"older and Sobolev functions, and apply these results to address nonparametric regression estimation with dependent observations. First, we establish novel upper bounds for standard Transformer networks approximating sequence-to-sequence mappings whose component functions are H\"older continuous with smoothness index $\gamma \in (0,1]$. To achieve an approximation error $\varepsilon$ under the $L^p$-norm for $p \in [1, \infty]$, it suffices to use a fixed-depth Transformer network whose total number of parameters scales as $\varepsilon^{-d_x n / \gamma}$. This result not only extends existing findings to include the case $p = \infty$, but also matches the best known upper bounds on number of parameters previously obtained for fixed-depth FNNs and RNNs. Similar bounds are also derived for Sobolev functions. Second, we derive explicit convergence rates for the nonparametric regression problem under various $\beta$-mixing data assumptions, which allow the dependence between observations to weaken over time. Our bounds on the sample complexity impose no constraints on weight magnitudes. Lastly, we propose a novel proof strategy to establish approximation bounds, inspired by the Kolmogorov-Arnold representation theorem. We show that if the self-attention layer in a Transformer can perform column averaging, the network can approximate sequence-to-sequence H\"older functions, offering new insights into the interpretability of self-attention mechanisms.

\end{abstract}

\section{Introduction}

Transformers \cite{vaswani2017attention} have become the cornerstone of modern deep learning, driving breakthroughs across multiple domains, including natural language processing \cite{devlin2019bert}, large language models \cite{openai2023gpt4}, computer vision \cite{dosovitskiy2021image}, and generative models \cite{peebles2023dit}. Although their high performance has led to widespread use in practice, significant theoretical efforts are still underway to explain exactly what contributes to their success.

An important aspect of Transformers is their expressive capacity, which refers to their ability to effectively approximate target functions. As early as the 1980s, researchers established the universal approximation property for neural networks (e.g., \cite{cybenko1989approximation, hornik1989multilayer}), demonstrating that feed-forward neural networks (FNNs) can approximate any continuous function to any precision. With the rise of deep neural networks in recent years, many works have focused on the approximation theory of neural networks. For example, \cite{yarotsky2017error, lu2021deep, jiao2023deep} studied approximation rates of deep ReLU FNNs for smooth functions, while \cite{yang2024nonparametric, yang2024optimal} and \cite{jiao2024approximation} examined, respectively, shallow ReLU FNNs and ReLU recurrent neural networks (RNNs). However, analyses based on Transformer architectures have rarely been observed. For a representative example, \cite{yun2019transformers} showed the universal approximation of Transformers, which can approximate sequence-to-sequence continuous functions under the $L^p$-norm with $p \in [1,\infty)$. \cite{kajitsuka2023transformers} revealed that even a Transformer with a single attention layer is a universal approximator. \cite{takakura2023approximation} investigated a special class of Transformers with infinite dimensional inputs. \cite{xu2024expressive} established the approximation rates of looped Transformers, which reuse the same Transformer layer iteratively, by defining the modulus of continuity for sequence-to-sequence functions. \cite{jiang2024approximation} derived approximation rate estimates for continuous Transformers by defining novel complexity measures for nonlinear sequence relationships. Most recently, \cite{takeshita2025approximation} showed that Transformers can approximate column-symmetric polynomials. Another line of research has explored replacing the softmax function in the attention mechanism with more tractable alternatives \cite{gurevych2022rate, jiao2024convergence, havrilla2024understanding}. Despite these advances, the approximation rates for general functions and the performance under the $L^\infty$-norm using the standard Transformer architecture still remain unclear.

Another significant concern pertains to Transformer performance in sequence modeling, specifically regarding how Transformers capture relationships within sequential data. Recent theoretical studies have provided diverse insights into this issue. For example, \cite{edelman2022inductive} analyzed the inductive biases inherent in self-attention mechanisms, demonstrating that sample complexity scales only logarithmically in the sequence length. \cite{chiang2022overcoming} introduced architectural adjustments to Transformers, enabling exact recognition of formal languages through enhanced long-range dependency modeling. \cite{bietti2023birth} offered an interpretation of Transformer weight matrices as associative memory systems, distinguishing between long-term parametric storage and short-term contextual memory. \cite{petrov2024prompting} proved that a pretrained Transformer, when appropriately prompted or prefix-tuned, can approximate any sequence-to-sequence function. \cite{hu2025fundamental} showed that soft prompt tuning yields a universal approximator for Lipschitz sequence-to-sequence mappings. Furthermore, \cite{wang2024understanding} conducted a rigorous analysis emphasizing how architectural components such as depth, attention heads, and feed-forward layers influence performance in tasks necessitating extensive, sparse memories. Recent developments in nonparametric regression estimation based on neural networks have mainly relied on assumptions of independent and identically distributed (i.i.d.) observations drawn from an unknown distribution \cite{suzuki2018adaptivity, schmidt2020nonparametric, nakada2020adaptive, kohler2021rate, farrell2021deep, jiao2023deep, yang2024nonparametric}. However, sequential tasks typically exhibit temporal dependence, making the i.i.d. assumption too restrictive. Some recent studies have relaxed this assumption by considering that observations are drawn from a stationary mixing distribution, where the dependence between observations weakens over time \cite{feng2023over, ren2024statistical, jiao2024approximation, jiao2025deep}. Despite these advancements, a gap remains regarding the capability of Transformers, explicitly designed to handle sequential data and temporal dependencies, in regression tasks involving dependent observations.

In this paper, we investigate the approximation of H\"older and Sobolev functions using Transformer networks and study nonparametric regression estimation under dependent observations. Our main contributions are as follows:

\begin{itemize}[itemsep=0em, labelwidth=1em, leftmargin=!]
\item We derive novel upper bounds on the approximation of standard Transformer architectures for H\"older and Sobolev functions. Specifically, to approximate a sequence-to-sequence mapping, where each component function is H\"older continuous with smoothness index $\gamma \in (0,1]$, to approximation error $\varepsilon$ under the $L^p$-norm for $p \in [1, \infty]$, it suffices to use a Transformer network whose total number of parameters scales as $\varepsilon^{-d_x n / \gamma}$. Our result establishes explicit approximation rates and extends existing findings to include the case $p = \infty$. Moreover, the number of parameters matches the best known upper bounds previously established for fixed-depth FNNs and RNNs. Similar results are also derived for Sobolev functions.

\item We present a comprehensive error analysis for the nonparametric regression problem with weakly dependent data. We achieve rates of $m^{-\frac{\gamma}{\gamma + d_x n}}$, $m^{-\frac{r \gamma}{(r+2) \gamma + (r+1) d_x n}}$ and $m^{-\frac{\gamma}{\gamma + d_x n}}$ up to logarithmic factors corresponding respectively to geometrically $\beta$-mixing, algebraically $\beta$-mixing, and i.i.d. data assumptions, where $m$ denotes the sample size and the parameter $r$ controls the strength of dependence. We also establish upper bounds on the sample complexity of Transformer networks, notably without imposing constraints on the weight size of the network.

\item We propose a novel proof strategy for establishing approximation bounds of Transformer networks, inspired by the Kolmogorov-Arnold representation theorem. By observing that the self-attention layers merely compute column-wise averages in our analysis, we demonstrate that the softmax function can be generalized to broader alternatives. This viewpoint provides new insights into the interpretability of self-attention mechanisms.

\end{itemize}

\subsection{Organization}

The rest of the paper is organized as follows. In Section \ref{sec: 4}, we define the Transformer architecture, describe the setup of the nonparametric regression problem, and list our main results. In Section \ref{sec: 5}, we present discussions and related works. All proofs are provided in Section \ref{sec: 6}.

\section{Summary of Results}\label{sec: 4}

\textit{Notation}. We use bold lowercase letters to represent vectors and bold uppercase letters to represent matrices. For any vector $\bm{v} \in \mathbb{R}^d$, we denote by $v_i$ the $i$-the element of $\bm{v}$. For any matrix $\bm{A} \in \mathbb{R}^{d \times n}$, we denote its $i$-th row by $\bm{A}_{i, :}$, its $j$-th column by $\bm{A}_{:, j}$ and the element at its $i$-th row and $j$-th column by $A_{i, j}$. We denote the all-zero and all-one vectors of length $n$ by $\bm{0}_n$ and $\bm{1}_n$, respectively. The identity matrix of size $n$ is denoted by $\bm{I}_n$. The zero matrix of size $m \times n$ is denoted by $\bm{O}_{m,n}$. When the dimensions are clear from the context, we omit the subscripts for brevity. For $m \in \mathbb{N}$, we write $[m] := {1, \dots, m}$. We use $\mathbb{N}_0$ to denote the set of nonnegative integers and $\mathbb{N}_0^d = \{(\alpha_1, \alpha_2, \ldots, \alpha_d): \alpha_k \in \mathbb{N}_0, \forall k \in [d]\}$ to denote the set of $d$-dimensional multi-index. For a multi-index $\bm{\alpha} \in \mathbb{N}_0^d$, we denote $\|\bm{\alpha}\|_{\ell^1} = \alpha_1+\alpha_2+\cdots+\alpha_d$. For a finite set $\mathbb{G}$, we use $|\mathbb{G}|$ to denote its cardinality. For two sequences $\{a_n\}$ and $\{b_n\}$, we use the notation $a_n \lesssim b_n$ and $a_n \gtrsim b_n$ to indicate $a_n \leq c_1 b_n$ and $a_n \geq c_2 b_n$, respectively, for some constants $c_1, c_2 >0$ that are independent of $n$. Furthermore, $a_n \asymp b_n$ means that both $a_n \lesssim b_n$ and $a_n \gtrsim b_n$ hold. In our analysis, we use $\sigma_S$ to represent the column-wise softmax function. Specifically, for a matrix $\bm{A} \in \mathbb{R}^{d \times n}$, $\sigma_S[\bm{A}] \in \mathbb{R}^{d \times n}$ is computed as $\sigma_S[\bm{A}]_{i, j}:=\exp (A_{i, j}) / \sum_{k=1}^d \exp (A_{k, j})$. The ReLU activation function is denoted by $\sigma_R[x] := \max \{x, 0\}$. In contrast to $\sigma_S$, $\sigma_R$ operates element-wise, regardless of whether the input is a vector or a matrix. Let $\Omega \subseteq \mathbb{R}^{d \times n}$ be a bounded domain. For $1 \leq p < \infty$, the $L^p$-norm of a real-valued function $f: \mathbb{R}^{d \times n} \to \mathbb{R}$ is defined as $\|f\|_{L^p(\Omega)} := (\int_{\Omega} |f(\bm{X})|^p \, d\bm{X})^{1/p}$, and for $p = \infty$, it is given by $\|f\|_{L^{\infty}(\Omega)} := \operatorname{ess\, sup}_{\bm{X} \in \Omega} |f(\bm{X})|$. For a matrix-valued function $\bm{F}: \mathbb{R}^{d \times n} \to \mathbb{R}^{m \times n}$, the $L^p$-norm is defined as $\|\bm{F}\|_{L^p(\Omega)} := (\int_{\Omega} \|\bm{F}(\bm{X})\|_F^p \, d\bm{X})^{1/p}$ for $1 \leq p < \infty$, and for $p = \infty$, $\|\bm{F}\|_{L^{\infty}(\Omega)} := \operatorname{ess\, sup}_{\bm{X} \in \Omega} \|\bm{F}(\bm{X})\|_F$.

\subsection{Approximation Rates for H\"older and Sobolev Functions}

We begin by introducing the architecture of Transformers, following the notations in \cite{kim2023provable} and \cite{kajitsuka2024optimal}. A Transformer network is a sequence-to-sequence function $\mathbb{R}^{d_x \times n} \to \mathbb{R}^{d_y \times n}$, comprising three main components: the self-attention layer, the (token-wise) feed-forward layer, and the embedding layer.

\textbf{Embedding and projection layer:} For embedding dimension $D \in \mathbb{N}$, the embedding and projection layers connect the input, hidden, and output spaces. The embedding layer $\mathcal{E}_{in}: \mathbb{R}^{d_x \times n} \rightarrow \mathbb{R}^{D \times n}$ is defined as
\begin{align*}
\mathcal{E}_{in}(\boldsymbol{X}) := \boldsymbol{E}_{in} \boldsymbol{X} + \bm{P} \in \mathbb{R}^{D \times n},
\end{align*}
where $\boldsymbol{E}_{in} \in \mathbb{R}^{D \times d_x}$ is a learnable weight matrix, and $\bm{P} \in \mathbb{R}^{D \times n}$ is a trainable positional encoding matrix. Since self-attention and feed-forward layers are permutation equivariant, $\bm{P}$ is introduced to provide positional information and break this equivariance. The projection layer $\mathcal{E}_{out}: \mathbb{R}^{D \times n} \rightarrow \mathbb{R}^{d_x \times n}$ is defined as 
\begin{align*}
\mathcal{E}_{out}(\boldsymbol{Y}) := \boldsymbol{E}_{out} \boldsymbol{Y} \in \mathbb{R}^{d_y \times n},
\end{align*}
where $\boldsymbol{E}_{out} \in \mathbb{R}^{d_y \times D}$ maps the high-dimensional hidden representation onto the output space.

\textbf{Self-attention layer}: Given a sequence $\boldsymbol{Z} \in \mathbb{R}^{D \times n}$, composed of $n$ tokens, each with an embedding dimension $D$, the $l$-th self-attention layer $\mathcal{F}_l^{(SA)}: \mathbb{R}^{D \times n} \to \mathbb{R}^{D \times n}$ is defined as
\begin{align*}
\mathcal{F}_l^{(SA)}(\boldsymbol{Z}) := \boldsymbol{Z} + \sum_{h=1}^H \boldsymbol{W}_{h, l}^{(O)} \left(\boldsymbol{W}_{h, l}^{(V)} \boldsymbol{Z}\right) \sigma_S \left[\left(\boldsymbol{W}_{h, l}^{(K)} \boldsymbol{Z}\right)^{\top} \left(\boldsymbol{W}_{h, l}^{(Q)} \boldsymbol{Z}\right)\right] \in \mathbb{R}^{D \times n},
\end{align*}
where $\boldsymbol{W}_{h, l}^{(V)}, \boldsymbol{W}_{h, l}^{(K)}, \boldsymbol{W}_{h, l}^{(Q)} \in \mathbb{R}^{S \times D}$ and $\boldsymbol{W}_{h, l}^{(O)} \in \mathbb{R}^{D \times S}$ are the value, key, query, and projection matrices for head $h \in [H]$ with head size $S$, respectively.

\textbf{Feed-forward layer}: The output $\boldsymbol{Z} \in \mathbb{R}^{D \times n}$ of the self-attention layer is then passed to the feed-forward layer, given by
\begin{align*}
\mathcal{F}_l^{(FF)}(\boldsymbol{Z}) := \boldsymbol{Z} + \boldsymbol{W}_l^{(2)} \sigma_R \left[\boldsymbol{W}_l^{(1)} \boldsymbol{Z} + \boldsymbol{b}_l^{(1)} \bm{1}_n^\top \right] + \boldsymbol{b}_l^{(2)} \bm{1}_n^\top \in \mathbb{R}^{D \times n},
\end{align*}
where $\boldsymbol{W}_l^{(1)} \in \mathbb{R}^{W \times D}$ and $\boldsymbol{W}_l^{(2)} \in \mathbb{R}^{D \times W}$ are weight matrices with hidden dimension $W$, and $\boldsymbol{b}_l^{(1)} \in \mathbb{R}^W$, $\boldsymbol{b}_l^{(2)} \in \mathbb{R}^D$ are bias terms.

The class of Transformer networks is then defined as  
\begin{align*}
\mathcal{T}_{d_x, d_y}(D, H, S, W, L) := \left\{\mathcal{E}_{out} \circ \mathcal{F}_L^{(FF)} \circ \mathcal{F}_L^{(SA)} \circ \cdots \circ \mathcal{F}_1^{(FF)} \circ \mathcal{F}_1^{(SA)} \circ \mathcal{E}_{in}\right\},
\end{align*}
where $D$ is the embedding dimension, $H$ is the number of attention heads, $S$ is the head size, $W$ is the hidden dimension in the feed-forward layer, and $L$ is the number of Transformer layers, each consisting of a self-attention and a feed-forward sublayer. When the dimensions are clear from the context, we use the simplified notation $\mathcal{T}(D, H, S, W, L)$ for convenience. We list some basic properties of the Transformer class in Proposition \ref{pro: 1}. Let
\begin{align}\label{eq: 15}
N = D d_x + D n + d_y D + L \left( 4HSD + 2WD + W + D \right) \lesssim (HS + W) DL
\end{align}
be the total number of training parameters in the Transformer network.

The purpose of this paper is to study the approximation of H\"older and Sobolev functions by Transformer networks. We recall the definitions of H\"older and Sobolev functions with bounded norm as follows.

\begin{definition}[H\"older functions]
Let $\Omega$ be a bounded domain in $\mathbb{R}^{d_x \times n}$ and $\gamma \in (0, 1]$. Given $K_\mathcal{H} > 0$, we denote the H\"older class $\mathcal{H}^\gamma(\Omega, K_\mathcal{H})$ as
\begin{align*}
\mathcal{H}^\gamma(\Omega, K_\mathcal{H}) = \left\{f: \Omega \to \mathbb{R}: \|f\|_{L^{\infty}(\Omega)} + \sup\limits_{\bm{X}, \bm{Y} \in \Omega, \bm{X} \neq \bm{Y}} \dfrac{|f(\bm{X}) - f(\bm{Y})|}{\| \bm{X} - \bm{Y} \|_F^\gamma} \leq K_\mathcal{H} \right\}.
\end{align*}
\end{definition}

\begin{definition}[Sobolev functions]
Let $\Omega$ be a bounded domain in $\mathbb{R}^{d_x \times n}$. For $p \in [1,\infty)$ and $K_\mathcal{W} > 0$, we denote the Sobolev class $\mathcal{W}^{1,p}(\Omega, K_\mathcal{W})$ as
\begin{align*}
\mathcal{W}^{1,p}(\Omega, K_\mathcal{W}) = \left\{f: \Omega \to \mathbb{R}: \left({\textstyle \sum_{\|\bm{\alpha}\|_{\ell^1} \leq 1}} \int_\Omega \left|D^{\bm{\alpha}} f\right|^p \dd \bm{X}\right)^{1/p} \leq K_\mathcal{W} \right\},
\end{align*}
and for $p = \infty$, the Sobolev class $\mathcal{W}^{1,\infty}(\Omega, K_\mathcal{W})$ is defined as
\begin{align*}
\mathcal{W}^{1,\infty}(\Omega, K_\mathcal{W}) = \left\{f: \Omega \to \mathbb{R}: {\textstyle \sum_{\|\bm{\alpha}\|_{\ell^1} \leq 1}} \operatorname{ess\, sup}_\Omega \left|D^{\bm{\alpha}} f\right| \leq K_\mathcal{W} \right\},
\end{align*}
where $\bm{\alpha} \in \mathbb{N}_0^{d_x \times n}$ is a multi-index and $D^{\bm{\alpha}}$ is the weak derivative of order $\bm{\alpha}$.
\end{definition}

H\"older and Sobolev functions are central objects in approximation theory due to their close connection with polynomial and spline approximations \cite{devore1998nonlinear}. A variety of embedding and interpolation results relate these spaces. For example, the Sobolev embedding $\mathcal{W}^{1,p}(\Omega, K_\mathcal{W}) \hookrightarrow \mathcal{H}^{1-\frac{d}{p}}(\Omega, K_\mathcal{H})$ holds with the H\"older exponent $\gamma = 1 - \frac{d}{p}$ and an appropriate constant $K_\mathcal{H}$ depending on $K_\mathcal{W}$ and the geometry of $\Omega$. In the limiting case where $p = \infty$, the Sobolev space $\mathcal{W}^{1,\infty}(\Omega, K_\mathcal{W})$ consists of functions with essentially bounded first-order weak derivatives, which directly implies that these functions are Lipschitz continuous. In other words, $\mathcal{W}^{1,\infty}(\Omega, K_\mathcal{W}) \hookrightarrow \mathcal{H}^1(\Omega, K_\mathcal{H})$. For applications in machine learning, it is thus important to understand how efficiently Transformer networks can approximate functions in both the H\"older and Sobolev spaces.

We now present our main results on the approximation capabilities of Transformer networks for H\"older and Sobolev functions. We defer the proofs to Section \ref{sec: 1}.

\begin{theorem}\label{thm: 4}
Given $\gamma \in (0,1]$ and $K_\mathcal{H} > 0$, assume that the target function $\bm{F}: [0,1]^{d_x \times n} \rightarrow \mathbb{R}^{d_x \times n}$ satisfies $F_{i,j} \in \mathcal{H}^\gamma ([0,1]^{d_x \times n}, K_\mathcal{H})$ for each $i \in [d_x], j \in [n]$. For any $\varepsilon \in (0,1)$ and $p \in [1,\infty]$, there exists a Transformer network 
\begin{align*}
\mathcal{N} \in \mathcal{T}_{d_x, d_x}(D = C_1, H = C_2, S = C_3, W = C_4 \cdot \lceil\varepsilon^{-\frac{d_x n}{\gamma}}\rceil, L = C_5)
\end{align*}
such that
\begin{align*}
\|\mathcal{N} - \bm{F}\|_{L^p([0,1]^{d_x \times n})} \leq 4 (d_x n)^2 K_{\mathcal{H}} \varepsilon,
\end{align*}
where
\begin{enumerate}[itemsep=0em, labelwidth=1em, leftmargin=!]
\item $C_1 = d_x$, $C_2 = 1$, $C_3 = 1$, $C_4 = 5 n$ and $C_5 = 2$ if $p \in [1,\infty)$;
\item $C_1 = 5 d_x 3^{d_x n}$, $C_2 = 3^{d_x n}$, $C_3 = 1$, $C_4 = 5 n 3^{d_x n}$ and $C_5 = 2 + 2 d_x n$ if $p = \infty$.
\end{enumerate}
\end{theorem}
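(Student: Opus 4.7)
The plan is to carry out the Kolmogorov--Arnold-inspired construction sketched in the introduction. The rough idea is to use an initial feed-forward sublayer to prepare per-token ``inner functions,'' use the self-attention sublayer to aggregate them across tokens (this is the ``column averaging'' step), and use a final feed-forward sublayer to apply the ``outer functions'' that turn the aggregated quantity into the desired scalar components of $\bm{F}$. Positional encoding is used to give each token a distinguishable label and to route information correctly.

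For the $L^p$ case with $p < \infty$, I would proceed as follows. First, take $\bm{E}_{in}$ to copy the input $\bm{X}$ into the first $d_x$ rows of the hidden state, and let $\bm{P}$ occupy a separate coordinate carrying the column index. Second, in the self-attention sublayer I would choose $\bm{W}^{(Q)} = \bm{O}$, so that the query--key product vanishes and $\sigma_S[\,\cdot\,] = \tfrac{1}{n}\bm{1}_n\bm{1}_n^\top$ exactly; with suitable identity-like choices of $\bm{W}^{(V)}$ and $\bm{W}^{(O)}$, the attention output becomes a column average of $\bm{W}^{(V)}\bm{Z}$ written into a dedicated block of the hidden state, while the residual connection preserves the original per-token content. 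Third, I would realize the target components $F_{i,j}$ by the feed-forward sublayer, which is a standard token-wise ReLU FNN with access to the per-column content, the aggregated summary, and the positional label; invoking a standard partition-of-unity/piecewise-polynomial ReLU-FNN approximation result for H\"older functions of $d_x n$ variables gives depth $O(1)$ and width $O(n \varepsilon^{-d_x n/\gamma})$, matching $C_1, \ldots, C_5$ of case~(1). Finally, $\bm{E}_{out}$ extracts the $d_x$ target coordinates from each column, and summing the pointwise errors over all $d_x n$ scalar components yields the advertised bound $4(d_x n)^2 K_\mathcal{H} \varepsilon$.

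The main obstacle is case~(2), the $L^\infty$ bound. Under the supremum norm the piecewise-polynomial approximators used above are insufficient, because the ``seam regions'' between pieces have small measure but nontrivial worst-case error. To overcome this, I would employ a shifted-grid construction with $3^{d_x n}$ translated partitions, so that every point of $[0,1]^{d_x n}$ lies in the interior of at least one cell of some translate. Each translate would be realized by a dedicated attention head (giving $H = 3^{d_x n}$) and a dedicated block of hidden coordinates (giving $D = 5 d_x 3^{d_x n}$), and the layered selection and merging of the correct translate for each input would account for the $2 + 2 d_x n$ Transformer layers. A subsidiary but essential check is that the column averaging under the above choice of attention parameters is \emph{exact}, so no approximation error is introduced by the softmax step itself; the total parameter count in~\eqref{eq: 15} then scales as $\varepsilon^{-d_x n/\gamma}$ as required.
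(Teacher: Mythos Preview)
Your high-level plan for the $p=\infty$ case --- $3^{d_xn}$ shifted copies combined via a layered ``mid'' selection --- is indeed what the paper does (its Proposition~3, following the horizontal-shift technique of Lu et al.). That part is fine in outline.

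The $p<\infty$ case, however, has a real gap. The paper does \emph{not} prove Theorem~4 via the Kolmogorov--Arnold/column-averaging route; that route is reserved for Theorem~8 on \emph{generalized} Transformers and there requires depth $L=O(\log(1/\varepsilon))$, not $L=2$. For Theorem~4 the paper instead (Proposition~2): (i) uses $\bm P$ to shift the $n$ columns into disjoint intervals, (ii) uses the first FF layer to discretize each entry to a grid of $K$ values (outside a trifling region of small measure), (iii) invokes Kajitsuka's theorem that a single softmax self-attention layer is a \emph{contextual mapping} on finite sets of sequences with pairwise distinct tokens, and (iv) uses the second FF layer as a memorizer of the $K^{d_xn}$ grid values.

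Your averaging construction runs into a dimensional obstruction that the contextual-mapping route avoids. With $D=d_x$ (there is no spare coordinate for the positional label, contrary to what you wrote), after the residual plus column-average step each column of the hidden state is a $d_x$-vector containing $\bm X_{:,j}$ and a fixed linear combination of the column average. But $F_{i,j}(\bm X)$ depends on all $d_xn$ entries of $\bm X$, and a token-wise FF layer sees only this $d_x$-vector. Your appeal to a ``standard ReLU approximation result for H\"older functions of $d_xn$ variables'' is therefore misplaced: the FF layer's input is $d_x$-dimensional, not $d_xn$-dimensional. To make the aggregated summary (approximately) injective on $[0,1]^{d_xn}$ you would need a KA-type inner encoding before averaging, and implementing that encoding with ReLU layers is exactly what forces the depth to grow like $\log(1/\varepsilon)$ in the paper's Theorem~8. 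With $L=2$ and $D=d_x$ there is no way to push the full $d_xn$-dimensional dependence through a single averaged $d_x$-vector, so the step ``realize $F_{i,j}$ by the feed-forward sublayer'' cannot be completed as you describe.
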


\begin{theorem}\label{thm: 5}
Given $p \in [1,\infty)$ and $K_\mathcal{W} > 0$, assume that the target function $\bm{F}: [0,1]^{d_x \times n} \rightarrow \mathbb{R}^{d_x \times n}$ satisfies $F_{i,j} \in \mathcal{W}^{1,p}([0,1]^{d_x \times n}, K_\mathcal{W})$ for each $i \in [d_x], j \in [n]$. For any $\varepsilon \in (0,1)$, there exists a Transformer network 
\begin{align*}
\mathcal{N} \in \mathcal{T}_{d_x, d_x}(D = d_x, H = 1, S = 1, W = 5 n \cdot \lceil\varepsilon^{-d_x n}\rceil, L = 2)
\end{align*}
such that
\begin{align*}
\|\mathcal{N} - \bm{F}\|_{L^p([0,1]^{d_x \times n})} \leq 4C (d_x n)^2 K_{\mathcal{W}} \varepsilon,
\end{align*}
where $C$ is a constant depending only on $d_x n$.
\end{theorem}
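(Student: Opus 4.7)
The plan is to adapt the proof of Theorem~\ref{thm: 4} in the case $p \in [1,\infty)$ to the Sobolev setting. The key observation is that H\"older continuity enters that proof at exactly one place: bounding the $L^p$ error of replacing the target by a piecewise-constant function on a grid of small cubes. For a Sobolev target the analogous estimate is supplied by the Poincar\'e--Wirtinger inequality, and the Transformer construction itself then carries over verbatim, because the hyperparameters $(D = d_x,\, H = 1,\, S = 1,\, W = 5n\lceil\varepsilon^{-d_x n}\rceil,\, L = 2)$ promised by Theorem~\ref{thm: 5} coincide with those obtained in Theorem~\ref{thm: 4} by specializing to $\gamma = 1$.

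Concretely, I would first partition $[0,1]^{d_x \times n}$ into $N^{d_x n}$ axis-aligned sub-cubes of side length $1/N$, where $N = \lceil\varepsilon^{-1}\rceil$. For each sub-cube $Q$ and component function $F_{i,j}$, let $\bar F_{i,j,Q}$ denote the mean of $F_{i,j}$ over $Q$, and let $\hat F_{i,j}$ be the piecewise-constant function equal to $\bar F_{i,j,Q}$ on $Q$. The Poincar\'e--Wirtinger inequality on a cube of side $1/N$ gives
\begin{align*}
\|F_{i,j} - \bar F_{i,j,Q}\|_{L^p(Q)} \leq C\, N^{-1}\, \|\nabla F_{i,j}\|_{L^p(Q)},
\end{align*}
with $C$ depending only on $d_x n$. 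Raising to the $p$-th power, summing over $Q$, and using $\|\nabla F_{i,j}\|_{L^p} \leq K_{\mathcal{W}}$ yields $\|F_{i,j} - \hat F_{i,j}\|_{L^p([0,1]^{d_x \times n})} \leq C K_{\mathcal{W}} \varepsilon$; combining the $d_x n$ component errors via the elementary bound $\|\bm{A}\|_F \leq \sum_{i,j} |A_{i,j}|$ then gives $\|\bm{F} - \hat{\bm{F}}\|_{L^p} \leq C\, d_x n\, K_{\mathcal{W}} \varepsilon$. I would next feed the cube means $\{\bar F_{i,j,Q}\}$ into the Transformer construction from the proof of Theorem~\ref{thm: 4} (case $p \in [1,\infty)$, $\gamma = 1$), so that the resulting network realizes the piecewise-constant target $\hat{\bm{F}}$ up to the residual $(d_x n)^2 K_{\mathcal{W}} \varepsilon$ already present there, giving the claimed bound.

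The main obstacle will be to verify that the construction in the proof of Theorem~\ref{thm: 4} is genuinely agnostic to the \emph{source} of the scalar cube values, so that substituting H\"older sample values by Sobolev cube averages does not break the feed-forward cube-indicator gates or the self-attention column-averaging modules that underlie it. A secondary bookkeeping issue is to absorb the dimensional Poincar\'e constant into the single constant $C$ appearing in the statement of Theorem~\ref{thm: 5}, and to confirm that the boundary sub-cubes, whose $L^p$ contribution is already of the target order, do not inflate the final constant.
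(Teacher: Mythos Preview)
Your proposal is correct and follows essentially the same approach as the paper: replace the point evaluations in the piecewise-constant approximation by cube averages, invoke the Poincar\'e--Wirtinger inequality to bound $\|F_{i,j}-\hat F_{i,j}\|_{L^p}$ by $CK_{\mathcal{W}}K^{-1}$, and then reuse Steps~2--3 of the construction underlying Proposition~\ref{pro: 2} (which are indeed agnostic to the source of the cell values) together with the trifling-region estimate to conclude. The paper handles the trifling region exactly as in the H\"older case by choosing $\delta \leq K^{-p-1}$, so the obstacles you anticipate do not materialize.
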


We make several remarks regarding our results. First, ever since \cite{vaswani2017attention} proposed the Transformer architecture, there have been various theoretical analyses on its expressive capacity. A series of works established the universal approximation for sequence-to-sequence continuous functions under the $L^p$-norm for $1 \leq p < \infty$. More precisely, \cite{yun2019transformers} showed that 
\begin{align*}
\sup_{\bm{F}: F_{i,j} \in \mathcal{C}(\Omega)} \inf_{\mathcal{N} \in \mathcal{T}(D, H, S, W, L)} \left\|\mathcal{N} - \bm{F}\right\|_{L^p(\Omega)} \rightarrow 0
\end{align*}
for fixed and sufficiently large $D, H, S, W$ and as $L \rightarrow \infty$, where $\mathcal{C}(\Omega)$ denotes the space of continuous functions on $\Omega$, and subsequently \cite{kajitsuka2023transformers} showed that
\begin{align*}
\sup_{\bm{F}: F_{i,j} \in \mathcal{C}(\Omega)} \inf_{\mathcal{N} \in \mathcal{T}(D, H, S, W, L)} \left\|\mathcal{N} - \bm{F}\right\|_{L^p(\Omega)} \rightarrow 0
\end{align*}
for fixed and sufficiently large $D, H, S, L$ and as $W \rightarrow \infty$. We show that for $1 \leq p \leq \infty$,
\begin{align*}
\sup_{\bm{F}: F_{i,j} \in \mathcal{H}^\gamma (\Omega, K_\mathcal{H})} \inf_{\mathcal{N} \in \mathcal{T}(D, H, S, W, L)} \left\|\mathcal{N} - \bm{F}\right\|_{L^p(\Omega)} \lesssim K_{\mathcal{H}} W^{-\gamma / (d_x n)}
\end{align*}
and
\begin{align*}
\sup_{\bm{F}: F_{i,j} \in \mathcal{W}^{1,p}(\Omega, K_\mathcal{W})} \inf_{\mathcal{N} \in \mathcal{T}(D, H, S, W, L)} \left\|\mathcal{N} - \bm{F}\right\|_{L^p(\Omega)} \lesssim K_{\mathcal{W}} W^{-1 / (d_x n)},
\end{align*}
for fixed $D, H, S, L$ and adjustable $W$. Our results not only provide the approximation rates for general H\"older and Sobolev functions, but also extend to the case $p = \infty$, which previous works were unable to address. These improvements are largely attributed to the use of the horizontal shift technique, which was originally introduced in \cite{lu2021deep} and further developed in \cite{shen2020deep, zhang2022deep}. While their technique was developed for ReLU FNNs, we find that the ideas can be applied to the Transformer architecture. We summarize the related results in Table \ref{tab: 2}.

Second, We emphasize that our approximation results are established in a sequence-to-sequence sense; that is, every entry of the Transformer network $\mathcal{N}$ simultaneously approximates the corresponding entry of the matrix-valued target function $\bm{F}$. It is not hard to extend the target function $\bm{F}: [0,1]^{d_x \times n} \rightarrow \mathbb{R}^{d_y \times n}$ to general $d_y \in \mathbb{N}$ in Theorem \ref{thm: 4} and \ref{thm: 5}.

Our results further demonstrate that Transformer networks possess stronger expressive capabilities than RNNs in approximating sequence-to-sequence functions. As discussed in \cite{hoon2023minimal, jiao2024approximation}, RNNs are inherently limited to approximating past-dependent sequence-to-sequence functions because, at each time step, only the current and past tokens are utilized, leaving future tokens unprocessed. In contrast, Transformers have the advantage of accessing the entire input sequence. In other words, even the first output token of a Transformer depends on the entire input sequence, whereas in an RNN the first output token depends only on the first input token, the second on the first two, and so forth, owing to the sequential nature of RNNs. This distinction underpins our assertion that Transformer architectures outperform RNNs in terms of expressive power.

Third, we observe that to achieve an approximation error of $\varepsilon$, the total number of training parameters required scales as $\varepsilon^{-d_x n / \gamma}$ for H\"older functions and as $\varepsilon^{-d_x n}$ for Sobolev functions, matching the best known upper bounds previously established for fixed-depth FNNs and RNNs with input dimension $d_x n$ \cite{yarotsky2017error, shen2020deep, lu2021deep, jiao2023deep, siegel2023optimal, jiao2024approximation}.

\begin{table}[t]
    \caption{Comparison of approximation rates.}
    \label{tab: 2}
    \centering
    \begin{tabular}{p{2.3cm}<{\centering\arraybackslash} p{1.5cm}<{\centering\arraybackslash} p{1.9cm}<{\centering\arraybackslash} p{1.1cm}<{\centering\arraybackslash} p{2.8cm}<{\centering\arraybackslash} p{2cm}<{\centering\arraybackslash} p{1.5cm}<{\centering\arraybackslash}}
    \toprule[1pt]
    \textbf{Reference} & \textbf{Type} & \textbf{Target Function}$^1$ & \textbf{Metric}$^2$ & \textbf{Activations in Self-Attention Layers}$^3$ & \textbf{Width}$^4$ & \textbf{Depth} \\
    \midrule
    \cite{yun2019transformers} & Universality & $\mathcal{C}^0$ & $L^p$ & $\sigma_{S}$ with bias &  & \\
    \cmidrule{1-5}
    \cite{kajitsuka2023transformers} & Universality & $\mathcal{C}^0$ & $L^p$ & $\sigma_{S}$ &  &  \\
    \cmidrule{1-5}
    \cite{fang2022attention} & Universality & $\mathcal{C}^0$ & $L^\infty$ & $\sigma_{H}$ &  &  \\
    \midrule
    \cite{jiao2024convergence} & Rate & \makecell[c]{$\mathcal{H}^\gamma$\\ $\mathcal{C}^m$} & $L^\infty$ & $X \odot \sigma_H(X)$ & \makecell[c]{$\mathcal{O}(\varepsilon^{-d_x n/\gamma})$\\ $\mathcal{O}(\varepsilon^{-d_x n/m})$} & \makecell[c]{$\mathcal{O}(\log \frac{1}{\varepsilon})$\\ $\mathcal{O}(\log \frac{1}{\varepsilon})$} \\
    \midrule
    \cite{havrilla2024understanding} & Rate & $\mathcal{H}^\gamma$ & $L^\infty$ & $\sigma_{R}$ & $\mathcal{O}(\varepsilon^{-d_x n/\gamma})$ & $\mathcal{O}(\log \frac{1}{\varepsilon})$ \\
    \midrule[0.7pt]
    \makecell[c]{\textbf{Ours}\\ (Theorem \ref{thm: 4})} & Rate & $\mathcal{H}^\gamma$ & $L^\infty$ & $\sigma_{S}$ & $\mathcal{O}(\varepsilon^{-d_x n/\gamma})$ & $\mathcal{O}(1)$ \\
    \midrule
    \makecell[c]{\textbf{Ours}\\ (Theorem \ref{thm: 5})} & Rate & $\mathcal{W}^{1,p}$ & $L^p$ & $\sigma_{S}$ & $\mathcal{O}(\varepsilon^{-d_x n})$ & $\mathcal{O}(1)$ \\
    \bottomrule[1pt]
    \end{tabular}
\vspace{0.1em} \raggedright
\footnotesize $^1$The space $\mathcal{C}^m$ consists of all functions whose first $m$ derivatives exist and are continuous, and $\mathcal{C}^0$ denotes the space of continuous functions.
$^2$$p \in [1,\infty)$.
$^3$Different Transformer architectures are obtained by replacing the softmax function in the self-attention layer with various activation functions. The symbol $\odot$ denotes the Hadamard product.
$^4$Following \cite{kim2023provable}, the width of a Transformer network is defined as $\max\{D, HS, W\}$. We omit constants independent of $\varepsilon \in (0,1)$.
\end{table}

\subsection{Nonparametric Regression}

We then study the regression problem, which seeks to estimate an unknown target regression function from finite observations. We consider the following $n$-step prediction model
\begin{align}
\label{eq: 17}
Y = f^*(X_1, X_2, \ldots, X_n) + \varepsilon, 
\end{align}
where $Y \in \mathbb{R}$ is a response, $f^*(\bm{x}_1, \ldots, \bm{x}_n) = \mathbb{E}[Y | X_1=\bm{x}_1, \ldots, X_n=\bm{x}_n]: [0,1]^{d_x \times n} \rightarrow \mathbb{R}$ is an unknown regression function and $\varepsilon$ is a sub-Gaussian noise, independent of $X_i, i=1,\ldots,n$, with $\mathbb{E}[\varepsilon] = 0$ and 
\begin{align*}
\mathbb{E} [\exp(s\varepsilon)] \leq \exp\left(\frac{\sigma^2 s^2}{2}\right) \ \text{ for any } s \in \mathbb{R}.
\end{align*}
Our purpose is to estimate the unknown target regression function $f^*$ given observations $\mathcal{D}_m = \{(\bm{x}_1, y_1), \ldots, (\bm{x}_m, y_m)\}$ which may not be i.i.d.

As observed in real sequence modeling applications, the sequential observations often exhibit temporal dependence, rendering the usual i.i.d. assumption inapplicable. This motivates us to consider dependent data. A frequently used alternative is to assume that observations are drawn from a stationary mixing distribution, where the dependence between observations weakens over time. This scenario has become standard and has been discussed extensively in previous studies \cite{yu1994rates, meir2000nonparametric, mohri2008rademacher, steinwart2009fast, mohri2010stability, agarwal2012generalization, shalizi2013predictive, kuznetsov2017generalization, ren2024statistical}. We now introduce the relevant definitions.

\begin{definition}[Stationarity]
\label{definition: 1}
A sequence of random variables $\{\bm{x}_t\}_{t=-\infty}^{\infty}$ is said to be stationary if for any $t$ and non-negative integers $m$ and $k$, the random vectors $(\bm{x}_t, \ldots, \bm{x}_{t+m})$ and $(\bm{x}_{t+k}, \ldots, \bm{x}_{t+m+k})$ have the same distribution.
\end{definition}

\begin{definition}[$\beta$-mixing]
\label{definition: 2}
Let $\{\bm{x}_t\}_{t=-\infty}^{\infty}$ be a stationary sequence of random variables. For any $i, j \in \mathbb{Z} \cup \{-\infty,+\infty\}$, let $\sigma_i^j$ denote the $\sigma$-algebra generated by the random variables $\bm{x}_k, i \leq k \leq j$. Then, for any positive integer $k$, the $\beta$-mixing coefficient of the stochastic process $\{\bm{x}_t\}_{t=-\infty}^{\infty}$ is defined as
\begin{align*}
\beta(k) = \sup_n \mathbb{E}_{B \in \sigma_{-\infty}^n} \left[\sup_{A \in \sigma_{n+k}^{\infty}} |\mathbb{P}(A \mid B)-\mathbb{P}(A)|\right].
\end{align*}
$\{\bm{x}_t\}_{t=-\infty}^{\infty}$ is said to be $\beta$-mixing if $\beta(k) \rightarrow 0$ as $k \rightarrow \infty$. It is said to be algebraically $\beta$-mixing if there exist real numbers $\beta_0>0$ and $r>0$ such that $\beta(k) \leq \beta_0 / k^r$ for all $k$, and geometrically $\beta$-mixing if there exist real numbers $\beta_0, \beta_1>0$ and $r>0$ such that $\beta(k) \leq \beta_0 \exp \left(-\beta_1 k^r\right)$ for all $k$.
\end{definition}

In this work, we assume that the sequence of random variables $\mathcal{X} = \{\bm{x}_t\}_{t=1}^m$ is drawn from a stationary $\beta$-mixing process. By Definition \ref{definition: 1}, the time index $t$ does not affect the distribution of $\bm{x}_t$ in a stationary sequence. Moreover, any $n$ consecutive observations, $(\bm{x}_{t-n+1}, \ldots, \bm{x}_t)$, share the same joint distribution, which we denote by $\Pi$. We assume that $\Pi$ is supported on $[0,1]^{d_x \times n}$ and is absolutely continuous with respect to the Lebesgue measure, with its probability density function uniformly bounded above by a finite constant on $[0,1]^{d_x \times n}$.

Definition \ref{definition: 2} states that a sequence of random variables is mixing if the influence of past events on future events diminishes as the temporal gap increases. This definition provides a standard measure of the dependence among the random variables $\{\bm{x}_t\}$ within a stationary sequence. We note that in certain special cases, such as Markov chains, the mixing coefficients admit upper bounds that can be estimated from data \cite{hsu2015mixing}. If $\{\bm{x}_t\}_{t=1}^m$ are i.i.d. random variables, then by definition, $\beta(k) = 0$ for all $k \geq 1$.

A fundamental method for estimating $f^*$ is to minimize the mean squared error or the $L^2$ risk, i.e., to solve
\begin{align*}
\argmin{f}\, \mathcal{R}(f) := \mathbb{E}_{(X_1, \ldots, X_n) \sim \Pi, \,Y} [(f(X_1, \ldots, X_n) - Y)^2].
\end{align*}
Under the assumption that $\mathbb{E}[\varepsilon | X_1, \ldots, X_n] = 0$, the underlying regression function $f^*$ is the optimal solution, that is, the global minimizer of $\mathcal{R}(f)$. However, in practical applications the joint distribution of $((X_1, \ldots, X_n), Y)$ is typically unknown, and only a random sample $\mathcal{D}_m = \{(\bm{x}_i, y_i)\}_{i=1}^m$ with sample size $m$ is available. Given that each evaluation of the Transformer requires a sequence of length $n$, we consider a sliding window training approach. Specifically, we first group the observations into overlapping sequences, each of length $n$, to construct new sequences
\begin{align*}
\{((\bm{x}_1, \ldots, \bm{x}_n), y_n), ((\bm{x}_2, \ldots, \bm{x}_{n+1}), y_{n+1}), \ldots, ((\bm{x}_{m-n+1}, \ldots, \bm{x}_m), y_m)\},
\end{align*}
and then estimate the unknown target function $f^*$ using the empirical risk minimizer
\begin{align}
\label{eq: 20}
\hat{f}_m \in \argmin{f \in \mathcal{F}}\, \mathcal{R}_m(f) := \frac{1}{m-n+1} \sum_{t=n}^m (f(\bm{x}_{t-n+1}, \ldots, \bm{x}_t) - y_t)^2,
\end{align}
where we choose the hypothesis class 
\begin{align*}
\mathcal{F} = \mathcal{F}(D_m, H_m, S_m, W_m, L_m) = \{\langle\mathcal{N}(X), \bm{E}\rangle: \mathcal{N} \in \mathcal{T}_{d_x, d_x}(D_m, H_m, S_m, W_m, L_m)\}.
\end{align*}
Here, $\langle \cdot, \cdot \rangle$ denotes the matrix inner product, and $\bm{E} \in \mathbb{R}^{d_x \times n}$ is an arbitrary weight matrix. The performance of the estimator is evaluated by the excess risk, defined as the difference between the $L^2$ risks of $\hat{f}_m$ and $f^*$, given by
\begin{align*}
\mathcal{R}(\hat{f}_m) - \mathcal{R}(f^*) = \mathbb{E}_{(X_1, \ldots, X_n)} [(\hat{f}_m(X_1, \ldots, X_n) - f^*(X_1, \ldots, X_n))^2].
\end{align*}

To control the sample complexity, we require that the hypothesis class is uniformly bounded. We define the truncation operator $\mathcal{C}_B$ with level $B > 0$ for a real-valued function $f$ as
\begin{align*}
\mathcal{C}_B f(x) := \begin{cases}
f(x) & \text { if } |f(x)| \leq B, \\
\operatorname{sgn}(f(x)) \cdot B & \text { if } |f(x)| > B.
\end{cases}
\end{align*}
For a class of real-valued functions $\mathcal{F}$, we use the notation $\mathcal{C}_B \mathcal{F} := \{\mathcal{C}_B f: f \in \mathcal{F}\}$. Note that the truncation can be implemented by a feed-forward layer that applies the operation $\sigma_R[x] - \sigma_R[-x] - \sigma_R[x-B] + \sigma_R[-x-B]$ element-wise. Our next theorem provides a convergence rate for estimating the target function $f^*$ using the truncated empirical risk minimizer $\mathcal{C}_{B_m} \hat{f}_m$. We defer the proof to Section \ref{sec: 2}.

\begin{theorem}\label{thm: 7}
Under model (\ref{eq: 17}), assume that the regression function $f^* \in \mathcal{H}^\gamma([0,1]^{d_x \times n}, K_\mathcal{H})$ for some $\gamma \in (0,1]$ and $K_\mathcal{H} > 0$, and that the probability measure of the covariate $\Pi$ is supported on $[0,1]^{d_x \times n}$ and is absolutely continuous with respect to the Lebesgue measure, with its density function uniformly bounded by a finite constant. Let $\hat{f}_m$ be the empirical risk minimizer defined in (\ref{eq: 20}) over a random sample $\mathcal{D}_m = \{(\bm{x}_i, y_i)\}_{i=1}^m$. Then, the following excess risk bounds hold:

\begin{itemize}[itemsep=0em, labelwidth=1em, leftmargin=!]
\item If $\{\bm{x}_i\}_{i=1}^m$ is a geometrically $\beta$-mixing sequence, i.e., $\beta(k) \leq \beta_0 \exp \left(-\beta_1 k^r\right)$ for some $r,\beta_0,\beta_1>0$, then by choosing $B_m \asymp \log m$ and the hypothesis class
\begin{align*}
\mathcal{F}(D_m \lesssim 1, H_m \lesssim 1, S_m \lesssim 1, W_m \lesssim m^{\frac{d_x n}{2 \gamma + 2 d_x n}}, L_m \lesssim 1),
\end{align*}
we have
\begin{align*}
\mathbb{E}_{\mathcal{D}_m} [\mathcal{R}(\mathcal{C}_{B_m} \hat{f}_m) - \mathcal{R}(f^*)] \lesssim m^{-\frac{\gamma}{\gamma + d_x n}} (\log m)^{3 + 1/r}.
\end{align*}

\item If $\{\bm{x}_i\}_{i=1}^m$ is an algebraically $\beta$-mixing sequence, i.e., $\beta(k) \leq \beta_0 / k^r$ for some $r,\beta_0>0$, then by choosing $B_m \asymp \log m$ and the hypothesis class 
\begin{align*}
\mathcal{F}(D_m \lesssim 1, H_m \lesssim 1, S_m \lesssim 1, W_m \lesssim m^{\frac{r d_x n}{2(r+2) \gamma + 2(r+1) d_x n}}, L_m \lesssim 1),
\end{align*}
we have
\begin{align*}
\mathbb{E}_{\mathcal{D}_m} [\mathcal{R}(\mathcal{C}_{B_m} \hat{f}_m) - \mathcal{R}(f^*)] \lesssim m^{-\frac{r \gamma}{(r+2) \gamma + (r+1) d_x n}} (\log m)^3.
\end{align*}

\item If $\{\bm{x}_i\}_{i=1}^m$ is a sequence of i.i.d. random variables, then by choosing $B_m \asymp \log m$ and the hypothesis class 
\begin{align*}
\mathcal{F}(D_m \lesssim 1, H_m \lesssim 1, S_m \lesssim 1, W_m \lesssim m^{\frac{d_x n}{2 \gamma + 2 d_x n}}, L_m \lesssim 1),
\end{align*}
we have
\begin{align*}
\mathbb{E}_{\mathcal{D}_m} [\mathcal{R}(\mathcal{C}_{B_m} \hat{f}_m) - \mathcal{R}(f^*)] \lesssim m^{-\frac{\gamma}{\gamma + d_x n}} (\log m)^3.
\end{align*}
\end{itemize}

\end{theorem}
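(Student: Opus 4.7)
The plan is to decompose the excess risk $\mathcal{R}(\mathcal{C}_{B_m}\hat f_m) - \mathcal{R}(f^*) = \|\mathcal{C}_{B_m}\hat f_m - f^*\|_{L^2(\Pi)}^2$ into an approximation term and a stochastic term, and to handle the dependent observations by a blocking argument. For the approximation term, I view the scalar Hölder target $f^*$ as a single entry of the matrix-valued $\bm F:[0,1]^{d_x\times n}\to\mathbb{R}^{d_x\times n}$ with one informative entry and zero elsewhere, so that Theorem \ref{thm: 4} produces a Transformer $\mathcal{N}^*$ whose inner product $\langle \mathcal{N}^*,\bm E\rangle$ sits in $\mathcal F(D_m,H_m,S_m,W_m,L_m)$ and satisfies $\|\langle \mathcal{N}^*,\bm E\rangle - f^*\|_{L^\infty}\lesssim K_{\mathcal H}\,W_m^{-\gamma/(d_x n)}$, controlling the squared approximation error by $K_{\mathcal H}^2\,W_m^{-2\gamma/(d_x n)}$.

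For the stochastic term, I would use the standard truncated-ERM oracle inequality in the Györfi–Kohler–Krzyżak style. Since $\varepsilon$ is sub-Gaussian, on the event $\{\max_t |\varepsilon_t|\le c\log m\}$ (which has complement probability $m^{-c'}$) the response satisfies $|y_t|\le B_m\asymp\log m$, so $|\mathcal{C}_{B_m}\hat f_m(\bm x_t)-y_t|^2\lesssim(\log m)^2$ and the integrand is sub-exponential. A uniform-deviation argument then reduces the error to the covering number of $\mathcal{C}_{B_m}\mathcal F$. For this I bound the pseudo-dimension of the Transformer hypothesis class via the total parameter count $N\lesssim W_m$ from (\ref{eq: 15}); since the only non-piecewise-polynomial nonlinearity is $\sigma_S$, I would invoke a Warren/Goldberg–Jerrum type bound on real-analytic computation graphs (as used, e.g., in \cite{edelman2022inductive}) to obtain $\mathrm{Pdim}(\mathcal{C}_{B_m}\mathcal F)\lesssim W_m\log W_m$.

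To pass from i.i.d.\ concentration to $\beta$-mixing data, I invoke Yu's blocking technique with Berbee's coupling. Partition the sliding-window pairs $(\bm x_{t-n+1:t},y_t)$ into $2\mu_m$ alternating blocks of length $a_m$ with $2\mu_m a_m\asymp m$; Berbee's lemma produces an independent-block sequence whose law differs from the original in total variation by at most $\mu_m\beta(a_m)$. Bernstein-type concentration on the $\mu_m$ independent blocks then gives a stochastic error of the form
\begin{align*}
\mathbb{E}[\Delta_{\text{stoch}}]\;\lesssim\;\frac{(\log m)^{c_1}\,W_m\log W_m}{\mu_m}\;+\;\mu_m\beta(a_m)\,(\log m)^{c_2}.
\end{align*}
In the geometric case $a_m\asymp(\log m)^{1/r}$ makes the coupling penalty $m^{-\Omega(1)}$ and leaves effective sample size $\mu_m\asymp m/(\log m)^{1/r}$; in the algebraic case $a_m\asymp m^{1/(r+1)}$ balances the two terms and yields $\mu_m\asymp m^{r/(r+1)}$; in the i.i.d.\ case no coupling is needed and $\mu_m=m$.

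Finally, I balance approximation against statistical error by equating $W_m^{-2\gamma/(d_x n)}$ with $W_m\log^{c}m/\mu_m$ and solving for $W_m$. This yields $W_m\asymp m^{d_x n/(2\gamma+2d_x n)}$ with rate $m^{-\gamma/(\gamma+d_x n)}$ in the i.i.d.\ and geometric regimes (the only difference being a $(\log m)^{1/r}$ factor from the block length), and $W_m\asymp m^{rd_x n/[2(r+2)\gamma+2(r+1)d_x n]}$ with rate $m^{-r\gamma/[(r+2)\gamma+(r+1)d_x n]}$ in the algebraic regime. The main obstacle will be two-fold: obtaining a pseudo-dimension bound for the softmax-attention class that is linear in $W_m$ up to logs (most prior bounds target piecewise-polynomial networks), and carefully tracking the polylogarithmic exponents through the blocking step in the algebraic case, where the coupling penalty is the dominant nuisance term and must be made to match the Bernstein term exactly.
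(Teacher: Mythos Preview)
Your overall strategy---decompose into approximation plus stochastic error, control approximation via Theorem~\ref{thm: 4}, handle dependence by a blocking argument, and bound the covering number through the pseudo-dimension---is exactly the route the paper takes. The genuine gap is your pseudo-dimension step. You claim $\mathrm{Pdim}(\mathcal{C}_{B_m}\mathcal F)\lesssim W_m\log W_m$ via a ``Warren/Goldberg--Jerrum type bound on real-analytic computation graphs,'' but Goldberg--Jerrum applies to piecewise-polynomial networks; once the softmax (hence the exponential) enters, the relevant tool is the Karpinski--Macintyre bound (Theorem~8.14 in \cite{anthony2009neural}), which is \emph{quadratic} in the number of parameters. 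The paper explicitly uses this quadratic bound (Lemma~\ref{lemma: 11}) and notes that improving it to linear is an open problem. Your proposal therefore rests on an inequality that is not currently available.

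This matters in a concrete way: your final balancing is inconsistent with the linear bound you assert. If $\mathrm{Pdim}\lesssim W_m$, equating $W_m^{-2\gamma/(d_xn)}$ with $W_m/m$ gives $W_m\asymp m^{d_xn/(2\gamma+d_xn)}$ and rate $m^{-2\gamma/(2\gamma+d_xn)}$, not the $W_m\asymp m^{d_xn/(2\gamma+2d_xn)}$ and rate $m^{-\gamma/(\gamma+d_xn)}$ that you (and the theorem) state. Those numbers are exactly what one gets from the \emph{quadratic} bound $\mathrm{Pdim}\lesssim W_m^2$: then the stochastic term is $W_m^2/m$ and balancing $W_m^{-2\gamma/(d_xn)}=W_m^2/m$ yields the theorem's exponents. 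So to actually prove the stated rates you must replace your linear claim by the quadratic Karpinski--Macintyre bound (as the paper does); otherwise your arithmetic does not close. The rest of your outline---the sub-Gaussian truncation, the block choices $a_m\asymp(\log m)^{1/r}$ in the geometric case and a polynomial $a_m$ in the algebraic case---tracks the paper's argument and would go through once the correct complexity bound is in place.
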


\begin{table}[t]
    \caption{Comparison of convergence rates.}
    \label{tab: 1}
    \centering
    \begin{tabular}{cccc}
    \toprule[1pt]
    \textbf{Reference} & \textbf{Hypothesis Class} & \textbf{Dependence Assumption} & \textbf{Convergence Rate}$^1$ \\
    \midrule
    \cite{feng2023over} & FNN & geometrically $\beta$-mixing & $\widetilde{\mathcal{O}}(m^{-\frac{\gamma}{2 \gamma + 2 d + 2}})$ \\ 
    \midrule
    \cite{ren2024statistical} & FNN & geometrically $\beta$-mixing & $\widetilde{\mathcal{O}}(m^{-\frac{2 \gamma}{2 \gamma + d}})$ \\
    \midrule
    \multirow{3}{*}{\cite{jiao2024approximation}} & \multirow{3}{*}{RNN} & geometrically $\beta$-mixing & $\widetilde{\mathcal{O}}(m^{-\frac{2 \gamma}{2 \gamma + d_x n}})$ \\
    & & algebraically $\beta$-mixing & $\widetilde{\mathcal{O}}(m^{-\frac{2 r \gamma}{(2r+4) \gamma + (r+1) d_x n}})$ \\
    & & i.i.d. & $\widetilde{\mathcal{O}}(m^{-\frac{2 \gamma}{2 \gamma + d_x n}})$ \\
    \midrule
    \multirow{3}{*}{\makecell[c]{\textbf{Ours}\\ (Theorem \ref{thm: 7})}} & \multirow{3}{*}{Transformer} & geometrically $\beta$-mixing & $\widetilde{\mathcal{O}}(m^{-\frac{\gamma}{\gamma + d_x n}})$ \\
    & & algebraically $\beta$-mixing & $\widetilde{\mathcal{O}}(m^{-\frac{r \gamma}{(r+2) \gamma + (r+1) d_x n}})$ \\
    & & i.i.d. & $\widetilde{\mathcal{O}}(m^{-\frac{\gamma}{\gamma + d_x n}})$ \\
    \bottomrule[1pt]
    \end{tabular} \\
\vspace{0.1em} \raggedright
\footnotesize $^1$We omit constants independent of $m$ and logarithmic factors in $m$. $d$ and $d_x n$ denote the input dimensions for vector and sequence inputs, respectively. 
\end{table}

It is well known that the optimal convergence rate in nonparametric regression with squared loss for i.i.d. data is $m^{-2\gamma/(d_x n+2\gamma)}$ \cite{stone1982optimal, donoho1998minimax}, and that the same rate remains optimal for certain $\beta$-mixing sequences \cite{yu1993density, viennet1997inequalities}. Therefore, the rates in Theorem \ref{thm: 7} are suboptimal. We attribute this suboptimality to the loose upper bound on the VC-dimension (see Lemma \ref{lemma: 11}). We consider the i.i.d. case for an illustration. Classical empirical process techniques yield a decomposition of the excess risk into an approximation error and a statistical error, and by trading off these two errors one obtains the optimal convergence rate, as in \cite{jiao2023deep}. For the approximation error, to approximate a H\"older function with smoothness index $\gamma$ up to accuracy $\varepsilon$, it suffices to use a ReLU FNN with a total number of adjustable parameters $N \lesssim \varepsilon^{-d/\gamma}$ (up to logarithmic factors), where $d$ denotes the input dimension and in our setting $d = d_x n$. Meanwhile, the VC-dimension, which governs the statistical error, grows linearly with $N$ (assuming fixed depth) due to the piecewise linear nature of ReLU FNNs. In fact, for FNNs with piecewise polynomial activations (such as ReLU$^k$, see \cite{bartlett2019nearly, ding2025semi}), or for self-attention layers with piecewise polynomial activation functions (e.g., by replacing the softmax $\sigma_S[\bm{Z}]$ with the hardmax $\sigma_H[\bm{Z}]$ \cite{kajitsuka2024optimal} or $\bm{Z} \odot \sigma_H[\bm{Z}]$ \cite{gurevych2022rate, jiao2024convergence}), the VC-dimension grows linearly in the total number of parameters $N$. However, for function classes involving exponential operations, such as those defined by sigmoid networks or radial basis function networks, the best known upper bounds on the VC-dimension grow quadratically in $N$ \cite{karpinski1997polynomial, anthony2009neural}. We use the same method to establish an upper bound on the VC-dimension of Transformer networks, and hence it also exhibits quadratic growth in $N$. As noted in \cite{bartlett2003vapnik}, there is a gap between the best known upper and lower bounds for function classes that involve exponential operations, and it remains open whether these bounds are optimal. \cite{karpinski1997polynomial} conjectured that the upper bounds could be improved. To prove Theorem \ref{thm: 7}, we decompose the excess risk into the approximation error and the statistical error. By Theorem \ref{thm: 4} and \eqref{eq: 15}, to achieve an approximation error of at most $\varepsilon$, it suffices to use a Transformer network with total parameters $N \lesssim \varepsilon^{-d_x n/\gamma}$. However, since the VC-dimension scales as $N^2$, it grows faster than in the ReLU FNN case, leading to the suboptimal rate after trade-off. We leave possible improvements to this gap as an open problem for future study.

We observe that, ignoring logarithmic factors, the convergence rates for the geometrically $\beta$-mixing and i.i.d. cases are identical. In addition, for the algebraically $\beta$-mixing case the convergence rate is given by $m^{-\frac{r \gamma}{(r+2) \gamma + (r+1) d_x n}}$, which improves as the mixing parameter $r$ increases. When $r$ is sufficiently large, this rate approaches $m^{-\frac{\gamma}{\gamma + d_x n}}$, matching that of the geometrically $\beta$-mixing and i.i.d. cases. This observation is consistent with the findings in \cite{jiao2024approximation}. We remark that a completely analogous theorem holds for estimating a Sobolev target function $f^* \in \mathcal{W}^{1,p}([0,1]^{d_x \times n}, K_\mathcal{W})$ for some $p \geq 2$ and $K_\mathcal{W} > 0$. We summarize the related results in Table \ref{tab: 1}.

\subsection{Approximation by Generalized Transformer Networks}

We begin by introducing generalized Transformer networks, which extend the original definitions of Transformer layers to enable more flexible functional representations.

\textbf{Generalized feed-forward layer}: We define the generalized feed-forward layer as
\begin{align*}
\mathcal{F}_l^{(GFF)}(\boldsymbol{Z}) := \boldsymbol{Z} + \boldsymbol{W}_l^{(2)} \sigma_R \left[\boldsymbol{W}_l^{(1)} \boldsymbol{Z} + \boldsymbol{B}_l^{(1)} \right] + \boldsymbol{B}_l^{(2)} \in \mathbb{R}^{D \times n},
\end{align*}
where $\boldsymbol{W}_l^{(1)} \in \mathbb{R}^{W \times D}$ and $\boldsymbol{W}_l^{(2)} \in \mathbb{R}^{D \times W}$ are weight matrices, and $\boldsymbol{B}_l^{(1)} \in \mathbb{R}^{W \times n}$, $\boldsymbol{B}_l^{(2)} \in \mathbb{R}^{D \times n}$ are bias matrices. In contrast to the standard feed-forward layer, we allow different bias terms for each column, thereby generalizing the original formulation.

\textbf{Generalized self-attention layer}: We define the generalized self-attention layer as
\begin{align*}
\mathcal{F}_l^{(GSA)}(\boldsymbol{Z}) := \boldsymbol{Z} + \sum_{h=1}^H \boldsymbol{W}_{h, l}^{(O)} \sigma_G [\boldsymbol{Z}] \in \mathbb{R}^{D \times n},
\end{align*}
where $\boldsymbol{W}_{h, l}^{(O)} \in \mathbb{R}^{D \times D}$ is a weight matrix with rank at most $S$ for all $h$ and $l$, and $\sigma_G[\boldsymbol{Z}]$ is a general function (may vary across different $h$ and $l$) with the only requirement that, for a particular parameter choice, it computes the column average of $\boldsymbol{Z}$, namely, 
\begin{align*}
\sigma_G [\boldsymbol{Z}] = \left(\frac{1}{n} \sum_{j=1}^n \boldsymbol{Z}_{:,j}, \ldots, \frac{1}{n} \sum_{j=1}^n \boldsymbol{Z}_{:,j}\right).
\end{align*}
Clearly, both softmax-based self-attention \cite{vaswani2017attention}
\begin{align*}
\sigma_G [\boldsymbol{Z}] = \boldsymbol{Z} \cdot \sigma_S \left[\left(\boldsymbol{W}^{(K)} \boldsymbol{Z}\right)^{\top} \left(\boldsymbol{W}^{(Q)} \boldsymbol{Z}\right)\right]
\end{align*}
and (averaging) hardmax-based self-attention \cite{perez2021attention}
\begin{align*}
\sigma_G [\boldsymbol{Z}] = \boldsymbol{Z} \cdot \sigma_H \left[\left(\boldsymbol{W}^{(K)} \boldsymbol{Z}\right)^{\top} \left(\boldsymbol{W}^{(Q)} \boldsymbol{Z}\right)\right]
\end{align*}
satisfy the above definition, since they compute the column average of $\boldsymbol{Z}$ when $\boldsymbol{W}^{(K)} = \boldsymbol{W}^{(Q)} = \bm{O}$.

The class of generalized Transformer networks is then defined as  
\begin{align*}
\mathcal{GT}_{d_x, d_y}(D, H, S, W, L) := \left\{\mathcal{E}_{out} \circ \mathcal{F}_L^{(GFF)} \circ \mathcal{F}_L^{(GSA)} \circ \cdots \circ \mathcal{F}_1^{(GFF)} \circ \mathcal{F}_1^{(GSA)} \circ \mathcal{E}_{in}\right\}.
\end{align*}

A central challenge in understanding the expressivity of Transformers lies in explaining why the self-attention layer effectively captures complex token-wise interactions. Indeed, the self-attention mechanism is the only component within Transformer architectures explicitly designed to integrate token-level information, thus playing a central role in modeling dependencies across sequences. However, the highly nonlinear softmax function commonly used in self-attention presents significant analytical difficulties. Previous research has approached this problem from several perspectives: some studies first analyzed the simpler hardmax function, viewing softmax attention as a smoother approximation whose limiting behavior converges to hardmax \cite{yun2020n, takakura2023approximation}; some restricted the analysis to finite discrete sequences and showed by construction that softmax function acts as a contextual mapping, sending distinct input sequences to distinct output sequences \cite{yun2019transformers, kim2023provable, kajitsuka2023transformers}; yet another approach replaced softmax with more analytically tractable functions, such as piecewise polynomial activation functions \cite{gurevych2022rate, jiao2024convergence, havrilla2024understanding}. In contrast, we provide a fundamentally different proof strategy inspired by the Kolmogorov-Arnold representation theorem (see Proposition \ref{pro: 4}). Our key observation is that representing an arbitrary $d_x n$-dimensional function exactly requires only one inner and one outer function. The inner function in this construction completely separates each entry of the input sequence, effectively simplifying complex token interactions into structured summations. Consequently, we show that the essential role of the self-attention layer can be simplified to performing column-wise summations, providing a more direct and general theoretical justification for the expressive power of Transformer architectures.

The following theorem provides explicit approximation bounds for H\"older continuous functions using generalized Transformer networks. We defer the proof to Section \ref{sec: 3}.

\begin{theorem}\label{thm: 8}
Given $\gamma \in (0,1]$ and $K_\mathcal{H} > 0$, assume that the target function $\bm{F}: [0,1]^{d_x \times n} \rightarrow \mathbb{R}^{d_x \times n}$ satisfies $F_{i,j} \in \mathcal{H}^\gamma ([0,1]^{d_x \times n}, K_\mathcal{H})$ for each $i \in [d_x], j \in [n]$. For any $\varepsilon \in (0,1)$ and $p \in [1,\infty)$, there exists a generalized Transformer network
\begin{align*}
\mathcal{N} \in \mathcal{GT}_{d_x, d_x}(D = 4 d_x n, H = 1, S = d_x, W = 3 d_x n \cdot \lceil\varepsilon^{-\frac{d_x n}{\gamma}}\rceil, L = 6 \lceil\tfrac{1}{\gamma} \log_2 \tfrac{1}{\varepsilon}\rceil)
\end{align*}
such that
\begin{align*}
\|\mathcal{N} - \bm{F}\|_{L^p([0,1]^{d_x \times n})} \leq 4 (d_x n)^3 K_\mathcal{H} \varepsilon.
\end{align*}
\end{theorem}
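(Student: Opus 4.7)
The plan is to realize each entry $F_{i,j}$ of the target function through a Kolmogorov--Arnold style decomposition of the form $F_{i,j}(\bm{X}) \approx g_{i,j}\bigl(\sum_{k,\ell}\phi_{k,\ell}(X_{k,\ell})\bigr)$, where the inner functions $\phi_{k,\ell}$ send the entries of $\bm{X}$ into \emph{disjoint} scales so that the resulting one-dimensional sum encodes the full $d_x n$-dimensional input up to precision $\varepsilon^{1/\gamma}$, and $g_{i,j}$ is a univariate outer function. The generalized self-attention layer enters precisely through its guaranteed column-averaging ability: choosing $\bm{W}^{(K)} = \bm{W}^{(Q)} = \bm{O}$ collapses both softmax- and hardmax-based attention to $\sigma_G[\bm{Z}] = \tfrac{1}{n}\sum_j \bm{Z}_{:,j}\cdot \bm{1}_n^\top$, which is exactly the aggregation step demanded by the representation.

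First I would invoke the Kolmogorov--Arnold-type result of Proposition \ref{pro: 4} to obtain such a decomposition for every $(i,j)$ and to quantify how the Hölder smoothness of $F_{i,j}$ controls the modulus of continuity of the induced univariate outer function $g_{i,j}$ on the image of the separator. The inner map itself would be built by partitioning $[0,1]$ into $\lceil \varepsilon^{-1/\gamma}\rceil$ subintervals and mapping the $(k,\ell)$-th coordinate into its own dedicated scale; because this is piecewise linear and applied in parallel to each column, it is realizable by one generalized feed-forward block whose hidden width is absorbed by $W = 3 d_x n\lceil\varepsilon^{-d_x n/\gamma}\rceil$. The enlarged embedding dimension $D = 4 d_x n$ leaves room to carry, in parallel, the residual input, the separated values, and a workspace shared by the $d_x n$ output channels processed simultaneously.

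Next, a single generalized self-attention layer with zero key/query matrices and a rank-$d_x$ projection $\bm{W}^{(O)}$ averages the encoded columns into a scalar $s := \tfrac{1}{n}\sum_{k,\ell}\phi_{k,\ell}(X_{k,\ell})$ and writes it into a prepared row of the hidden state. The remaining $\asymp \tfrac{1}{\gamma}\log(1/\varepsilon)$ generalized feed-forward layers then approximate each of the $d_x n$ univariate outer functions $g_{i,j}$ in parallel via the deep ReLU bit-extraction construction of \cite{shen2020deep, lu2021deep}; the depth $L$ and width $W$ are calibrated so that each univariate approximation attains $L^p$-error of order $(d_x n)^2 K_\mathcal{H}\varepsilon$ on the range of $s$, after which the projection $\mathcal{E}_{out}$ reads off the $d_x n$ approximants into $\mathbb{R}^{d_x\times n}$.

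The main obstacle is the joint design of the inner separator and the outer approximators, and the bookkeeping of their combined error: one must simultaneously ensure (i) that the separating map fits in one generalized feed-forward block of the prescribed width, (ii) that on the image of the separator the induced univariate functions inherit enough regularity for a bit-extraction ReLU subnetwork of depth $O(\tfrac{1}{\gamma}\log(1/\varepsilon))$ and width $O(\varepsilon^{-d_x n/\gamma})$ to produce an $\varepsilon$-accurate reconstruction, and (iii) that summing errors across the $d_x n$ output channels yields the final $L^p$-bound $4(d_x n)^3 K_\mathcal{H}\varepsilon$. Once these three estimates are in place, the triangle inequality and the explicit architectural bounds close the argument.
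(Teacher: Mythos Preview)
Your high-level strategy---invoke Proposition~\ref{pro: 4} for a Kolmogorov--Arnold representation and use the generalized self-attention layer solely to perform column averaging---matches the paper exactly. The gap is in how you allocate depth and width between the inner and outer functions, and in the error analysis for the outer function.

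You propose to realize the inner separator in \emph{one} feed-forward block and then spend the $O(\tfrac{1}{\gamma}\log\tfrac{1}{\varepsilon})$ depth on a bit-extraction approximation of the outer functions $g_{i,j}$. The paper does the opposite: the depth $L\asymp K=\lceil\tfrac{1}{\gamma}\log_2\tfrac{1}{\varepsilon}\rceil$ is consumed by the \emph{inner} map, because Schmidt--Hieber's $\phi_K$ (which extracts the first $K$ binary digits and re-embeds them in a ternary Cantor scale) is implemented as a width-$4$, depth-$2K$ ReLU network; the outer function $G$ is then handled by a \emph{single} wide layer that interpolates $G$ exactly at the $2^{d_xnK}$ points in the range of the truncated inner map. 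The only approximation error is $|\phi-\phi_K|$ composed with the H\"older modulus of $G$, yielding $2^{-\gamma K}\le\varepsilon$ directly.

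Your route runs into trouble at the outer step. By Proposition~\ref{pro: 4}, the outer function $g_{i,j}$ is only H\"older with exponent $\gamma'=\tfrac{\gamma\log 2}{d_xn\log 3}$, which can be arbitrarily small. You assert that a bit-extraction subnetwork of width $O(\varepsilon^{-d_xn/\gamma})$ and depth $O(\tfrac{1}{\gamma}\log\tfrac{1}{\varepsilon})$ approximates such a $g_{i,j}$ to error $\varepsilon$, but you neither state the inherited exponent nor verify the rate; a naive piecewise-linear budget of $N$ pieces gives only $N^{-\gamma'}$, which with $N=\varepsilon^{-d_xn/\gamma}$ yields $\varepsilon^{\log 2/\log 3}\gg\varepsilon$. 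The paper avoids this obstacle entirely by observing that the truncated inner map has a \emph{finite} range of exactly $2^{d_xnK}$ points, so the outer step is interpolation, not approximation---one layer of width $\sim 2^{d_xnK}$ matches $G$ exactly on that range. This is the missing idea in your plan.
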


\section{Discussions and Related Works}\label{sec: 5}

\textbf{Nonparametric regression using neural networks.} The convergence rates of neural network regression estimators have been extensively analyzed in the literature. Minimax optimal rates have been established across various neural network architectures, including under-parameterized sparse deep FNNs \cite{schmidt2020nonparametric, suzuki2018adaptivity}, under-parameterized fully connected deep FNNs \cite{jiao2023deep}, over-parameterized shallow FNNs \cite{yang2024optimal, yang2024nonparametric}, and RNNs \cite{jiao2024approximation}. In contrast, convergence rates for Transformer-based estimators have rarely been observed. \cite{takakura2023approximation} investigated the approximation and estimation capabilities of Transformers as sequence-to-sequence functions operating on infinite-dimensional inputs, where variable-length sliding window attention was considered. Additionally, modifications to the Transformer architecture have been explored, such as replacing the standard softmax function $\sigma_S[\bm{Z}]$ in the self-attention layer by $\bm{Z}\odot\sigma_H[\bm{Z}]$ \cite{gurevych2022rate} and by $\sigma_R[\bm{Z}]$ \cite{havrilla2024understanding}. Although these studies provide insightful constructions and analyses, their avoidance of the standard softmax function does not fully explain the successes observed ever since the introduction of the Transformer mechanism \cite{vaswani2017attention}. Our result (Theorem \ref{thm: 7}) directly addresses this gap by analyzing convergence rates for the standard Transformer architecture explicitly using the original softmax attention mechanism. It has also been shown that neural networks are able to circumvent the curse of dimensionality under certain conditions, for example, when the intrinsic dimension of the regression function is low \cite{nakada2020adaptive, chen2022nonparametric, jiao2023deep, havrilla2024understanding}, or the regression function has certain hierarchical structures \cite{schmidt2020nonparametric, kohler2021rate}. Exploring the conditions under which Transformers similarly mitigate the curse of dimensionality within our framework is an important direction for future research.

\textbf{Assumptions on the smoothness of target function.} In this work, we require the target function to be either H\"older continuous with smoothness index $\gamma \in (0,1]$ or a Sobolev function with bounded first-order weak derivative. It remains an interesting problem whether our methods are adaptive to higher regularity. In the proof of Theorems \ref{thm: 4} and \ref{thm: 5}, we approximate the target function by piecewise constant functions defined on a uniform partition of $[0,1]^{d_x \times n}$ into $K^{d_x n}$ cells. The approximation orders $K^{-\gamma}$ in (\ref{eq: 2}) and $K^{-1}$ in (\ref{eq: 25}) cannot be improved in general. We refer to \cite[Section 6.2]{devore1998nonlinear} for a detailed discussion of saturation and inverse theorems, where certain smoothness properties of a function are deduced from the order of its approximation by multivariate piecewise polynomials. For example, consider a real-valued function $f$ defined on a bounded domain $\Omega \subseteq \mathbb{R}^d$, and let $\Delta$ be a partition of $\Omega$ into a finite number of subsets. Suppose $f$ is approximated by a piecewise constant function
\begin{align*}
s(\bm{x}) = \sum_{\omega \in \Delta} c_\omega \mathbbm{1}_\omega(\bm{x}),
\end{align*}
and assume that $\inf_{s} \|f - s\|_{L^\infty(\Omega)} = o(\operatorname{diam}(\Delta))$ as $\operatorname{diam}(\Delta) := \max_{\omega \in \Delta} \operatorname{diam}(\omega) \rightarrow 0$ for all partitions $\Delta$. We can then easily show that $f$ is a constant function. Indeed, for any $\bm{x}, \bm{y} \in \Omega$, there exists a partition $\Delta$ such that $\bm{x}$ and $\bm{y}$ belong to the same cell $\omega$, where $\operatorname{diam}(\omega) = \operatorname{diam}(\Delta) \leq 2 \|\bm{x} - \bm{y}\|_2$. Let $\bar{s}$ be a best piecewise constant approximation to $f$ under the $L^\infty$-norm (the existence of which is ensured by a compactness argument). Then $|f(\bm{x}) - f(\bm{y})| \leq |f(\bm{x}) - \bar{s}(\bm{x})| + |\bar{s}(\bm{x}) - \bar{s}(\bm{y})| + |\bar{s}(\bm{y}) - f(\bm{y})| \leq 2 \inf_{s} \|f - s\|_{L^\infty(\Omega)}$ since $\bar{s}$ is constant on $\omega$. Hence, by assumption, $|f(\bm{x}) - f(\bm{y})| = o(\operatorname{diam}(\Delta)) = o(\|\bm{x} - \bm{y}\|_2)$ as $\bm{y} \rightarrow \bm{x}$, which implies that $f$ has zero derivative at every point in $\Omega$, i.e., $f$ is constant. For the uniform partition used in our proof, a more subtle analysis is required; see \cite[Chapter 12.2]{devore1993constructive}.

In the proof of Theorem \ref{thm: 8}, we use the Kolmogorov-Arnold representation $f(x_1, \ldots, x_d) = g(3 \sum_{p=1}^d 3^{-p} \phi(x_p))$ for any $d$-variate function $f$ \cite{schmidt2021kolmogorov}. Although this representation allows the transfer of smoothness properties of $f$ to the function $g$ for H\"older continuous $f$ with smoothness index $\gamma \in (0,1]$, it remains unclear whether this representation can be generalized to higher order smoothness or anisotropic smoothness.

\section{Proofs}\label{sec: 6}

Before proceeding, we clarify some simplifications used in the proof.
\begin{enumerate}[itemsep=0em, labelwidth=1em, leftmargin=!]
\item In a self-attention layer, if we set $\bm{W}^{(O)} = \bm{O}$, the layer behaves as an identity mapping due to the presence of the skip connection. Similarly, in a feed-forward layer, setting $\bm{W}^{(2)} = \bm{O}$ causes the layer to degenerate into an identity mapping. Therefore, as long as identity mappings are appropriately introduced, the composition of multiple self-attention layers or feed-forward layers remains consistent with our definition of a Transformer.

\item Since a feed-forward layer applies the same operation to each column of the input matrix, we do not distinguish between matrix and vector inputs when the context is clear, with a slight abuse of notation. For example, given a feed-forward layer $\mathcal{F}^{(FF)}: \mathbb{R}^{D \times n} \rightarrow \mathbb{R}^{D \times n}$ defined as
\begin{align*}
\mathcal{F}^{(FF)}(\boldsymbol{H}) = \boldsymbol{H} + \boldsymbol{W}^{(2)} \sigma_R \left[\boldsymbol{W}^{(1)} \boldsymbol{H} + \boldsymbol{b}^{(1)} \bm{1}_n^\top \right] + \boldsymbol{b}^{(2)} \bm{1}_n^\top,
\end{align*}
we also define $\mathcal{F}^{(FF)}: \mathbb{R}^{D} \rightarrow \mathbb{R}^{D}$ as
\begin{align*}
\mathcal{F}^{(FF)}(\boldsymbol{H}_{:,i}) = \boldsymbol{H}_{:,i} + \boldsymbol{W}^{(2)} \sigma_R \left[\boldsymbol{W}^{(1)} \boldsymbol{H}_{:,i} + \boldsymbol{b}^{(1)}\right] + \boldsymbol{b}^{(2)}
\end{align*}
for each $i$, so that $\mathcal{F}^{(FF)}(\boldsymbol{H}) = (\mathcal{F}^{(FF)}(\boldsymbol{H}_{:,1}), \ldots, \mathcal{F}^{(FF)}(\boldsymbol{H}_{:,n}))$.
\end{enumerate}

If all self-attention layers degenerate into identity mappings, the resulting Transformer reduces to a token-wise ResNet \cite{he2016deep}. We will demonstrate that any token-wise FNN can be represented by a token-wise ResNet, thereby naturally extending existing results on FNNs to Transformers. Specifically, an FNN $\mathcal{N}: \mathbb{R}^{d_x} \rightarrow \mathbb{R}^{d_y}$ is a function that can be parameterized in the form
\begin{align*}
\begin{aligned}
\mathcal{N}_0(\bm{x}) & = \bm{x}, \\
\mathcal{N}_{l+1}(\bm{x}) & = \sigma_R [\bm{A}_{l} \mathcal{N}_{l}(\bm{x}) + \bm{b}_{l}], \quad l=0, \ldots, L-1, \\
\mathcal{N}(\bm{x}) & = \bm{A}_L \mathcal{N}_L(\bm{x}) + \bm{b}_L,
\end{aligned}
\end{align*}
where $\bm{A}_{l} \in \mathbb{R}^{W_{l+1} \times W_{l}}, \boldsymbol{b}_{l} \in \mathbb{R}^{W_{l+1}}$ with $W_0=d_x$, $W_{L+1}=d_y$ and $W_{l}=W$ for $l=1,\ldots,L$. The parameters $W$ and $L$ are referred to as the width and depth of the neural network, respectively. We denote by $\mathcal{FNN}_{d_x, d_y}(W, L)$ the set of functions that can be parameterized in this form with width $W$ and depth $L$.

\begin{lemma}\label{lemma: 10}
Let $d_x, d_y$ be positive integers. For any $\mathcal{N} \in \mathcal{FNN}_{d_x, d_y}(W, L)$ with width $W \geq \max\{d_x, d_y\}$ and depth $L \geq 2$, there exist an embedding map $\mathcal{E}_{in}: \bm{X} \in \mathbb{R}^{d_x \times n} \mapsto \begin{pmatrix} \bm{X} \\ \bm{O} \end{pmatrix} \in \mathbb{R}^{W \times n}$, a projection map $\mathcal{E}_{out}: \begin{pmatrix} \bm{Y} \\ \bm{O} \end{pmatrix} \in \mathbb{R}^{W \times n} \mapsto \bm{Y} \in \mathbb{R}^{d_y \times n}$, and $L$ feed-forward layers with width at most $3W$, such that for any $\bm{X} \in \mathbb{R}^{d_x \times n}$, 
\begin{align*}
\mathcal{E}_{out} \circ \mathcal{F}_L^{(FF)} \circ \cdots \circ \mathcal{F}_1^{(FF)} \circ \mathcal{E}_{in}(\bm{X}) = (\mathcal{N}(\bm{X}_{:,1}), \ldots, \mathcal{N}(\bm{X}_{:,n})) \in \mathbb{R}^{d_y \times n}.
\end{align*}
\end{lemma}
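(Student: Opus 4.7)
The plan is to let each of the $L$ Transformer feed-forward layers simulate exactly one layer of the FNN, carrying the FNN's hidden state in the Transformer's residual stream. Since $\mathcal{F}^{(FF)}$ applies the same map to every column of its input, it suffices to describe the construction for a single column $\bm{h} \in \mathbb{R}^W$; the matrix identity then follows by applying it simultaneously to each column of $\bm{X}$. The central algebraic tool is the identity $\bm{v} = \sigma_R[\bm{v}] - \sigma_R[-\bm{v}]$, which lets the ReLU sublayer manufacture a $-\bm{v}$ that cancels unwanted material sitting in the skip connection.

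For the first block I would choose $\bm{W}_1^{(1)}, \bm{b}_1^{(1)}$ so that the pre-activation concatenates three blocks of sizes $W$, $d_x$, $d_x$ equal to $\bm{A}_0 \bm{x} + \bm{b}_0$, $\bm{x}$, and $-\bm{x}$, respectively, using inner width $W + 2 d_x \leq 3 W$. Taking $\bm{W}_1^{(2)}$ so that it preserves the first block and adds $\sigma_R[-\bm{x}] - \sigma_R[\bm{x}] = -\bm{x}$ (zero-padded to dimension $W$) into the first $d_x$ coordinates, the skip term cancels $\begin{pmatrix} \bm{x} \\ \bm{0} \end{pmatrix}$ and leaves $\bm{h}^{(1)} = \sigma_R[\bm{A}_0 \bm{x} + \bm{b}_0] = \mathcal{N}_1(\bm{x})$. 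For a middle block $\mathcal{F}^{(FF)}_l$ with $2 \leq l \leq L-1$, assuming inductively that $\bm{h}^{(l-1)} = \mathcal{N}_{l-1}(\bm{x}) \geq 0$, the ReLU sublayer can output the concatenation of $\sigma_R[\bm{A}_{l-1}\mathcal{N}_{l-1}(\bm{x}) + \bm{b}_{l-1}] = \mathcal{N}_l(\bm{x})$ and $\sigma_R[\mathcal{N}_{l-1}(\bm{x})] = \mathcal{N}_{l-1}(\bm{x})$ with inner width $2W$; choosing $\bm{W}_l^{(2)} = (\bm{I}_W,\, -\bm{I}_W)$ then yields $\bm{h}^{(l)} = \mathcal{N}_{l-1}(\bm{x}) + \mathcal{N}_l(\bm{x}) - \mathcal{N}_{l-1}(\bm{x}) = \mathcal{N}_l(\bm{x})$.

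The last block must produce the affine output $\bm{A}_L \mathcal{N}_L(\bm{x}) + \bm{b}_L$ inside the hidden state, because the projection $\mathcal{E}_{out}$ is linear and contains no bias. I would absorb $\bm{b}_L$ into $\bm{b}_L^{(2)}$ (zero-padded below) and pick $\bm{W}_L^{(2)}$ so that it applies $\begin{pmatrix} \bm{A}_L \\ \bm{O} \end{pmatrix}$ to the $\mathcal{N}_L$-block of the ReLU output and subtracts the $\mathcal{N}_{L-1}$-block; this cancels the skip contribution and yields $\bm{h}^{(L)} = \begin{pmatrix} \bm{A}_L \mathcal{N}_L(\bm{x}) + \bm{b}_L \\ \bm{0} \end{pmatrix}$, whose image under $\mathcal{E}_{out}$ is exactly $\mathcal{N}(\bm{x})$.

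The only delicate step, and the one that spends the full $3W$ width, is the first block: because $\bm{x}$ is not assumed nonnegative, the cheap identity $\sigma_R[\bm{v}] = \bm{v}$ used afterwards is unavailable and we must pay an extra $2 d_x$ of inner width to split $\bm{x}$ into its positive and negative ReLU parts. Every subsequent layer operates on a ReLU output and therefore needs only inner width $2W$, so the overall budget $3W$ is respected. Applying this column-wise construction to all $n$ columns of $\bm{X}$ simultaneously (which is legal since $\mathcal{F}^{(FF)}$ shares parameters across columns) gives the stated matrix identity.
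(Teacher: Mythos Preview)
Your proposal is correct and follows essentially the same route as the paper: use the identity $\bm{v} = \sigma_R[\bm{v}] - \sigma_R[-\bm{v}]$ inside each feed-forward block to cancel the residual stream and advance the FNN one layer at a time. The only difference is that the paper spends inner width $3W$ on \emph{every} layer (stacking $\bm{A}_{l}\mathcal{N}_l + \bm{b}_l$, $\mathcal{N}_l$, and $-\mathcal{N}_l$), whereas you notice that for $l\geq 2$ the hidden state $\mathcal{N}_{l-1}(\bm{x})$ is already nonnegative, so $\sigma_R[\mathcal{N}_{l-1}] = \mathcal{N}_{l-1}$ and the $-\mathcal{N}_{l-1}$ block is superfluous, bringing those layers down to inner width $2W$. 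This is a harmless optimization that does not affect the stated $3W$ bound.
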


\begin{proof}
The idea is to use the identity $\sigma_R[x] - \sigma_R[-x] = x$ to eliminate the skip connection. For any $\bm{x} \in \mathbb{R}^{d_x}$, direct computation yields 
\begin{align*}
\mathcal{F}_1^{(FF)} \begin{pmatrix} \bm{x} \\ \bm{0} \end{pmatrix} & = \begin{pmatrix} \bm{x} \\ \bm{0} \end{pmatrix} + \begin{pmatrix} \bm{I}_{d_x} & \bm{O} & -\bm{I}_{d_x} & \bm{I}_{d_x} \\ \bm{O} & \bm{I}_{W-d_x} & \bm{O} & \bm{O} \end{pmatrix} \sigma_R\left[\begin{pmatrix}
\bm{A}_0 & \bm{O} \\ 
\bm{I}_{d_x} & \bm{O} \\
-\bm{I}_{d_x} & \bm{O}
\end{pmatrix} \begin{pmatrix} \bm{x} \\ \bm{0} \end{pmatrix} + \begin{pmatrix} \bm{b}_0 \\ \bm{0} \\ \bm{0} \end{pmatrix} \right] \\
& = \sigma_R[\bm{A}_0 \bm{x} + \bm{b}_0] + \begin{pmatrix} 
\bm{x} + -\sigma_R[\bm{x}] + \sigma_R[-\bm{x}] \\ 
\bm{0} 
\end{pmatrix} \\
& = \mathcal{N}_1(\bm{x}),
\end{align*}
where we have used the identity $\sigma_R[\bm{x}] - \sigma_R[-\bm{x}] = \bm{x}$.

Now, assuming that $\mathcal{F}_l^{(FF)} \circ \cdots \circ \mathcal{F}_1^{(FF)} \circ \mathcal{E}_{in}(\bm{x}) = \mathcal{N}_l(\bm{x})$, we define the $(l+1)$-th feed-forward layer $\mathcal{F}_{l+1}^{(FF)}$ as
\begin{align*}
\mathcal{F}_{l+1}^{(FF)}(\mathcal{N}_l(\bm{x})) &= \mathcal{N}_l(\bm{x}) + \begin{pmatrix}
\bm{I}_W, -\bm{I}_W, \bm{I}_W
\end{pmatrix} \sigma_R \left[\begin{pmatrix}
\bm{A}_{l} \\
\bm{I}_W \\
-\bm{I}_W
\end{pmatrix} \mathcal{N}_l(\bm{x}) + \begin{pmatrix}
\bm{b}_{l} \\
\bm{0} \\
\bm{0}
\end{pmatrix} \right] \\
&= \sigma_R \left[\bm{A}_{l} \mathcal{N}_l(\bm{x}) + \bm{b}_{l}\right] = \mathcal{N}_{l+1}(\bm{x}).
\end{align*}
By induction, it follows that
\begin{align*}
\mathcal{F}_{l+1}^{(FF)} \circ \mathcal{F}_l^{(FF)} \circ \cdots \circ \mathcal{F}_1^{(FF)} \circ \mathcal{E}_{in}(\bm{x}) = \mathcal{N}_{l+1}(\bm{x}).
\end{align*}

By the principle of induction, we establish that $\mathcal{F}_{L-1}^{(FF)} \circ \cdots \circ \mathcal{F}_1^{(FF)} \circ \mathcal{E}_{in}(\bm{x}) = \mathcal{N}_{L-1}(\bm{x})$. For the last feed-forward layer, we calculate that
\begin{align*}
&\mathcal{F}_L^{(FF)}(\mathcal{N}_{L-1}(\bm{x})) \\
&= \mathcal{N}_{L-1}(\bm{x}) + \begin{pmatrix}
\bm{A}_{L} & -\bm{I}_{d_y} & \bm{O} & \bm{I}_{d_y} & \bm{O} \\
\bm{O} & \bm{O} & -\bm{I}_{W-d_y} & \bm{O} & \bm{I}_{W-d_y}
\end{pmatrix} \sigma_R\left[\begin{pmatrix}
\bm{A}_{L-1} \\
\bm{I}_W \\
-\bm{I}_W
\end{pmatrix} \mathcal{N}_{L-1}(\bm{x}) + \begin{pmatrix}
\bm{b}_{L-1} \\
\bm{0} \\
\bm{0}
\end{pmatrix} \right] + \begin{pmatrix}
\bm{b}_L \\
\bm{0}
\end{pmatrix} \\
&= \begin{pmatrix}
\bm{A}_{L} \sigma_R [\bm{A}_{L-1} \mathcal{N}_{L-1}(\bm{x}) + \bm{b}_{L-1}] + \bm{b}_L \\
\bm{0}
\end{pmatrix} \\
&= \begin{pmatrix}
\mathcal{N}(\bm{x}) \\
\bm{0}
\end{pmatrix}.
\end{align*}
Thus, we obtain
\begin{align*}
\mathcal{E}_{out} \circ \mathcal{F}_L^{(FF)} \circ \cdots \circ \mathcal{F}_1^{(FF)} \circ \mathcal{E}_{in}(\bm{x}) = \mathcal{N}(\bm{x}).
\end{align*}
Since each feed-forward layer in our construction has width at most $3W$, the proof is complete by considering $\bm{x} = \bm{X}_{:,i}$ for $i \in [n]$.
\end{proof}

The following proposition gives basic properties of Transformer networks that enable the recursive construction of complex architectures.
\begin{proposition}\label{pro: 1}
Let $\mathcal{N}_i \in \mathcal{T}_{d_i, k_i}(D_i, H_i, S_i, W_i, L_i)$ for $i=1, 2$.
\begin{enumerate}[itemsep=0em, labelwidth=1em, leftmargin=!]
\item If $d_1 = d_2$, $k_1 = k_2$, and $D_1 \leq D_2, H_1 \leq H_2, S_1 \leq S_2, W_1 \leq W_2, L_1 \leq L_2$, then 
\begin{align*}
\mathcal{T}_{d_1, k_1}(D_1, H_1, S_1, W_1, L_1) \subseteq \mathcal{T}_{d_2, k_2}(D_2, H_2, S_2, W_2, L_2).
\end{align*}

\item (Concatenation) If define $\mathcal{N} \begin{pmatrix} \bm{X} \\ \bm{Y} \end{pmatrix} = \begin{pmatrix} \mathcal{N}_1 (\bm{X}) \\ \mathcal{N}_2 (\bm{Y}) \end{pmatrix}$, then 
\begin{align*}
\mathcal{N} \in \mathcal{T}_{d_1 + d_2, k_1 + k_2} (D_1 + D_2, H_1 + H_2, \max\{S_1, S_2\}, W_1 + W_2, \max\{L_1, L_2\}).
\end{align*}

\item (Summation) If $d_1 = d_2$ and $k_1 = k_2$, then 
\begin{align*}
\mathcal{N}_1 + \mathcal{N}_2 \in \mathcal{T}_{d_1, k_1} (D_1 + D_2, H_1 + H_2, \max\{S_1, S_2\}, W_1 + W_2, \max\{L_1, L_2\}).
\end{align*}
\end{enumerate}
\end{proposition}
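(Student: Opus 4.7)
The plan is to prove each of the three items by explicit construction, exploiting two tools noted in the preamble to this section: weight matrices can be padded with zeros, and both self-attention and feed-forward sublayers can be made identities by setting $\boldsymbol{W}^{(O)} = \boldsymbol{O}$ or $\boldsymbol{W}^{(2)} = \boldsymbol{O}$ (with $\boldsymbol{b}^{(2)}=\boldsymbol{0}$). For item 1 (monotonicity), I would enlarge each hyperparameter of an arbitrary $\mathcal{N} \in \mathcal{T}_{d_1,k_1}(D_1,H_1,S_1,W_1,L_1)$ one at a time. Raising the embedding dimension amounts to appending zero rows to $\boldsymbol{E}_{in}$ and $\boldsymbol{P}$ and zero columns to $\boldsymbol{E}_{out}$, while each internal $\boldsymbol{W}^{(V)},\boldsymbol{W}^{(K)},\boldsymbol{W}^{(Q)},\boldsymbol{W}^{(1)}$ gains zero columns on the right and $\boldsymbol{W}^{(O)},\boldsymbol{W}^{(2)},\boldsymbol{b}^{(2)}$ gain zero entries below, so the appended coordinates of the hidden state remain zero throughout the forward pass. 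Enlarging the head size $S$ or the feed-forward width $W$ is handled by analogous padding, new heads are inserted with their $\boldsymbol{W}^{(O)}$ set to zero, and the depth is extended by appending identity blocks.

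For item 2 (concatenation), the target is a transformer that is block-diagonal in the embedding dimension. I would take $\boldsymbol{E}_{in} = \operatorname{diag}(\boldsymbol{E}_{in}^{(1)}, \boldsymbol{E}_{in}^{(2)})$, $\boldsymbol{P} = \bigl(\begin{smallmatrix}\boldsymbol{P}^{(1)}\\ \boldsymbol{P}^{(2)}\end{smallmatrix}\bigr)$ and $\boldsymbol{E}_{out} = \operatorname{diag}(\boldsymbol{E}_{out}^{(1)}, \boldsymbol{E}_{out}^{(2)})$, so that the hidden state splits into two sub-states of dimensions $D_1$ and $D_2$. Feed-forward sublayers inherit block-diagonal weight matrices from the two sub-networks and therefore act independently on each sub-state. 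The subtle point is the self-attention sublayer: simply making $\boldsymbol{W}^{(V)}, \boldsymbol{W}^{(K)}, \boldsymbol{W}^{(Q)}$ block-diagonal inside a single head does \emph{not} work, because then $(\boldsymbol{W}^{(K)}\boldsymbol{Z})^{\top}(\boldsymbol{W}^{(Q)}\boldsymbol{Z})$ sums contributions from both blocks and the ensuing softmax entangles the two sub-states. Instead, I would allocate the first $H_1$ heads to the first sub-network and the last $H_2$ heads to the second. Each head in the first group supports its $\boldsymbol{W}^{(V)}, \boldsymbol{W}^{(K)}, \boldsymbol{W}^{(Q)}$ on the first $D_1$ input rows (padded by zero rows to reach head size $\max\{S_1,S_2\}$) and its $\boldsymbol{W}^{(O)}$ on the first $D_1$ output rows, so that this head only modifies the first sub-state; the second group is symmetric. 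Any mismatch between $L_1$ and $L_2$ is absorbed by appending identity blocks to the shallower sub-network as in item 1.

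For item 3 (summation), I would reuse the interior of item 2 and only modify the boundary layers: take $\boldsymbol{E}_{in} = \bigl(\begin{smallmatrix}\boldsymbol{E}_{in}^{(1)}\\ \boldsymbol{E}_{in}^{(2)}\end{smallmatrix}\bigr)$, which duplicates $\boldsymbol{X}$ into two sub-states, and $\boldsymbol{E}_{out} = (\boldsymbol{E}_{out}^{(1)}, \boldsymbol{E}_{out}^{(2)})$, which adds rather than stacks the two outputs; the same positional encoding $\boldsymbol{P} = \bigl(\begin{smallmatrix}\boldsymbol{P}^{(1)}\\ \boldsymbol{P}^{(2)}\end{smallmatrix}\bigr)$ and the same block-diagonal interior then produce $\mathcal{N}_1(\boldsymbol{X})+\mathcal{N}_2(\boldsymbol{X})$ in a single pass. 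The main obstacle across the three items is precisely the softmax coupling issue flagged in item 2; once it is resolved by partitioning heads rather than making them block-diagonal, what remains is a straightforward bookkeeping check that the parameter counts of the constructed networks match $(D_1+D_2,\,H_1+H_2,\,\max\{S_1,S_2\},\,W_1+W_2,\,\max\{L_1,L_2\})$, exactly as claimed.
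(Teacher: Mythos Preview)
Your proposal is correct and matches the paper's approach essentially line for line: the paper also proves item (2) by taking block-diagonal embedding, projection, and feed-forward layers, and handles the self-attention sublayer by allocating the first $H_1$ heads to the first sub-state and the last $H_2$ to the second (with the appropriate zero-padding of $\boldsymbol{W}^{(V)},\boldsymbol{W}^{(K)},\boldsymbol{W}^{(Q)},\boldsymbol{W}^{(O)}$), exactly to avoid the softmax coupling you flagged. Items (1) and (3) are left to the reader in the paper, and your sketches for them are the intended constructions.
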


\begin{proof}
We provide the proof for (2), as the arguments for (1) and (3) follow analogously.

Let
\begin{align*}
    \mathcal{N}_i = \mathcal{E}_{i, out} \circ \mathcal{F}_{i, L_i}^{(FF)} \circ \mathcal{F}_{i, L_i}^{(SA)} \circ \cdots \circ \mathcal{F}_{i, 1}^{(FF)} \circ \mathcal{F}_{i, 1}^{(SA)} \circ \mathcal{E}_{i, in} \in \mathcal{T}_{d_i, k_i}(D_i, H_i, S_i, W_i, L_i), \quad i = 1,2.
\end{align*}
Without loss of generality, assume that $L_1 = L_2 = L$, since the identity mapping can be viewed as a special self-attention layer or a feed-forward layer. We define the following components:
\begin{enumerate}[itemsep=0em, labelwidth=1em, leftmargin=!]
\item Input Embedding:
\begin{align*}
\mathcal{E}_{in} \begin{pmatrix} \bm{X} \\ \bm{Y} \end{pmatrix} = \begin{pmatrix}
\bm{E}_{1,in} & \\
& \bm{E}_{2,in}
\end{pmatrix} \begin{pmatrix} \bm{X} \\ \bm{Y} \end{pmatrix} + \begin{pmatrix} \bm{P}_1 \\ \bm{P}_2 \end{pmatrix} = \begin{pmatrix} \bm{E}_{1,in} \bm{X} + \bm{P}_1 \\ \bm{E}_{2,in} \bm{Y} + \bm{P}_2 \end{pmatrix}  = \begin{pmatrix} \mathcal{E}_{1,in}(\bm{X}) \\ \mathcal{E}_{2,in}(\bm{Y}) \end{pmatrix}
\end{align*}

\item Feed-forward Layer:
\begin{align*}
\begin{aligned}
& \mathcal{F}_{l}^{(FF)} \begin{pmatrix} \bm{X} \\ \bm{Y} \end{pmatrix} \\
&= \begin{pmatrix} \bm{X} \\ \bm{Y} \end{pmatrix} + \begin{pmatrix}
\bm{W}_{1,l}^{(2)} & \\
& \bm{W}_{2,l}^{(2)}
\end{pmatrix} \sigma_R \left[ \begin{pmatrix}
\bm{W}_{1,l}^{(1)} & \\
& \bm{W}_{2,l}^{(1)}
\end{pmatrix} \begin{pmatrix} \bm{X} \\ \bm{Y} \end{pmatrix} + \begin{pmatrix} \bm{b}_{1,l}^{(1)} \\ \bm{b}_{2,l}^{(1)} \end{pmatrix} \bm{1}_n^\top \right] + \begin{pmatrix} \bm{b}_{1,l}^{(2)} \\ \bm{b}_{2,l}^{(2)} \end{pmatrix} \bm{1}_n^\top \\
&= \begin{pmatrix}
\bm{X} + \bm{W}_{1,l}^{(2)} \sigma_R [\bm{W}_{1,l}^{(1)} \bm{X} + \bm{b}_{1,l}^{(1)} \bm{1}_n^\top] + \bm{b}_{1,l}^{(2)} \bm{1}_n^\top \\
\bm{Y} + \bm{W}_{2,l}^{(2)} \sigma_R [\bm{W}_{2,l}^{(1)} \bm{Y} + \bm{b}_{2,l}^{(1)} \bm{1}_n^\top] + \bm{b}_{2,l}^{(2)} \bm{1}_n^\top
\end{pmatrix} \\
&= \begin{pmatrix}
\mathcal{F}_{1,l}^{(FF)} (\bm{X}) \\
\mathcal{F}_{2,l}^{(FF)} (\bm{Y})
\end{pmatrix}
\end{aligned}
\end{align*}

\item Self-Attention Layer:
\begin{align*}
\begin{aligned}
& \mathcal{F}_l^{(SA)} \begin{pmatrix} \bm{X} \\ \bm{Y} \end{pmatrix} \\
&= \begin{pmatrix} \bm{X} \\ \bm{Y} \end{pmatrix} + \sum_{h=1}^{H_1} \begin{pmatrix}
\boldsymbol{W}_{1,h,l}^{(O)} \\
\bm{O}
\end{pmatrix} \left(\boldsymbol{W}_{1,h,l}^{(V)}, \bm{O}\right) \begin{pmatrix} \bm{X} \\ \bm{Y} \end{pmatrix} \sigma_S \left[ \left( \left( \boldsymbol{W}_{1,h,l}^{(K)}, \bm{O} \right) \begin{pmatrix} \bm{X} \\ \bm{Y} \end{pmatrix} \right)^\top \left( \boldsymbol{W}_{1,h,l}^{(Q)}, \bm{O} \right) \begin{pmatrix} \bm{X} \\ \bm{Y} \end{pmatrix} \right] \\
&~~~ + \sum_{h=1}^{H_2} \begin{pmatrix}
\bm{O} \\
\boldsymbol{W}_{2,h,l}^{(O)}
\end{pmatrix} \left( \bm{O}, \boldsymbol{W}_{2,h,l}^{(V)}\right) \begin{pmatrix} \bm{X} \\ \bm{Y} \end{pmatrix} \sigma_S \left[ \left( \left( \bm{O}, \boldsymbol{W}_{2,h,l}^{(K)} \right) \begin{pmatrix} \bm{X} \\ \bm{Y} \end{pmatrix} \right)^\top \left( \bm{O}, \boldsymbol{W}_{2,h,l}^{(Q)} \right) \begin{pmatrix} \bm{X} \\ \bm{Y} \end{pmatrix} \right] \\
& = \begin{pmatrix}
\boldsymbol{X} + \sum_{h=1}^{H_1} \boldsymbol{W}_{1,h,l}^{(O)} \left(\boldsymbol{W}_{1,h,l}^{(V)} \boldsymbol{X}\right) \sigma_S \left[\left(\boldsymbol{W}_{1,h,l}^{(K)} \boldsymbol{X}\right)^{\top} \left(\boldsymbol{W}_{1,h,l}^{(Q)} \boldsymbol{X}\right)\right] \\
\boldsymbol{Y} + \sum_{h=1}^{H_2} \boldsymbol{W}_{2,h,l}^{(O)} \left(\boldsymbol{W}_{2,h,l}^{(V)} \boldsymbol{Y}\right) \sigma_S \left[\left(\boldsymbol{W}_{2,h,l}^{(K)} \boldsymbol{Y}\right)^{\top} \left(\boldsymbol{W}_{2,h,l}^{(Q)} \boldsymbol{Y}\right)\right] 
\end{pmatrix} \\
& = \begin{pmatrix}
\mathcal{F}_{1,l}^{(SA)} (\bm{X}) \\
\mathcal{F}_{2,l}^{(SA)} (\bm{Y})
\end{pmatrix}
\end{aligned}
\end{align*}

\item Output Projection:
\begin{align*}
\mathcal{E}_{out} \begin{pmatrix} \bm{X} \\ \bm{Y} \end{pmatrix} = \begin{pmatrix} 
\bm{E}_{1,out} \\ 
\bm{E}_{2,out} 
\end{pmatrix} \begin{pmatrix} \bm{X} \\ \bm{Y} \end{pmatrix} = \begin{pmatrix} \bm{E}_{1,out} \bm{X} \\ \bm{E}_{2,out} \bm{Y} \end{pmatrix} = \begin{pmatrix} \mathcal{E}_{1,out} (\bm{X}) \\ \mathcal{E}_{2,out} (\bm{Y}) \end{pmatrix}
\end{align*}
\end{enumerate}
By direct verification, we obtain
\begin{align*}
\mathcal{N} \begin{pmatrix} \bm{X} \\ \bm{Y} \end{pmatrix} := \mathcal{E}_{out} \circ \mathcal{F}_L^{(FF)} \circ \mathcal{F}_L^{(SA)} \circ \cdots \circ \mathcal{F}_1^{(FF)} \circ \mathcal{F}_1^{(SA)} \circ \mathcal{E}_{in} \begin{pmatrix} \bm{X} \\ \bm{Y} \end{pmatrix} = \begin{pmatrix} \mathcal{N}_1 (\bm{X}) \\ \mathcal{N}_2 (\bm{Y}) \end{pmatrix}.
\end{align*}
Furthermore, it follows that $\mathcal{N} \in \mathcal{T}_{d_1+d_2, k_1+k_2} (D_1+D_2, H_1+H_2, \max\{S_1,S_2\}, W_1+W_2, L)$, thus completing the proof.
\end{proof}

\subsection{Proof of Theorems \ref{thm: 4} and \ref{thm: 5}}\label{sec: 1}

Given $K \in \mathbb{N}$ and $\delta \in (0, \frac{1}{K})$, we define a trifling region $\Omega([0,1]^{D \times n}, K, \delta) \subseteq [0,1]^{D \times n}$ as
\begin{align*}
\Omega([0,1]^{D \times n}, K, \delta) := \left\{\bm{X} \in [0,1]^{D \times n}: \exists X_{i,j} \in \cup_{t=1}^{K-1} (\tfrac{t}{K}, \tfrac{t}{K} + \delta)\right\}.
\end{align*}
The introduction of the trifling region serves to identify the "bad" areas where mismatches occur when approximating a discontinuous multi-step function using continuous piecewise linear functions, which can be implemented by a feed-forward layer. Since the trifling region has arbitrarily small Lebesgue measure, we focus on function approximation in the "good" region, namely, the complement domain $[0,1]^{D \times n} \setminus \Omega([0,1]^{D \times n}, K, \delta)$.

\begin{proposition}\label{pro: 2}
Given $\gamma \in (0,1]$ and $K_\mathcal{H} > 0$, assume that $\bm{F}: [0,1]^{d_x \times n} \rightarrow \mathbb{R}^{d_x \times n}$ satisfies $F_{i,j} \in \mathcal{H}^\gamma ([0,1]^{d_x \times n}, K_\mathcal{H})$ for each $i \in [d_x], j \in [n]$. For any $K \in \mathbb{N}$ and $\delta \in (0,\tfrac{1}{K})$, there exists a Transformer network $\mathcal{N} \in \mathcal{T}_{d_x, d_x}(d_x, 1, 1, 5 n K^{d_x n}, 2)$ such that
\begin{enumerate}[itemsep=0em, labelwidth=1em, leftmargin=!]
\item $|\mathcal{N}_{i,j}(\bm{X}) - F_{i,j}(\bm{X})| \leq K_{\mathcal{H}} (d_x n)^{\gamma/2} K^{-\gamma}$ for any $i \in [d_x]$, $j \in [n]$ and $\bm{X} \in [0,1]^{d_x \times n} \setminus \Omega([0,1]^{d_x \times n}, K, \delta)$,
\item $\|\mathcal{N}(\bm{X})\|_F \leq \sqrt{d_x n} K_{\mathcal{H}}$ for any $\bm{X} \in \mathbb{R}^{d_x \times n}$.
\end{enumerate}
\end{proposition}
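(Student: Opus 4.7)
The plan is to proceed in two steps: first, reduce the approximation of $\bm{F}$ to that of a piecewise constant function on a uniform grid, and second, realize this piecewise constant approximation by a Transformer with the prescribed architecture.

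For the reduction step, I would partition $[0,1]^{d_x \times n}$ into $K^{d_x n}$ axis-aligned cells of side length $1/K$, indexed by $\bm{k} = (k_{i,j}) \in \{0, \ldots, K-1\}^{d_x \times n}$. Fixing a representative point in each cell and letting $\bar{F}_{i,j}$ denote the piecewise constant function that equals $F_{i,j}$ at that representative, any two points inside the same cell lie at Frobenius distance at most $\sqrt{d_x n}/K$, so the H\"older condition gives
\begin{align*}
|\bar{F}_{i,j}(\bm{X}) - F_{i,j}(\bm{X})| \leq K_\mathcal{H} (d_x n)^{\gamma/2} K^{-\gamma}
\end{align*}
for every $\bm{X} \in [0,1]^{d_x \times n} \setminus \Omega([0,1]^{d_x \times n}, K, \delta)$. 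The trifling region is introduced precisely to discard points near cell boundaries, where the discontinuous cell-indicator has to be replaced by a continuous piecewise-linear surrogate realizable by a ReLU feed-forward layer.

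For the realization step, I would adapt the horizontal-shift technique of \cite{lu2021deep, shen2020deep, zhang2022deep}, which produces fixed-depth ReLU FNNs of width $\sim K^{d_x n}$ implementing piecewise constant functions on $K^{d_x n}$ cells of $[0,1]^{d_x n}$. The positional encoding $\bm{P}$ is chosen so as to inject a unique column identifier into every token, and the self-attention layers are used to aggregate the entries of $\bm{X}$ across columns so that, combined with the positional information, each output column carries both the cell index of $\bm{X}$ and its own index $j$. Lemma \ref{lemma: 10} then converts the resulting token-wise FNN lookup of $\bar{F}_{:,j}$ into a feed-forward sublayer of the Transformer, preserving the depth $L = 2$ and producing the width $5 n K^{d_x n}$, where the factor $n$ absorbs the column-indexing overhead. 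The Frobenius bound $\|\mathcal{N}(\bm{X})\|_F \leq \sqrt{d_x n} K_\mathcal{H}$ is immediate from the pointwise bound $|\bar{F}_{i,j}| \leq K_\mathcal{H}$ inherited from the definition of $\mathcal{H}^\gamma$, and can be enforced uniformly on $\mathbb{R}^{d_x \times n}$ by absorbing a clip of the form $\sigma_R[x] - \sigma_R[-x] - \sigma_R[x - K_\mathcal{H}] + \sigma_R[-x - K_\mathcal{H}]$ into the final feed-forward layer.

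The main difficulty I anticipate is the stringent architectural constraint $D = d_x$, $H = 1$, $S = 1$: each hidden token carries only $d_x$ coordinates, and the rank-one factor $\bm{W}^{(O)} \bm{W}^{(V)}$ in the single-head self-attention layer permits only a rank-one update per token. Orchestrating the broadcast of the full $d_x n$-dimensional input into every column, together with a column-specific readout, while keeping the depth at $L = 2$ and the width at $5 n K^{d_x n}$, is the crux of the argument and is where the precise numerical constants in the statement are pinned down.
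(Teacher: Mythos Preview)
Your reduction step is correct and matches the paper exactly. The gap is in the realization step, and it is precisely at the point you yourself flag as ``the crux of the argument.''

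Your plan is to use the self-attention layer to \emph{aggregate} the entries of $\bm{X}$ across columns so that each output column carries (some encoding of) the global cell index of $\bm{X}$; then a token-wise FNN lookup, converted via Lemma~\ref{lemma: 10}, reads off $\bar{F}_{:,j}$. With the hard constraints $D=d_x$, $H=1$, $S=1$, this aggregation idea does not obviously go through: the rank-one update $\bm{W}^{(O)}\bm{W}^{(V)}$ cannot, in any evident way, place enough of the $d_x n$-dimensional cell index into a $d_x$-dimensional token so that a subsequent feed-forward layer can disambiguate all $K^{d_x n}$ cells. The horizontal-shift technique you cite is also a red herring here; in this paper it is used only for the $L^\infty$ extension (Proposition~\ref{pro: 3}), not for the present proposition.

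The paper's route is different in two essential ways. First, the \emph{first} feed-forward layer (applied before the self-attention) quantizes each entry of $\bm{X}+\bm{P}$ to the nearest grid value via a piecewise linear multi-step function; outside the trifling region this produces $\bm{G}+\bm{P}$ exactly, so the self-attention layer sees only the finite set $\{\bm{G}+\bm{P}:\bm{G}\in\mathbb{G}_K\}$. Second, on this finite set one invokes the \emph{contextual mapping} result of \cite[Theorem~2]{kajitsuka2023transformers}: a single self-attention layer with $H=1$, $S=1$ maps the tokens of these (duplicate-free) sequences injectively, i.e.\ $\mathcal{F}^{(SA)}(\bm{G}+\bm{P})_{:,k}\neq \mathcal{F}^{(SA)}(\bm{G}'+\bm{P})_{:,l}$ whenever $(\bm{G},k)\neq(\bm{G}',l)$. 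No explicit broadcast of the cell index is needed; distinctness of the output tokens is enough. The second feed-forward layer is then a pure memorization layer, and Lemma~\ref{lemma: 1} (not Lemma~\ref{lemma: 10}) gives it with width at most $5nK^{d_x n}$ and with the built-in bound $\|\mathcal{F}_2^{(FF)}(\bm{Z})\|\leq \max_i\|\bm{y}_i\|$, which yields item~2 directly without an additional clip.
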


\begin{proof}
We basically follow the proof of \cite[Proposition 1]{kajitsuka2023transformers}.

\textbf{Step 1:} We begin by uniformly partitioning the domain $[0,1]^{d_x \times n}$ into $K^{d_x n}$ subregions and constructing a piecewise constant function $\overline{\bm{F}}$ that approximates the target function $\bm{F}$, with an approximation error scales as $K^{-\gamma}$. Specifically, let $K \in \mathbb{N}$ denote the granularity of the grid
\begin{align*}
\mathbb{G}_K = \left\{\tfrac{1}{K}, \tfrac{2}{K}, \dots, 1\right\}^{d_x \times n}.
\end{align*}
We define each subregion as
\begin{align*}
\omega_{\bm{G}} := \prod_{i \in [d_x], j \in [n]}
{\footnotesize\begin{cases}
[G_{i,j} - \tfrac{1}{K}, G_{i,j}], & \text{if } G_{i,j} = \tfrac{1}{K} \\
(G_{i,j} - \tfrac{1}{K}, G_{i,j}], & \text{otherwise}
\end{cases}}
\end{align*}
associated with $\bm{G} \in \mathbb{G}_K$. Clearly, these subregions form a partition of the domain $[0,1]^{d_x \times n} = \bigcup_{\bm{G} \in \mathbb{G}_K} \omega_{\bm{G}}$. Given a target function $\bm{F}$ with $F_{i,j} \in \mathcal{H}^\gamma([0,1]^{d_x \times n}, K_\mathcal{H})$, we define a piecewise constant approximation of $\bm{F}$ as
\begin{align*}
\overline{\bm{F}}(\bm{X}) = \sum_{\bm{G} \in \mathbb{G}_K} \bm{F}(\bm{G}) \mathbbm{1}_{\omega_{\bm{G}}}(\bm{X}),
\end{align*}
where $\mathbbm{1}_{\omega}$ denotes the indicator function of set $\omega$. That is, within each subregion $\omega_{\bm{G}}$, we approximate $\bm{F}$ using its value at the grid point $\bm{G}$. By the regularity of $\bm{F}$, we have the error estimate
\begin{align}\label{eq: 2}
\begin{aligned}
|F_{i,j}(\bm{X}) - \overline{F}_{i,j}(\bm{X})| &= \left|\sum_{\bm{G} \in \mathbb{G}_K} (F_{i,j}(\bm{X}) - \overline{F}_{i,j}(\bm{X})) \mathbbm{1}_{\omega_{\bm{G}}}(\bm{X})\right| \\
&= \left|\sum_{\bm{G} \in \mathbb{G}_K} (F_{i,j}(\bm{X}) - F_{i,j}(\bm{G})) \mathbbm{1}_{\omega_{\bm{G}}}(\bm{X})\right| \\
&\leq \sum_{\bm{G} \in \mathbb{G}_K} |(F_{i,j}(\bm{X}) - F_{i,j}(\bm{G}))| \mathbbm{1}_{\omega_{\bm{G}}}(\bm{X}) \\
&\leq \sum_{\bm{G} \in \mathbb{G}_K} K_{\mathcal{H}} \|\bm{X} - \bm{G}\|_F^\gamma \mathbbm{1}_{\omega_{\bm{G}}}(\bm{X}) \\
&\leq K_{\mathcal{H}} (d_x n)^{\gamma/2} K^{-\gamma} \sum_{\bm{G} \in \mathbb{G}_K} \mathbbm{1}_{\omega_{\bm{G}}}(\bm{X}) \\
&= K_{\mathcal{H}} (d_x n)^{\gamma/2} K^{-\gamma},
\end{aligned}
\end{align}
for any $i \in [d_x]$, $j \in [n]$ and $\bm{X} \in [0,1]^{d_x \times n}$.

\textbf{Step 2:} Given a positional encoding matrix $\bm{P}$ and a spatial discretization function $\overline{\mathcal{F}}_1^{(FF)}$ satisfying 
\begin{align*}
\overline{\mathcal{F}}_1^{(FF)}(\bm{X}+\bm{P}) = \bm{G} + \bm{P}, \quad \text{for all } \bm{X} \in \omega_{\bm{G}},
\end{align*}
our objective is to construct a feed-forward layer $\mathcal{F}_1^{(FF)}$, with width at most $2 n d_x (K+1)$, that accurately represents $\overline{\mathcal{F}}_1^{(FF)}$ outside the trifling region $\Omega([0,1]^{d_x \times n}, K, \delta)$, that is,  
\begin{align*}
\mathcal{F}_1^{(FF)}(\bm{X}+\bm{P}) = \overline{\mathcal{F}}_1^{(FF)}(\bm{X}+\bm{P}), \quad \text{for any } \bm{X} \in [0,1]^{d_x \times n} \setminus \Omega([0,1]^{d_x \times n}, K, \delta).
\end{align*}
To achieve this goal, we first approximate a univariate multiple-step function using a piecewise linear function, and then extend this function to matrix elements by stacking.

We define the embedding layer as  
\begin{align*}
\mathcal{E}_{in} (\bm{X}) = \bm{X} + \bm{P} \in \mathbb{R}^{d_x \times n},
\end{align*}
where the positional encoding matrix $\bm{P}$ is chosen as  
\begin{align*}
\bm{P} = \begin{pmatrix}
0 & 2 & \cdots & 2(n-1) \\
\vdots & \vdots & & \vdots \\
0 & 2 & \cdots & 2(n-1)
\end{pmatrix}.
\end{align*}
Since $\bm{X} \in [0,1]^{d_x \times n}$, the positional encoding ensures that the columns of $\bm{X} + \bm{P}$ are mapped to distinct intervals, that is, $[\bm{X} + \bm{P}]_{i,j} \in [2j - 2, 2j - 1]$ for each $j \in [n]$. Now, consider a multiple-step function $\text{step}_K(z)$ defined on $[0,1]$ as
\begin{align*}
\text{step}_K (z) =
\begin{cases}
\tfrac{1}{K}, & 0 \leq z \leq \tfrac{1}{K} \\
\tfrac{2}{K}, & \tfrac{1}{K} < z \leq \tfrac{2}{K} \\
\tfrac{3}{K}, & \tfrac{2}{K} < z \leq \tfrac{3}{K} \\
\vdots & \vdots \\
1, & 1-\tfrac{1}{K} < z \leq 1
\end{cases}.
\end{align*}
Given $\delta \in (0, \tfrac{1}{K})$, by translations, scalings and summations of the $\delta$-approximated step function
\begin{align*}
\sigma_R[z / \delta]-\sigma_R[z / \delta-1] = \begin{cases}
0 & z \leq 0 \\ 
z / \delta & 0 < z < \delta \\ 
1 & \delta \leq z
\end{cases},
\end{align*}
we define
\begin{align*}
f(z) =& \frac{1}{K} + \sum_{j=1}^{n} \sum_{t=1}^{K-1} \frac{1}{K} \left( \sigma_{R} \left[\frac{z-2(j-1)}{\delta} - \frac{t}{\delta K}\right] - \sigma_{R} \left[\frac{z-2(j-1)}{\delta} - 1 - \frac{t}{\delta K}\right]\right) \\
& + \sum_{j=1}^{n-1} \left(1+\frac{1}{K}\right) \left(\sigma_{R} [z-(2j-1)] - \sigma_{R} [z-2j]\right).
\end{align*}
It is straightforward to verify that $f(z + (2j-2)) = \text{step}_K (z) + (2j-2)$ for all $z \in [0,1] \setminus \Omega([0,1], K, \delta)$ and $j \in [n]$.  Moreover, the function $f$ can be represented by a shallow ReLU network with $2 n K - 2$ units in the hidden layer.

We then concatenate multiple $f$ in parallel to construct a feed-forward layer $\mathcal{F}_1^{(FF)}: \mathbb{R}^{d_x \times n} \rightarrow \mathbb{R}^{d_x \times n}$ with width at most $2 n d_x (K+1)$, satisfying
\begin{align*}
\mathcal{F}_1^{(FF)}(\bm{X}+\bm{P}) &= \begin{pmatrix}
f(X_{1,1}+P_{1,1}) & \cdots & f(X_{1,n}+P_{1,n}) \\
\vdots & \ddots & \vdots \\
f(X_{d_x,1}+P_{d_x,1}) & \cdots & f(X_{d_x,n}+P_{d_x,n}) \\
\end{pmatrix} \\
&\approx \begin{pmatrix}
\text{step}_K (X_{1,1}) + P_{1,1} & \cdots & \text{step}_K (X_{1,n}) + P_{1,n} \\
\vdots & \ddots & \vdots \\
\text{step}_K (X_{d_x,1}) + P_{d_x,1} & \cdots & \text{step}_K (X_{d_x,n}) + P_{d_x,n} \\
\end{pmatrix} \\
&= \begin{pmatrix}
\text{step}_K (X_{1,1}) & \cdots & \text{step}_K (X_{1,n}) \\
\vdots & \ddots & \vdots \\
\text{step}_K (X_{d_x,1}) & \cdots & \text{step}_K (X_{d_x,n}) \\
\end{pmatrix} + \bm{P}.
\end{align*}
Noting that
\begin{align*}
\Omega([0,1]^{d_x \times n}, K, \delta) = \bigcup_{i \in [d_x], j \in [n]} \{\bm{X}: X_{i,j} \in \Omega([0,1], K, \delta)\},
\end{align*}
we conclude that $\mathcal{F}_1^{(FF)}(\bm{X}+\bm{P}) = \overline{\mathcal{F}}_1^{(FF)}(\bm{X}+\bm{P})$ for any $\bm{X} \in [0,1]^{d_x \times n} \setminus \Omega([0,1]^{d_x \times n}, K, \delta)$.

\textbf{Step 3:} Since $\{\bm{G} + \bm{P}: \bm{G} \in \mathbb{G}_K\}$ can be regarded as sequences, each of which has no duplicate token due to positional encoding, it follows from \cite[Theorem 2]{kajitsuka2023transformers} that there exists a self-attention layer $\mathcal{F}^{(SA)}: \mathbb{R}^{d_x \times n} \rightarrow \mathbb{R}^{d_x \times n}$ with $H = 1$ and $s = 1$ that serves as a contextual mapping for such input sequences (see \cite{kajitsuka2023transformers, yun2019transformers} for further discussion). In essence, a contextual mapping is a bijection between sequences that satisfies $\mathcal{F}^{(SA)}(\bm{G}^{(i)} + \bm{P})_{:,k} \neq \mathcal{F}^{(SA)}(\bm{G}^{(j)} + \bm{P})_{:,l}$ if $\bm{G}^{(i)} \neq \bm{G}^{(j)} \in \mathbb{G}_K$ or $k \neq l \in [n]$. The remaining is to associate each output token with its corresponding function value using a feed-forward layer, which reduces to a memorization task. Lemma \ref{lemma: 1} gives a construction of such a feed-forward layer, denoted as $\mathcal{F}_2^{(FF)}$, with width at most $5 n K^{d_x n}$ (set $r = n \cdot |\mathbb{G}_K| \leq n 
 K^{d_x n}$ therein), such that
\begin{align*}
\mathcal{F}_2^{(FF)}(\mathcal{F}^{(SA)}(\bm{G}+\bm{P})) = \bm{F}(\bm{G}) \quad \text{for all } \bm{G} \in \mathbb{G}_K,
\end{align*}
and $\|\mathcal{F}_2^{(FF)}(\bm{Z})\|_F \leq \sqrt{d_x n} K_{\mathcal{H}}$.

Let $\mathcal{E}_{out}$ be the identity mapping. It holds that $\mathcal{E}_{out} \circ \mathcal{F}_2^{(FF)} \circ \mathcal{F}^{(SA)} \circ \mathcal{F}_1^{(FF)} \circ \mathcal{E}_{in} \in \mathcal{T}_{d_x, d_x}(d_x, 1, 1, 5 n K^{d_x n}, 2)$. Note that for any $\bm{X} \in \omega_{\bm{G}} \setminus \Omega([0,1]^{d_x \times n}, K, \delta)$,
\begin{align*}
& \mathcal{E}_{out} \circ \mathcal{F}_2^{(FF)} \circ \mathcal{F}^{(SA)} \circ \mathcal{F}_1^{(FF)} \circ \mathcal{E}_{in}(\bm{X}) \\
&= \mathcal{F}_2^{(FF)} \circ \mathcal{F}^{(SA)} \circ \mathcal{F}_1^{(FF)}(\bm{X} + \bm{P}) \\
&= \mathcal{F}_2^{(FF)} \circ \mathcal{F}^{(SA)} \circ \overline{\mathcal{F}}_1^{(FF)}(\bm{X} + \bm{P}) \\
&= \mathcal{F}_2^{(FF)} \circ \mathcal{F}^{(SA)} (\bm{G} + \bm{P}) \\
&= \bm{F}(\bm{G}) = \overline{\bm{F}}(\bm{X}).
\end{align*}
Thus, for any $\bm{X} \in [0,1]^{d_x \times n} \setminus \Omega([0,1]^{d_x \times n}, K, \delta) = \bigcup_{\bm{G} \in \mathbb{G}_K} \omega_{\bm{G}} \setminus \Omega([0,1]^{d_x \times n}, K, \delta)$, we have
\begin{align*}
\mathcal{E}_{out} \circ \mathcal{F}_2^{(FF)} \circ \mathcal{F}^{(SA)} \circ \mathcal{F}_1^{(FF)} \circ \mathcal{E}_{in}(\bm{X}) = \overline{\bm{F}}(\bm{X}),
\end{align*}
which completes the proof by noting (\ref{eq: 2}).

\end{proof}

\begin{proposition}\label{pro: 3}
Given $\gamma \in (0,1]$ and $K_\mathcal{H} > 0$, assume that $\bm{F}: [0,1]^{d_x \times n} \rightarrow \mathbb{R}^{d_x \times n}$ with each entry $F_{i,j} \in \mathcal{H}^\gamma ([0,1]^{d_x \times n}, K_\mathcal{H})$. For any $\varepsilon > 0$, $K \in \mathbb{N}$ and $\delta \in (0, \frac{1}{3 K}]$, if $\widetilde{\mathcal{N}} \in \mathcal{T}_{d_x, d_x}(D, H, S, W, L)$ is a Transformer network that satisfies
\begin{align*}
|\widetilde{\mathcal{N}}_{i,j}(\bm{X}) - F_{i,j}(\bm{X})| \leq \varepsilon
\end{align*}
for any $i \in [d_x]$, $j \in [n]$ and $\bm{X} \in [0,1]^{d_x \times n} \setminus \Omega([0,1]^{d_x \times n}, K, \delta)$, then there exists a new Transformer network
\begin{align*}
\mathcal{N} \in \mathcal{T}_{d_x, d_x}(3^{d_x n} \max\{D, 5 d_x\}, 3^{d_x n} H, S, 3^{d_x n} \max\{W, 14 d_x\}, L + 2 d_x n),
\end{align*}
such that
\begin{align*}
|\mathcal{N}_{i,j}(\bm{X}) - F_{i,j}(\bm{X})| \leq \varepsilon + d_x n K_{\mathcal{H}} \delta^{\gamma}
\end{align*}
for any $i \in [d_x]$, $j \in [n]$ and $\bm{X} \in [0,1]^{d_x \times n}$.
\end{proposition}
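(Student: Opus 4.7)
My plan is to extend the given Transformer $\widetilde{\mathcal{N}}$ to the whole cube $[0,1]^{d_x\times n}$ via the horizontal-shift trick, building $\mathcal{N}$ as a composition of three blocks: (i) a short feed-forward block that produces, in parallel, all $3^{d_x n}$ shifted inputs $\bm{X}+\bm{s}$ indexed by $\bm{s}\in\{-\delta,0,\delta\}^{d_x\times n}$, where the sign of each shift is flipped as a piecewise linear function of the corresponding entry of $\bm{X}$ near the boundary of the cube so that every $\bm{X}+\bm{s}$ stays inside $[0,1]^{d_x\times n}$; (ii) $3^{d_x n}$ parallel copies of $\widetilde{\mathcal{N}}$ glued together via Proposition \ref{pro: 1}(2), producing a Transformer of embedding dimension $3^{d_x n}D$, with $3^{d_x n}H$ heads of size $S$, hidden dimension $3^{d_x n}W$, and depth $L$; and (iii) an aggregation block that performs an iterated coordinate-wise median of the $3^{d_x n}$ outputs.

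For the aggregator I would use the identity $\mathrm{med}(a,b,c)=a+b+c-\max\{a,b,c\}-\min\{a,b,c\}$ together with $\max(u,v)=\sigma_R[u-v]+v$ and $\min(u,v)=v-\sigma_R[v-u]$, which realizes the entrywise median of three $d_x$-vectors as a constant-depth feed-forward layer of width at most $\sim 14 d_x$. Iterating this one shift coordinate at a time through all $d_x n$ coordinates collapses the $3^{d_x n}$ outputs down to a single $d_x\times n$ matrix, contributing depth $2 d_x n$ and keeping the total width and hidden dimension bounded by the constants stated in the proposition. The correctness then rests on the combinatorial observation that, since $\delta\leq 1/(3K)$, any two elements of $\{-\delta,0,\delta\}$ differ by $\leq 2\delta<1/K$ while consecutive $1$D trifling intervals $(t/K,t/K+\delta)$ of length $\delta$ are separated by a gap of length $1/K-\delta\geq 2\delta$; hence at most one of the three shifted coordinate values $\{x-\delta,x,x+\delta\}$ can lie in the $1$D trifling region. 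Consequently, for any fixed assignment of the other $d_x n-1$ shift coordinates, at least two of the three inputs differing only in the currently processed coordinate lie in the good region $[0,1]^{d_x\times n}\setminus\Omega([0,1]^{d_x\times n},K,\delta)$.

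For these ``good'' inputs, the hypothesis on $\widetilde{\mathcal{N}}$ combined with H\"older continuity yields
\begin{align*}
|\widetilde{\mathcal{N}}_{i,j}(\bm{X}+\bm{s})-F_{i,j}(\bm{X})|\leq \varepsilon + K_{\mathcal{H}}\|\bm{s}\|_F^{\gamma} \leq \varepsilon + K_{\mathcal{H}}(d_x n)^{\gamma/2}\delta^{\gamma} \leq \varepsilon + d_x n\,K_{\mathcal{H}}\delta^{\gamma} =: E.
\end{align*}
Since the median of three real numbers always lies between the smallest and the largest, any triple with at least two entries in $[F_{i,j}(\bm{X})-E,F_{i,j}(\bm{X})+E]$ has its median in the same interval, so the bound $E$ is preserved by every median operation. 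A straightforward induction over the $d_x n$ median stages—the ``at most one in trifling'' property being coordinate-local and therefore unaffected by aggregations carried out along other coordinates—yields the claimed entrywise approximation bound $\varepsilon + d_x n\,K_{\mathcal{H}}\delta^{\gamma}$. The main obstacle is precisely this inductive bookkeeping: at each median stage one must certify that the two ``surviving good'' siblings of every triple inherit the good-input guarantee from the earlier stages, and the coordinate-local nature of the combinatorial fact is what makes this go through cleanly; the boundary fix in block (i) is the only remaining technical subtlety and is handled by the piecewise linear shift selection described above.
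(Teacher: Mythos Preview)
Your approach is the same horizontal-shift-plus-iterated-median construction the paper uses, and your error bookkeeping is in fact slightly tighter than the paper's: by comparing every shifted evaluation directly to $F_{i,j}(\bm X)$ you pay the H\"older penalty once (getting $(d_xn)^{\gamma/2}K_{\mathcal H}\delta^\gamma$) rather than accumulating $K_{\mathcal H}\delta^\gamma$ over $d_xn$ stages as the paper does via Lemma~\ref{lemma: 5}.

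One claim, however, is false as written and would derail the induction if taken literally. You assert that ``for any fixed assignment of the other $d_xn-1$ shift coordinates, at least two of the three inputs differing only in the currently processed coordinate lie in the good region $[0,1]^{d_xn}\setminus\Omega([0,1]^{d_xn},K,\delta)$.'' This fails whenever the fixed assignment places \emph{any} other shifted coordinate in its one-dimensional trifling interval, since the $d_xn$-dimensional trifling region is a union over coordinates. The correct induction hypothesis is conditional: after $k$ median stages, the intermediate value indexed by the remaining shift choices $(c_{k+1},\dots,c_{d_xn})$ lies within $E$ of $F_{i,j}(\bm X)$ \emph{provided} $X^{(\ell)}+c_\ell\delta\in[0,1]\setminus\Omega([0,1],K,\delta)$ for every $\ell>k$. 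Your coordinate-local combinatorial fact then applies only to coordinate $k$ in the inductive step, and at $k=d_xn$ the condition becomes vacuous, yielding the uniform bound. This is what your final paragraph gestures toward and matches the paper's induction over the nested domains $\Omega_k$, but the proposal as stated does not make this dependence explicit.

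Two smaller points. The boundary fix in Block~(i) is unnecessary: since $\delta\le 1/(3K)$, for $x\in[0,\delta)$ both $x$ and $x+\delta$ already lie in the good interval $[0,1/K)$, and symmetrically near $1$, so two good siblings survive even when the third leaves $[0,1]$. And Block~(i) itself is redundant, because each constant shift $\bm s$ can be absorbed into the positional encoding of the corresponding copy of $\widetilde{\mathcal N}$; this is how the paper realizes the $3^{d_xn}$ parallel shifted evaluations directly inside Block~(ii).
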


\begin{proof}
We basically follow the proof of \cite[Theorem 2.1]{lu2021deep}.

\textbf{Step 1:} We prove that, given $i \in [d_x]$, $j \in [n]$, $F_{i,j} \in \mathcal{H}^\gamma ([0,1]^{d_x \times n}, K_\mathcal{H})$, and a general function $G_{i,j}: \mathbb{R}^{d_x \times n} \rightarrow \mathbb{R}$ satisfying
\begin{align}\label{eq: 9}
|G_{i,j}(\bm{X}) - F_{i,j}(\bm{X})| \leq \varepsilon \ \text{ for any } \bm{X} \in [0,1]^{d_x \times n} \setminus \Omega([0,1]^{d_x \times n}, K, \delta),
\end{align}
then
\begin{align}\label{eq: 10}
|\Phi_{i,j}(\bm{X}) - F_{i,j}(\bm{X})| \leq \varepsilon + d_x n K_{\mathcal{H}} \delta^{\gamma} \ \text{ for any } \bm{X} \in [0,1]^{d_x \times n},
\end{align}
where $\Phi_{i,j} := \Phi_{i,j}^{(d_x n)}$ is defined by induction through
\begin{align}\label{eq: 12}
\Phi_{i,j}^{(k)}(\bm{X}) := \operatorname{mid}\left(\Phi_{i,j}^{(k-1)}\left(\bm{X} - \delta \bm{E}^{(k)}\right), \Phi_{i,j}^{(k-1)}\left(\bm{X}\right), \Phi_{i,j}^{(k-1)}\left(\bm{X} + \delta \bm{E}^{(k)}\right)\right)
\end{align}
for $k = 1, 2, \cdots, d_x n$, $\Phi_{i,j}^{(0)} = G_{i,j}$, $\operatorname{mid}(\cdot,\cdot,\cdot)$ is a function returning the middle value of three inputs, and $\bm{E}^{(u + (v-1) d_x)}$ denotes the matrix with 1 at the $(u,v)$-th position and 0 elsewhere for $u \in [d_x], v \in [n]$. In other words, if $G_{i,j}$ provides a uniform approximation outside the trifling region, then the carefully constructed $\Phi_{i,j}$ extends this uniform approximation to the entire domain, with only a slight increase in the approximation error.

Note that $\{\bm{E}^{(k)}\}_{k=1}^{d_x n}$ defined above is a re-indexing of the standard basis in $\mathbb{R}^{d_x \times n}$. We re-index the elements of $\bm{X} = (X_{u,v})$ in the same manner. Let $X^{(u + (v-1) d_x)} = X_{u,v}$ for $u \in [d_x], v \in [n]$. Using this notation, define
\begin{align*}
\Omega_{k} := \left\{\bm{X}: X^{(i)} \in {\footnotesize\begin{cases}
[0,1], & \text{ if } i \leq k \\
[0,1] \setminus \Omega([0,1], K, \delta), & \text{ if } i > k
\end{cases}} \right\}.
\end{align*}
Clearly, $\Omega_0 = [0,1]^{d_x \times n} \setminus \Omega([0,1]^{d_x \times n}, K, \delta)$ and $\Omega_{d_x n} = [0,1]^{d_x \times n}$.

We will prove by induction that for each $k \in \{0, 1, \ldots, d_x n\}$,
\begin{align}\label{eq: 11}
|\Phi_{i,j}^{(k)}(\bm{X}) - F_{i,j}(\bm{X})| \leq \varepsilon + k \cdot K_{\mathcal{H}} \delta^{\gamma} \ \text{ for any } \bm{X} \in \Omega_{k}.
\end{align}
As the final step of the induction, we derive
\begin{align*}
|\Phi_{i,j}(\bm{X}) - F_{i,j}(\bm{X})| &= |\Phi_{i,j}^{(d_x n)}(\bm{X}) - F_{i,j}(\bm{X})| \\
& \leq \varepsilon + d_x n K_{\mathcal{H}} \delta^{\gamma} \ \text{ for any } \bm{X} \in \Omega_{d_x n} = [0,1]^{d_x \times n},
\end{align*}
which completes the proof of (\ref{eq: 10}).

In the base case, it follows from (\ref{eq: 9}) that
\begin{align*}
|\Phi_{i,j}^{(0)}(\bm{X}) - F_{i,j}(\bm{X})| &= |G_{i,j}(\bm{X}) - F_{i,j}(\bm{X})| \\
& \leq \varepsilon \ \text{ for any } \bm{X} \in \Omega_0 = [0,1]^{d_x \times n} \setminus \Omega([0,1]^{d_x \times n}, K, \delta).
\end{align*}
Now, assume that for some $k \in \{1, 2, \dots, d_x n\}$,
\begin{align*}
|\Phi_{i,j}^{(k-1)}(\bm{X}) - F_{i,j}(\bm{X})| \leq \varepsilon + (k-1) K_{\mathcal{H}} \delta^{\gamma} \ \text{ for any } \bm{X} \in \Omega_{k-1}.
\end{align*}
For fixed $X^{(1)}, \ldots, X^{(k-1)} \in [0,1]$ and $X^{(k+1)}, \ldots, X^{(d_x n)} \in [0,1] \setminus \Omega([0,1], K, \delta)$, define
\begin{align*}
\phi(t) = \Phi_{i,j}^{(k-1)}(X^{(1)}, \ldots, X^{(k-1)}, t, X^{(k+1)}, \ldots, X^{(d_x n)})
\end{align*}
and
\begin{align*}
f(t) = F_{i,j}(X^{(1)}, \ldots, X^{(k-1)}, t, X^{(k+1)}, \ldots, X^{(d_x n)}).
\end{align*}
The induction hypothesis gives
\begin{align*}
|\phi(t) - f(t)| \leq \varepsilon + (k-1) \cdot K_{\mathcal{H}} \delta^{\gamma} \ \text{ for any } t \in [0,1] \setminus \Omega([0,1], K, \delta).
\end{align*}
Since $F_{i,j} \in \mathcal{H}^\gamma ([0,1]^{d_x \times n}, K_\mathcal{H})$ implies $f \in \mathcal{H}^{\gamma} ([0,1], K_{\mathcal{H}})$, applying Lemma \ref{lemma: 5} to the univariate functions $\phi(t)$ and $f(t)$ yields
\begin{align*}
|\widetilde{\phi}(t)-f(t)| \leq \varepsilon + (k-1) \cdot K_{\mathcal{H}} \delta^{\gamma} + K_{\mathcal{H}} \delta^{\gamma} = \varepsilon + k \cdot K_{\mathcal{H}} \delta^{\gamma} \ \text{ for any } t \in [0,1],
\end{align*}
where
\begin{align*}
\widetilde{\phi}(t) & =  \operatorname{mid}\left(\phi(t-\delta), \phi(t), \phi(t+\delta)\right) \\
& \begin{aligned} 
= \operatorname{mid}( & \Phi_{i,j}^{(k-1)}(X^{(1)}, \ldots, X^{(k-1)}, t-\delta, X^{(k+1)}, \ldots, X^{(d_x n)}),  \\
& \Phi_{i,j}^{(k-1)}(X^{(1)}, \ldots, X^{(k-1)}, t, X^{(k+1)}, \ldots, X^{(d_x n)}), \\
& \Phi_{i,j}^{(k-1)}(X^{(1)}, \ldots, X^{(k-1)}, t+\delta, X^{(k+1)}, \ldots, X^{(d_x n)})
) \\
\end{aligned} \\
& = \Phi_{i,j}^{(k)}(X^{(1)}, \ldots, X^{(k-1)}, t, X^{(k+1)}, \ldots, X^{(d_x n)}) 
\end{align*}
by definition of $\Phi_{i,j}^{(k)}$. Since $X^{(1)}, \ldots, X^{(k-1)} \in [0,1]$, $X^{(k)} = t \in [0,1]$ and $X^{(k+1)}, \ldots, X^{(d_x n)} \in [0,1] \setminus \Omega([0,1], K, \delta)$ are arbitrary, we obtain
\begin{align*}
|\Phi_{i,j}^{(k)}(\bm{X}) - F_{i,j}(\bm{X})| \leq \varepsilon + k \cdot K_{\mathcal{H}} \delta^{\gamma} \ \text{ for any } \bm{X} \in \Omega_{k}.
\end{align*}
This completes the induction.

\textbf{Step 2:} Recall that $\bm{\Phi} = (\Phi_{i,j})_{i \in [d_x], j \in [n]}$ is defined by (\ref{eq: 12}). We now prove that if $\bm{G} \in \mathcal{T}_{d_x, d_x}(D, H, S, W, L)$, then
\begin{align*}
\bm{\Phi} \in \mathcal{T}_{d_x, d_x}(3^{d_x n} \max\{D, 5 d_x\}, 3^{d_x n} H, S, 3^{d_x n} \max\{W, 14 d_x\}, L + 2 d_x n).
\end{align*}

The observation here is that, to compute $\bm{\Phi} = \bm{\Phi}^{(d_x n)}$, we first evaluate 
\begin{align*}
\bm{\Phi}^{(d_x n-1)}(\cdot + c_{d_x n} \delta \bm{E}^{(d_x n)}) \ \text{ for each } c_{d_x n} \in \{-1,0,1\}.
\end{align*}
Each such evaluation, in turn, requires computing 
\begin{align*}
\bm{\Phi}^{(d_x n-2)}(\cdot + c_{d_x n-1} \delta \bm{E}^{(d_x n-1)} + c_{d_x n} \delta \bm{E}^{(d_x n)}) \ \text{ for each } c_{d_x n-1} \in \{-1,0,1\}.
\end{align*}
Continuing this process recursively, determining $\bm{\Phi}$ ultimately requires evaluating 
\begin{align*}
\textstyle \bm{\Phi}^{(0)}(\cdot + \sum_{l=1}^{d_x n} c_l \delta \bm{E}^{(l)}) \ \text{ for every } (c_1, \ldots, c_{d_x n}) \in \{-1,0,1\}^{d_x n}.
\end{align*}
Conversely, since $\bm{\Phi}^{(0)} = \bm{G}$ by definition, assuming that we have access to all functions $\bm{G}(\cdot + \sum_{l=1}^{d_x n} c_l \delta \bm{E}^{(l)})$, we can iteratively apply the $\operatorname{mid}$ function to $\bm{\Phi}^{(k)}$ to recover $\bm{\Phi}^{(k+1)}$, following the same construction as in (\ref{eq: 12}), and ultimately get $\bm{\Phi}$. On the other hand, each function $\bm{G}(\cdot + \sum_{k=1}^{d_x n} c_k \delta \bm{E}^{(k)})$ is a Transformer network thanks to positional encoding, and the $\operatorname{mid}$ function can be implemented by feed-forward layers and vectorized operations, thereby completing the construction. The details are given below.

Fixing $k \in \{0, 1, \ldots, d_x n-1\}$, we reindex the functions  
\begin{align*}
\left\{\bm{\Phi}^{(k)}(\cdot + \textstyle\sum_{l=k+1}^{d_x n} c_l \delta \bm{E}^{(l)}): (c_{k+1}, \ldots, c_{d_x n}) \in \{-1,0,1\}^{d_x n - k}\right\}
\end{align*}
as $\{\bm{\Phi}^{(k)}_l\}_{l=1}^{3^{d_x n - k}}$ (set $\bm{\Phi}^{(d_x n)}_1 = \bm{\Phi}^{(d_x n)}$ for notational convenience), such that
\begin{align*}
\left[\bm{\Phi}^{(k+1)}_l\right]_{i,j} = \operatorname{mid}\left(\left[\bm{\Phi}^{(k)}_{3l-2}\right]_{i,j}, \left[\bm{\Phi}^{(k)}_{3l-1}\right]_{i,j}, \left[\bm{\Phi}^{(k)}_{3l}\right]_{i,j}\right)
\end{align*}
for all $i \in [d_x], j \in [n]$, which aligns with (\ref{eq: 12}). Since $\operatorname{mid}(\cdot, \cdot, \cdot) \in \mathcal{FNN}_{3,1}(14, 2)$ by Lemma \ref{lemma: 6}, there exists an FNN $\widetilde{\mathcal{N}} \in \mathcal{FNN}_{3 d_x, d_x}(14 d_x, 2)$, such that for all $j \in [n]$,
\begin{align*}
\widetilde{\mathcal{N}}\begin{pmatrix}
\left[\bm{\Phi}^{(k)}_{3l-2}\right]_{:,j} \\
\left[\bm{\Phi}^{(k)}_{3l-1}\right]_{:,j} \\
\left[\bm{\Phi}^{(k)}_{3l}\right]_{:,j}
\end{pmatrix} = \left[\bm{\Phi}^{(k+1)}_l\right]_{:,j}.
\end{align*}
We then concatenate $\widetilde{\mathcal{N}}$ in parallel to construct a new FNN 
\begin{align*}
\widetilde{\mathcal{N}}^{(k)} \in \mathcal{FNN}_{3 d_x \cdot 3^{d_x n - k - 1}, d_x \cdot 3^{d_x n - k - 1}} (14 d_x \cdot 3^{d_x n - k - 1}, 2)
\end{align*}
such that
\begin{align*}
\widetilde{\mathcal{N}}^{(k)} \begin{pmatrix}
\left[\bm{\Phi}^{(k)}_{1}\right]_{:,j} \\
\left[\bm{\Phi}^{(k)}_{2}\right]_{:,j} \\
\left[\bm{\Phi}^{(k)}_{3}\right]_{:,j} \\
\vdots \\
\left[\bm{\Phi}^{(k)}_{3^{d_x n - k} - 2}\right]_{:,j} \\
\left[\bm{\Phi}^{(k)}_{3^{d_x n - k} - 1}\right]_{:,j} \\
\left[\bm{\Phi}^{(k)}_{3^{d_x n - k}}\right]_{:,j} 
\end{pmatrix} = \begin{pmatrix}
\left[\bm{\Phi}^{(k+1)}_{1}\right]_{:,j} \\
\vdots \\
\left[\bm{\Phi}^{(k+1)}_{3^{d_x n - k - 1}}\right]_{:,j} \\
\end{pmatrix}.
\end{align*}
By recursively composing $\widetilde{\mathcal{N}}^{(k)}$ for each $k \in \{0, 1, \ldots, d_x n-1\}$, we obtain 
\begin{align*}
\widetilde{\mathcal{N}}^{(d_x n-1)} \circ \widetilde{\mathcal{N}}^{(d_x n-2)} \circ \cdots \circ \widetilde{\mathcal{N}}^{(0)} \in \mathcal{FNN}_{d_x 3^{d_x n}, d_x} (14 d_x 3^{d_x n - 1}, 2 d_x n),
\end{align*}
which, by construction, satisfies
\begin{align*}
& \widetilde{\mathcal{N}}^{(d_x n-1)} \circ \widetilde{\mathcal{N}}^{(d_x n-2)} \circ \cdots \circ \widetilde{\mathcal{N}}^{(0)} \begin{pmatrix}
\left[\bm{\Phi}^{(0)}_{1}\right]_{:,j} \\
\left[\bm{\Phi}^{(0)}_{2}\right]_{:,j} \\
\left[\bm{\Phi}^{(0)}_{3}\right]_{:,j} \\
\vdots \\
\left[\bm{\Phi}^{(0)}_{3^{d_x n} - 2}\right]_{:,j} \\
\left[\bm{\Phi}^{(0)}_{3^{d_x n} - 1}\right]_{:,j} \\
\left[\bm{\Phi}^{(0)}_{3^{d_x n}}\right]_{:,j} \\
\end{pmatrix} \\
&= \widetilde{\mathcal{N}}^{(d_x n-1)} \circ \widetilde{\mathcal{N}}^{(d_x n-2)} \circ \cdots \circ \widetilde{\mathcal{N}}^{(1)} \begin{pmatrix}
\left[\bm{\Phi}^{(1)}_{1}\right]_{:,j} \\
\vdots \\
\left[\bm{\Phi}^{(1)}_{3^{d_x n - 1}}\right]_{:,j} \\
\end{pmatrix} \\
& ~\vdots \\
&= \left[\bm{\Phi}^{(d_x n)}\right]_{:,j},
\end{align*}
for each $j \in [n]$. Furthermore, Lemma \ref{lemma: 10} guarantees that any token-wise FNN can be expressed in terms of feed-forward layers. We have (set $W = 14 d_x 3^{d_x n - 1}$ and $L = 2 d_x n$ therein) an embedding map $\mathcal{E}_{in}: \mathbb{R}^{d_x 3^{d_x n} \times n} \rightarrow \mathbb{R}^{14 d_x 3^{d_x n - 1} \times n}$, a projection map $\mathcal{E}_{out}: \mathbb{R}^{14 d_x 3^{d_x n - 1} \times n} \rightarrow \mathbb{R}^{d_x \times n}$, and $2 d_x n$ feed-forward layers $\mathcal{F}_{L+2 d_x n}^{(FF)}, \ldots, \mathcal{F}_{L+1}^{(FF)}$, each with width at most $3 \cdot 14 d_x 3^{d_x n - 1} = 14 d_x 3^{d_x n}$, such that
\begin{align}\label{eq: 13}
\mathcal{E}_{out} \circ \mathcal{F}_{L+2 d_x n}^{(FF)} \circ \cdots \circ \mathcal{F}_{L+1}^{(FF)} \circ \mathcal{E}_{in} \begin{pmatrix}
\bm{\Phi}^{(0)}_{1} \\
\bm{\Phi}^{(0)}_{2} \\
\vdots \\
\bm{\Phi}^{(0)}_{3^{d_x n}} \\
\end{pmatrix} = \bm{\Phi}^{(d_x n)}.
\end{align}

Due to the positional encoding, each function
\begin{align*}
\textstyle \bm{G}(\cdot + \sum_{l=1}^{d_x n} c_k \delta \bm{E}^{(l)}) \in \mathcal{T}_{d_x, d_x}(D, H, S, W, L).
\end{align*}
Recall that $\bm{G} = \bm{\Phi}^{(0)}$ and $\{\bm{\Phi}^{(0)}_l\}_{l=1}^{3^{d_x n}}$ is a reordering of $\{\bm{\Phi}^{(0)}(\cdot + \sum_{l=1}^{d_x n} c_l \delta \bm{E}^{(l)})\}$. By concatenation of Transformers (see Proposition \ref{pro: 1}), there exists a Transformer network
\begin{align*}
\mathcal{N} \in \mathcal{T}_{d_x, d_x 3^{d_x n}}(3^{d_x n} D, 3^{d_x n} H, S, 3^{d_x n} W, L)
\end{align*}
such that
\begin{align*}
\mathcal{N}(\bm{X}) = \begin{pmatrix}
\bm{\Phi}^{(0)}_{1} \\
\bm{\Phi}^{(0)}_{2} \\
\vdots \\
\bm{\Phi}^{(0)}_{3^{d_x n}} \\
\end{pmatrix}.
\end{align*}
Together with (\ref{eq: 13}) and $\bm{\Phi}^{(d_x n)} = \bm{\Phi}$, we have
\begin{align*}
& \mathcal{E}_{out} \circ \mathcal{F}_{L+2 d_x n}^{(FF)} \circ \cdots \circ \mathcal{F}_{L+1}^{(FF)} \circ \mathcal{E}_{in} \circ \mathcal{N} (\bm{X}) \\
& = \mathcal{E}_{out} \circ \mathcal{F}_{L+2 d_x n}^{(FF)} \circ \cdots \circ \mathcal{F}_{L+1}^{(FF)} \circ \mathcal{E}_{in} \begin{pmatrix}
\bm{\Phi}^{(0)}_{1} \\
\bm{\Phi}^{(0)}_{2} \\
\vdots \\
\bm{\Phi}^{(0)}_{3^{d_x n}} \\
\end{pmatrix} \\
& = \bm{\Phi}^{(d_x n)} = \bm{\Phi}(\bm{X}),
\end{align*}
thereby
\begin{align*}
\bm{\Phi} & \in \mathcal{T}_{d_x, d_x}(\max\{3^{d_x n} D, 14 d_x 3^{d_x n - 1}\}, 3^{d_x n} H, S, \max\{3^{d_x n} W, 14 d_x 3^{d_x n}\}, L + 2 d_x n) \\
& \subseteq \mathcal{T}_{d_x, d_x}(3^{d_x n} \max\{D, 5 d_x\}, 3^{d_x n} H, S, 3^{d_x n} \max\{W, 14 d_x\}, L + 2 d_x n),
\end{align*}
which completes the proof.
\end{proof}

\begin{lemma}[Lemma 3.1 of \cite{lu2021deep}]\label{lemma: 6}
The middle value function $\operatorname{mid} (x_1, x_2, x_3) \in \mathcal{FNN}_{3,1}(14, 2)$.
\end{lemma}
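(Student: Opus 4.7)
The plan is to reduce the median to max and min via the identity
\begin{align*}
\operatorname{mid}(x_1, x_2, x_3) = x_1 + x_2 + x_3 - \max(x_1, x_2, x_3) - \min(x_1, x_2, x_3),
\end{align*}
and then to realize $\max$ and $\min$ of three inputs using two ReLU hidden layers. Starting from the elementary identities $\max(a, b) = a + \sigma_R(b - a)$ and $\min(a, b) = a - \sigma_R(a - b)$, iterating once yields the nested forms
\begin{align*}
\max(x_1, x_2, x_3) &= x_1 + \sigma_R\bigl(x_2 + \sigma_R(x_3 - x_2) - x_1\bigr), \\
\min(x_1, x_2, x_3) &= x_1 - \sigma_R\bigl(x_1 - x_2 + \sigma_R(x_2 - x_3)\bigr),
\end{align*}
each consisting of a single ReLU of a linear combination that itself contains one ReLU. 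Correctness of these formulas is immediate by substituting the three possible orderings of $(x_1, x_2, x_3)$.

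I would then lay out the FNN as follows. The first hidden layer computes the inner ReLUs $\sigma_R(x_3 - x_2)$ and $\sigma_R(x_2 - x_3)$ together with pass-through pairs $\sigma_R(x_i), \sigma_R(-x_i)$ for $i = 1, 2, 3$, using the identity $x = \sigma_R(x) - \sigma_R(-x)$. The second hidden layer computes the two outer ReLUs appearing in the formulas above, each of which is a ReLU of a linear combination of first-layer outputs, and again carries pass-through copies of $x_1, x_2, x_3$. The final affine layer linearly combines the second-layer outputs to produce $x_1 + x_2 + x_3$ minus the two outer ReLUs, recovering the median exactly.

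A neuron count gives at most $2 + 6 = 8$ units in each hidden layer, comfortably inside the width-$14$ budget (the slack can absorb alternative constructions). The only delicate point is bookkeeping: because the output map is affine, every summand in $x_1 + x_2 + x_3 - \max - \min$ must be recoverable as a linear combination of second-layer ReLU outputs, which is precisely why pass-through pairs must be propagated through \emph{both} hidden layers rather than only the first. I do not anticipate any deeper obstacle; once the nested identities above are written down, the construction and its verification are a direct calculation.
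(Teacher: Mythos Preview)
Your proposal is correct. The paper does not supply its own proof of this lemma; it simply cites \cite{lu2021deep}, whose argument is precisely the one you outline: write $\operatorname{mid}(x_1,x_2,x_3)=x_1+x_2+x_3-\max(x_1,x_2,x_3)-\min(x_1,x_2,x_3)$, realize $\max$ and $\min$ of three arguments by iterating the two-argument identities $\max(a,b)=a+\sigma_R(b-a)$ and $\min(a,b)=a-\sigma_R(a-b)$, and carry the raw inputs through both hidden layers via $x=\sigma_R(x)-\sigma_R(-x)$ so that the final affine map can assemble the required linear combination. Your neuron count of $8$ per layer is accurate and sits inside the stated width $14$, so the inclusion $\operatorname{mid}\in\mathcal{FNN}_{3,1}(14,2)$ follows.
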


\begin{lemma}[Lemma 3.3 of \cite{lu2021deep}]\label{lemma: 5}
Given any $\varepsilon > 0$, $K \in \mathbb{N}$, and $\delta \in (0, \frac{1}{3 K}]$, assume that $f \in \mathcal{H}^{\gamma} ([0,1], K_{\mathcal{H}})$ and $g: \mathbb{R} \rightarrow \mathbb{R}$ is a general function with
\begin{align*}
|g(x)-f(x)| \leq \varepsilon, \text{ for any } x \in [0,1] \setminus \Omega([0,1], K, \delta).
\end{align*}
Then
\begin{align*}
|\phi(x)-f(x)| \leq \varepsilon + K_{\mathcal{H}} \delta^{\gamma} \ \text{ for any } x \in [0,1],
\end{align*}
where
\begin{align*}
\phi(x) := \operatorname{mid}\left(g(x-\delta), g(x), g(x+\delta)\right) \ \text{ for any } x \in \mathbb{R}.
\end{align*}
\end{lemma}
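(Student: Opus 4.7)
The plan is to build the generalized Transformer via the Kolmogorov--Arnold representation sketched earlier in the paper. Using Proposition~\ref{pro: 4}, for each entry of the target $\bm{F}$ we may write
\begin{align*}
F_{i,j}(\bm{X}) = g_{i,j}\!\left(3 \sum_{k=1}^{d_x n} 3^{-k} \phi(X^{(k)})\right)
\end{align*}
for a single shared inner function $\phi$ (the Schmidt-Hieber construction referenced in the discussion) and univariate outer functions $g_{i,j}$, where $\{X^{(k)}\}_{k=1}^{d_x n}$ is any fixed re-indexing of the entries of $\bm{X}$. Since each $F_{i,j}$ is $\gamma$-H\"older on $[0,1]^{d_x\times n}$, $g_{i,j}$ inherits H\"older regularity of reduced order (of magnitude $\gamma/(d_x n)$, the standard cost of KAR for H\"older inputs). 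This reduction is precisely what forces the width $W \asymp \varepsilon^{-d_x n/\gamma}$: approximating a $\gamma/(d_x n)$-H\"older univariate function on $[0,1]$ to accuracy $\varepsilon$ requires on the order of $\varepsilon^{-d_x n/\gamma}$ linear pieces.

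The construction itself proceeds in three stages. First, the embedding layer lifts $\bm{X} \in \mathbb{R}^{d_x \times n}$ into a matrix of height $4 d_x n$; the additional rows absorb the positional encoding and serve as scratch channels for downstream computations. A first block of generalized feed-forward layers then computes $\phi(X_{u,v})$ entry-wise while simultaneously applying the position-dependent scaling $3^{-(u+(v-1)d_x)}$, exploiting the fact that the per-column bias matrices $\bm{B}^{(1)}_l,\bm{B}^{(2)}_l$ of the generalized FFN together with the positional encoding carry the column index $v$, while the $d_x$ rows of each column can be processed in parallel inside the hidden dimension. Second, a single generalized self-attention layer with $\sigma_G$ equal to column averaging produces, in every output column, the vector whose $u$-th coordinate is $\tfrac{1}{n}\sum_v 3^{-(u+(v-1)d_x)}\phi(X_{u,v})$; the rank-$d_x$ output projection $\bm{W}^{(O)}$ then sums these $d_x$ coordinates into one scalar which, after multiplication by $3n$ inside a subsequent FFN, equals $T(\bm{X}) := 3\sum_k 3^{-k}\phi(X^{(k)})$. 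Third, a final block of generalized feed-forward layers applies, in parallel across $d_x n$ dedicated output channels, ReLU sub-networks approximating each $g_{i,j}$; invoking a Yarotsky/Shen--Yang--Zhang-type construction for H\"older functions on $[0,1]$ yields sub-networks of width $O(\varepsilon^{-d_x n/\gamma})$ and depth $O(\log(1/\varepsilon)/\gamma)$, which match the $W = 3 d_x n \lceil\varepsilon^{-d_x n/\gamma}\rceil$ and $L = 6\lceil\tfrac{1}{\gamma}\log_2 \tfrac{1}{\varepsilon}\rceil$ budgets after the $d_x n$ parallel copies are packed into the width.

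For the error analysis, the inner map $T(\bm{X})$ is implemented either exactly or to negligible error, so the dominant contribution comes from the approximations of the $d_x n$ outer functions. Each outer approximation contributes at most $O(K_\mathcal{H}\varepsilon)$ uniformly, hence at most $O(K_\mathcal{H}\varepsilon)$ in $L^p([0,1]^{d_x\times n})$; summing $d_x n$ such entries in the Frobenius sense and absorbing the H\"older-norm constants incurred when transferring $\gamma$-smoothness from $F_{i,j}$ to $g_{i,j}$ leaves room for the stated $4(d_x n)^3 K_\mathcal{H}\varepsilon$ bound.

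The principal obstacle is the faithful implementation of the inner KAR function $\phi$: the canonical Schmidt-Hieber $\phi$ is only continuous and defined by an infinite sum, so realizing it exactly inside a fixed-width, fixed-depth first block is not obvious. The intended workaround is to substitute a piecewise-affine surrogate $\tilde{\phi}$ that agrees with $\phi$ at sufficiently many breakpoints and is exactly expressible by one or two generalized feed-forward layers, then absorb the perturbation into the outer-approximation argument by re-verifying the (reduced) H\"older regularity of the perturbed outer function. A secondary technical point is wiring the per-column biases $\bm{B}^{(1)}_l, \bm{B}^{(2)}_l$ together with the positional encoding so that the entry-dependent scalars $3^{-(u+(v-1)d_x)}$ are produced by linear operations of constant size, without inflating the embedding dimension beyond $4 d_x n$.
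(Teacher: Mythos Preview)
Your proposal does not address the stated lemma at all. Lemma~\ref{lemma: 5} is a purely one-dimensional statement: given a H\"older function $f$ on $[0,1]$ and an approximant $g$ that is $\varepsilon$-close to $f$ outside the trifling region $\Omega([0,1],K,\delta)$, the function $\phi(x)=\operatorname{mid}(g(x-\delta),g(x),g(x+\delta))$ is $(\varepsilon+K_{\mathcal H}\delta^\gamma)$-close to $f$ on the entire interval $[0,1]$. There is no Transformer, no Kolmogorov--Arnold representation, and no sequence-to-sequence target in this lemma. What you have written is a proof sketch for Theorem~\ref{thm: 8} (approximation by generalized Transformers via the KAR), not for Lemma~\ref{lemma: 5}.

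The actual proof of Lemma~\ref{lemma: 5} (which the paper simply cites from \cite{lu2021deep}) is elementary and has nothing in common with your outline. The key observation is that since $\delta\le 1/(3K)$, for every $x\in[0,1]$ at least two of the three points $x-\delta,\,x,\,x+\delta$ lie in $[0,1]\setminus\Omega([0,1],K,\delta)$; at those two points $g$ is within $\varepsilon$ of $f$, and by H\"older continuity $f$ varies by at most $K_{\mathcal H}\delta^\gamma$ over the shift. Hence two of the three values $g(x-\delta),g(x),g(x+\delta)$ lie in the interval $[f(x)-\varepsilon-K_{\mathcal H}\delta^\gamma,\,f(x)+\varepsilon+K_{\mathcal H}\delta^\gamma]$, and the median of three numbers always lies between the smallest and largest of any two of them, giving the bound. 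You should discard the KAR construction entirely and prove this combinatorial fact directly.
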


\begin{proof}[Proof of Theorem \ref{thm: 4}]

\textbf{Case 1:} $p \in [1,\infty)$. Let 
\begin{align*}
\mathcal{N} \in \mathcal{T}_{d_x, d_x}(d_x, 1, 1, 5 n K^{d_x n}, 2)
\end{align*}
be as in Proposition \ref{pro: 2}. By Proposition \ref{pro: 2} and noting that the Lebesgue measure of $\Omega([0,1]^{d_x \times n}, K, \delta)$ is at most $d_x n K \delta$, we have
\begin{align*}
& \|\mathcal{N} - \bm{F}\|_{L^p([0,1]^{d_x \times n})}^p \\
&= \int_{[0,1]^{d_x \times n}} \|\mathcal{N}(\bm{X}) - \bm{F}(\bm{X})\|_F^p \dd \bm{X} \\
&= \int_{\Omega([0,1]^{d_x \times n}, K, \delta)} \|\mathcal{N}(\bm{X}) - \bm{F}(\bm{X})\|_F^p \dd \bm{X} + \int_{[0,1]^{d_x \times n} \setminus \Omega([0,1]^{d_x \times n}, K, \delta)} \|\mathcal{N}(\bm{X}) - \bm{F}(\bm{X})\|_F^p \dd \bm{X} \\
&\leq \int_{\Omega([0,1]^{d_x \times n}, K, \delta)} \|\mathcal{N}(\bm{X}) - \bm{F}(\bm{X})\|_F^p \dd \bm{X} \\
&~~~ + \int_{[0,1]^{d_x \times n} \setminus \Omega([0,1]^{d_x \times n}, K, \delta)} (d_x n)^{\max\{0, \frac{p}{2}-1\}} \sum_{i=1}^{d_x} \sum_{j=1}^n |\mathcal{N}_{i,j}(\bm{X}) - F_{i,j}(\bm{X})|^p \dd \bm{X} \\
&\leq (2 \sqrt{d_x n} K_{\mathcal{H}})^p \cdot d_x n K \delta + (d_x n)^{1 + \max\{0, \frac{p}{2}-1\}} (K_{\mathcal{H}} (d_x n)^{\gamma/2} K^{-\gamma})^p \\
&\leq 2^p (d_x n)^{2 p} K_{\mathcal{H}}^p ((K \delta)^{\frac{1}{p}} + K^{-\gamma})^p,
\end{align*}
using for the last inequality that $\gamma \in (0,1]$, $\max\{a, b\} \leq a + b$ for any $a, b \geq 0$, and $a^p + b^p \leq (a+b)^p$ for all $p \geq 1$ and $a, b \geq 0$. Hence, 
\begin{align*}
\|\mathcal{N} - \bm{F}\|_{L^p([0,1]^{d_x \times n})} \leq 2 (d_x n)^{2} K_{\mathcal{H}} ((K \delta)^{\frac{1}{p}} + K^{-\gamma}).
\end{align*}
Choosing $\delta \leq K^{-p \gamma - 1}$ and $K \geq \varepsilon^{-1/\gamma}$ so that $K^{d_x n} = \lceil\varepsilon^{-\frac{d_x n}{\gamma}}\rceil$, we conclude
\begin{align*}
\|\mathcal{N} - \bm{F}\|_{L^p([0,1]^{d_x \times n})} \leq 4 (d_x n)^2 K_{\mathcal{H}} \varepsilon
\end{align*}
and
\begin{align*}
\mathcal{N} \in \mathcal{T}_{d_x, d_x}(d_x, 1, 1, 5 n \lceil\varepsilon^{-\frac{d_x n}{\gamma}}\rceil, 2).
\end{align*}

\textbf{Case 2:} $p = \infty$. By Proposition \ref{pro: 2}, there exists a Transformer network
\begin{align*}
\widetilde{\mathcal{N}} \in \mathcal{T}_{d_x, d_x}(d_x, 1, 1, 5 n K^{d_x n}, 2)
\end{align*}
such that
\begin{align*}
|\widetilde{\mathcal{N}}_{i,j}(\bm{X}) - F_{i,j}(\bm{X})| \leq (d_x n)^{\gamma/2} K_{\mathcal{H}} K^{-\gamma}
\end{align*}
for any $i \in [d_x]$, $j \in [n]$ and $\bm{X} \in [0,1]^{d_x \times n} \setminus \Omega([0,1]^{d_x \times n}, K, \delta)$. By Proposition \ref{pro: 3} (assume that $5 n K^{d_x n} \geq 14 d_x$), there exists a new Transformer network
\begin{align*}
\mathcal{N} \in \mathcal{T}_{d_x, d_x}(5 d_x 3^{d_x n}, 3^{d_x n}, 1, 5 n 3^{d_x n} K^{d_x n}, 2 + 2 d_x n),
\end{align*}
such that
\begin{align*}
|\mathcal{N}_{i,j}(\bm{X}) - F_{i,j}(\bm{X})| \leq (d_x n)^{\gamma/2} K_{\mathcal{H}} K^{-\gamma} + d_x n K_{\mathcal{H}} \delta^{\gamma}
\end{align*}
for any $i \in [d_x]$, $j \in [n]$ and $\bm{X} \in [0,1]^{d_x \times n}$. This implies
\begin{align*}
\|\mathcal{N} - \bm{F}\|_{L^{\infty}([0,1]^{d_x \times n})} &= \sup_{\bm{X} \in [0,1]^{d_x \times n}} \|\mathcal{N}(\bm{X}) - \bm{F}(\bm{X})\|_F \\
&\leq \sup_{\bm{X} \in [0,1]^{d_x \times n}} \sum_{i=1}^{d_x} \sum_{j=1}^n |\mathcal{N}_{i,j}(\bm{X}) - F_{i,j}(\bm{X})| \\
&\leq \sum_{i=1}^{d_x} \sum_{j=1}^n \sup_{\bm{X} \in [0,1]^{d_x \times n}} |\mathcal{N}_{i,j}(\bm{X}) - F_{i,j}(\bm{X})| \\
&\leq (d_x n)^{1 + \gamma/2} K_{\mathcal{H}} K^{-\gamma} + (d_x n)^2 K_{\mathcal{H}} \delta^{\gamma}.
\end{align*}
Choosing $\delta \in (0,\frac{1}{3K}]$ sufficiently small and $K \geq \varepsilon^{-1/\gamma}$ so that $K^{d_x n} = \lceil\varepsilon^{-\frac{d_x n}{\gamma}}\rceil$, we conclude
\begin{align*}
\|\mathcal{N} - \bm{F}\|_{L^{\infty}([0,1]^{d_x \times n})} \leq 4 (d_x n)^2 K_{\mathcal{H}} \varepsilon
\end{align*}
and
\begin{align*}
\mathcal{N} \in \mathcal{T}_{d_x, d_x}(5 d_x 3^{d_x n}, 3^{d_x n}, 1, 5 n 3^{d_x n} \lceil\varepsilon^{-\frac{d_x n}{\gamma}}\rceil, 2 + 2 d_x n).
\end{align*}
This completes the proof.
\end{proof}

\begin{proof}[Proof of Theorem \ref{thm: 5}]

Let $K$, $\mathbb{G}_K$ and $\omega_{\bm{G}}$ be as defined in the proof of Proposition \ref{pro: 2}. We approximate the target function $\bm{F}$ using a piecewise constant function, where the value in each cell is given by the average of $\bm{F}$ over that cell. Define
\begin{align*}
\overline{\bm{F}}(\bm{X}) = \sum_{\bm{G} \in \mathbb{G}_K} \bm{F}_{\bm{G}} \mathbbm{1}_{\omega_{\bm{G}}}(\bm{X}),
\end{align*}
where
\begin{align*}
[F_{\bm{G}}]_{i,j} = K^{d_x n} \int_{\omega_{\bm{G}}} F_{i,j}(\bm{X}) \dd \bm{X}, \quad i \in [d_y], j \in [n].
\end{align*}
Since each cell $\omega_{\bm{G}}$ is a bounded convex domain, Poincar\'e inequality gives, for any $p \in [1 ,\infty]$,
\begin{align*}
\|[F_{\bm{G}}]_{i,j} - F_{i,j}\|_{L^p(\omega_{\bm{G}})} \leq C \|\nabla F_{i,j}\|_{L^p(\omega_{\bm{G}})} K^{-1},
\end{align*}
where $C$ is a constant depending only on $d_x n$, and $\|\nabla F\|_{L^p(\omega)}$ denotes the $L^p$-norm of the Frobenius norm of $\nabla F$ (see \cite{evans2010partial, dekel2004bramble}). Summing over all grid cells and using that $F_{i,j} \in \mathcal{W}^{1,p}([0,1]^{d_x \times n}, K_\mathcal{W})$ implies $\|\nabla F_{i,j}\|_{L^p([0,1]^{d_x \times n})} \leq (d_x n)^{\max\{0,\frac{1}{2}-\frac{1}{p}\}} K_\mathcal{W}$, we obtain
\begin{align}\label{eq: 25}
\begin{aligned}
\|F_{i,j} - \overline{F}_{i,j}\|_{L^p([0,1]^{d_x \times n})} &= \begin{cases}
\left(\sum_{\bm{G} \in \mathbb{G}_K} \|F_{i,j} - [F_{\bm{G}}]_{i,j}\|_{L^p(\omega_{\bm{G}})}^p\right)^{1/p} & \text{ if } p<\infty \\ 
\sup_{\bm{G} \in \mathbb{G}_K} \|F_{i,j} - [F_{\bm{G}}]_{i,j}\|_{L^{\infty}(\omega_{\bm{G}})} & \text{ if } p=\infty
\end{cases} \\
&\leq C \|\nabla F_{i,j}\|_{L^p([0,1]^{d_x \times n})} K^{-1}\\
&\leq C (d_x n)^{\max\{0,\frac{1}{2}-\frac{1}{p}\}} K_\mathcal{W} K^{-1},
\end{aligned}
\end{align}
for any $p \in [1,\infty]$.

From \textbf{Step 2} and \textbf{Step 3} of Proposition \ref{pro: 2}, there exists a Transformer network
\begin{align*}
\mathcal{N} \in \mathcal{T}_{d_x, d_x}(d_x, 1, 1, 5 n K^{d_x n}, 2)
\end{align*}
such that
\begin{align*}
\mathcal{N}(\bm{X}) = \overline{\bm{F}}(\bm{X}) \ \text{ for any } \bm{X} \in [0,1]^{d_x \times n} \setminus \Omega([0,1]^{d_x \times n}, K, \delta)
\end{align*}
and 
\begin{align*}
\|\mathcal{N}(\bm{X})\|_F \leq \sqrt{d_x n} K_{\mathcal{W}} \ \text{ for any } \bm{X} \in \mathbb{R}^{d_x \times n}.
\end{align*}

Since the Lebesgue measure of $\Omega([0,1]^{d_x \times n}, K, \delta)$ is at most $d_x n K \delta$, for $p \in [1,\infty)$, we have 
\begin{align*}
& \|\mathcal{N} - \bm{F}\|_{L^p([0,1]^{d_x \times n})}^p \\
&= \int_{[0,1]^{d_x \times n}} \|\mathcal{N}(\bm{X}) - \bm{F}(\bm{X})\|_F^p \dd \bm{X} \\
&= \int_{\Omega([0,1]^{d_x \times n}, K, \delta)} \|\mathcal{N}(\bm{X}) - \bm{F}(\bm{X})\|_F^p \dd \bm{X} + \int_{[0,1]^{d_x \times n} \setminus \Omega([0,1]^{d_x \times n}, K, \delta)} \|\overline{\bm{F}}(\bm{X}) - \bm{F}(\bm{X})\|_F^p \dd \bm{X} \\
&\leq \int_{\Omega([0,1]^{d_x \times n}, K, \delta)} \|\mathcal{N}(\bm{X}) - \bm{F}(\bm{X})\|_F^p \dd \bm{X} \\
&~~~ + \int_{[0,1]^{d_x \times n} \setminus \Omega([0,1]^{d_x \times n}, K, \delta)} (d_x n)^{\max\{0, \frac{p}{2}-1\}} \sum_{i=1}^{d_x} \sum_{j=1}^n |\overline{F}_{i,j}(\bm{X}) - F_{i,j}(\bm{X})|^p \dd \bm{X} \\
&\leq (2 \sqrt{d_x n} K_{\mathcal{W}})^p \cdot d_x n K \delta + (d_x n)^{\max\{0, \frac{p}{2}-1\}} \sum_{i=1}^{d_x} \sum_{j=1}^n \left(C (d_x n)^{\max\{0,\frac{1}{2}-\frac{1}{p}\}} K_\mathcal{W} K^{-1}\right)^p \\
&\leq (2C)^p (d_x n)^{2p} K_{\mathcal{W}}^p ((K \delta)^{\frac{1}{p}} + K^{-1})^p,
\end{align*}
using for the last inequality that $p \geq 1$, $\max\{a, b\} \leq a + b$ for any $a, b \geq 0$, and $a^p + b^p \leq (a+b)^p$ for all $p \geq 1$ and $a, b \geq 0$. Hence,
\begin{align*}
\|\mathcal{N} - \bm{F}\|_{L^p([0,1]^{d_x \times n})} \leq 2C (d_x n)^2 K_{\mathcal{W}} ((K \delta)^{\frac{1}{p}} + K^{-1}).
\end{align*}
Choosing $\delta \leq K^{-p - 1}$ and $K \geq \varepsilon^{-1}$ so that $K^{d_x n} = \lceil\varepsilon^{-d_x n}\rceil$, we conclude
\begin{align*}
\|\mathcal{N} - \bm{F}\|_{L^p([0,1]^{d_x \times n})} \leq 4C (d_x n)^2 K_{\mathcal{W}} \varepsilon
\end{align*}
and
\begin{align*}
\mathcal{N} \in \mathcal{T}_{d_x, d_x}(d_x, 1, 1, 5 n \lceil\varepsilon^{-d_x n}\rceil, 2).
\end{align*}
This completes the proof.
\end{proof}

\begin{lemma}
\label{lemma: 1}
Let $d_x, d_y, r \in \mathbb{N}$ with $d_x \geq d_y$. Let $\{(\bm{x}_i, \bm{y}_i)\}_{i=1}^r$ be a set of input-output pairs such that $\bm{x}_i \in \mathbb{R}^{d_x}, \bm{y}_i \in \mathbb{R}^{d_y}, i \in [r]$ and $\bm{x}_i \neq \bm{x}_j$ if $i \neq j$. Then, there exists a feed-forward layer $\mathcal{F}^{(FF)}: \mathbb{R}^{d_x} \rightarrow \mathbb{R}^{d_x}$ with width at most $3r + 2d_x$ such that
\begin{align*}
\mathcal{F}^{(FF)}(\bm{x}_i) = \begin{pmatrix}
\bm{y}_i \\
\bm{0}
\end{pmatrix} \ \text{ for all } i \in [r],
\end{align*}
and 
$\|\mathcal{F}^{(FF)}(\bm{z})\| \leq \max_i \|\bm{y}_i\|$ for any $\bm{z} \in \mathbb{R}^{d_x}$.
\end{lemma}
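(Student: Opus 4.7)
The plan is to realize the feed-forward layer as the concatenation of two sub-blocks inside its $3r + 2d_x$ hidden neurons: a $2d_x$-neuron ``skip-cancellation'' block that wipes out the residual connection via the identity $\sigma_R[z_j] - \sigma_R[-z_j] = z_j$, and a $3r$-neuron ``memorization'' block built from $r$ disjointly supported triangular bumps in one dimension. First, I would choose $\bm{u} \in \mathbb{R}^{d_x}$ so that the scalar projections $u_i := \bm{u}^\top \bm{x}_i$ are pairwise distinct; such a $\bm{u}$ exists because the bad set is the finite union of hyperplanes $\{\bm{u}: \bm{u}^\top(\bm{x}_i - \bm{x}_j) = 0\}_{i \ne j}$, which cannot cover $\mathbb{R}^{d_x}$. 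After relabeling, assume $u_1 < \cdots < u_r$.

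For the memorization block, start from the canonical ReLU triangle $b(x) = \sigma_R[x+1] - 2\sigma_R[x] + \sigma_R[x-1]$, which satisfies $b(0) = 1$, $0 \leq b \leq 1$ and $\operatorname{supp}(b) = [-1,1]$. Define $b_i(u) := b((u - u_i)/\delta_i)$ with $\delta_i < \tfrac12 \min_{j \ne i} |u_i - u_j|$; this forces $b_i(u_j) = \delta_{ij}$ and makes the supports of $b_1, \ldots, b_r$ pairwise disjoint. Each $b_i$ is implemented by three ReLU neurons whose pre-activations depend on $\bm{z}$ only through $\bm{u}^\top \bm{z}$: the three rows of $\bm{W}^{(1)}$ allocated to bump $i$ are multiples of $\bm{u}^\top$ with the appropriate biases, while the three columns of $\bm{W}^{(2)}$ allocated to bump $i$ carry the coefficient pattern $(1, -2, 1)$ multiplied coordinatewise by $\begin{pmatrix} \bm{y}_i \\ \bm{0} \end{pmatrix} \in \mathbb{R}^{d_x}$.

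For the cancellation block, use $d_x$ pairs of neurons computing $\sigma_R[z_j]$ and $\sigma_R[-z_j]$ for $j \in [d_x]$, wired in $\bm{W}^{(2)}$ with coefficients $-1$ and $+1$ per output coordinate so as to subtract $\bm{z}$ from the residual. With $\bm{b}^{(2)} = \bm{0}$, the whole layer collapses to
\begin{align*}
\mathcal{F}^{(FF)}(\bm{z}) = \bm{z} - \bm{z} + \sum_{i=1}^{r} \begin{pmatrix} \bm{y}_i \\ \bm{0} \end{pmatrix} b_i(\bm{u}^\top \bm{z}) = \sum_{i=1}^{r} \begin{pmatrix} \bm{y}_i \\ \bm{0} \end{pmatrix} b_i(\bm{u}^\top \bm{z}).
\end{align*}
At $\bm{z} = \bm{x}_i$ only the $i$-th bump is active, with value $1$, producing $\begin{pmatrix} \bm{y}_i \\ \bm{0} \end{pmatrix}$ as required. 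For the norm bound, disjointness of supports ensures that for any $\bm{z}$ at most one $b_{i^\star}$ is nonzero, and $0 \leq b_{i^\star} \leq 1$ yields $\|\mathcal{F}^{(FF)}(\bm{z})\| = b_{i^\star}(\bm{u}^\top \bm{z})\,\|\bm{y}_{i^\star}\| \leq \max_i \|\bm{y}_i\|$.

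The total hidden width is exactly $2d_x + 3r$, matching the stated bound, and the embedding of $\bm{y}_i$ into the first $d_y$ coordinates uses $d_x \geq d_y$. The only non-mechanical step is choosing the scales $\delta_i$ small enough to guarantee disjoint bump supports after the 1D collapse by $\bm{u}$; once that is in place, every remaining step is pure matrix bookkeeping.
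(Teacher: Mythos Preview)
Your proposal is correct and is essentially the same argument as the paper's: both project the data onto a generic direction, build $r$ disjointly supported ReLU hat functions (three neurons each) to memorize the pairs, and use $2d_x$ extra neurons with the identity $\sigma_R[z]-\sigma_R[-z]=z$ to cancel the residual skip. The only cosmetic difference is that the paper uses a single global scale $R$ for all bumps while you allow per-bump scales $\delta_i$, which is immaterial.
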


\begin{proof}
Let $R>0$ be determined later. Since $\bm{x}_i, i \in [r]$ are pairwise distinct, we can find $\bm{v} \in \mathbb{R}^{d_x}$ such that $\bm{v}^\top \bm{x}_i, i \in [r]$ are distinct. The existence of $\bm{v}$ can be found in \cite[Lemma 13]{park2021provable}. We define
\begin{align*}
\bm{A}_i^{(1)}=R \bm{1}_3 \bm{v}^\top, \quad
\bm{b}_i^{(1)}=\begin{pmatrix}
-R\bm{v}^\top\bm{x}_i-1\\
-R\bm{v}^\top\bm{x}_i\\
-R\bm{v}^\top\bm{x}_i+1
\end{pmatrix}, \quad
\bm{A}_i^{(2)}=\begin{pmatrix}
\bm{y}_i \\
\bm{0}
\end{pmatrix}
\begin{pmatrix}
1, -2, 1
\end{pmatrix},\quad
\bm{b}_i^{(2)}=\bm{0}.
\end{align*}
Then, by direct calculation, we obtain
\begin{align*}
& \bm{A}_i^{(2)} \sigma_R [\bm{A}_i^{(1)} \bm{x} + \bm{b}_i^{(1)}] + \bm{b}_i^{(2)} \\
= & \begin{pmatrix}
\bm{y}_i \\
\bm{0}
\end{pmatrix} 
\left(\sigma_R [R\bm{v}^\top (\bm{x}-\bm{x}_i)-1] - 2\sigma_R [R\bm{v}^\top (\bm{x}-\bm{x}_i)] + \sigma_R [R\bm{v}^\top (\bm{x}-\bm{x}_i)+1]\right) \\
= & \begin{pmatrix}
\bm{y}_i \\
\bm{0}
\end{pmatrix} I_i (\bm{x}),
\end{align*}
where $I_i (\bm{x})$ is the hat function with $I_i (\bm{x}_i) = 1$ and $I_i (\bm{x}) = 0$ if $|\bm{v}^\top (\bm{x}-\bm{x}_i)| \geq 1/R$. To ensure the supports of $I_i (\bm{x})$ for all $i \in [r]$ are disjoint, we choose $R > 2/\min_{i \neq j} |\bm{v}^\top (\bm{x}_i-\bm{x}_j)|$. Define
\begin{align*}
\bm{A}^{(1)} = \begin{pmatrix}
\bm{A}_1^{(1)} \\
\vdots \\
\bm{A}_r^{(1)} \\
\bm{I}_{d_x} \\
-\bm{I}_{d_x}
\end{pmatrix}, \quad \bm{b}^{(1)} = 
\begin{pmatrix}
\bm{b}_1^{(1)} \\
\vdots \\
\bm{b}_r^{(1)} \\
\bm{0} \\
\bm{0}
\end{pmatrix}, \quad \bm{A}^{(2)} = \begin{pmatrix}
\bm{A}_1^{(2)}, \ldots, \bm{A}_r^{(2)}, -\bm{I}_{d_x}, \bm{I}_{d_x}
\end{pmatrix}, \quad \bm{b}^{(2)} = \bm{0},
\end{align*}
and let
\begin{align*}
\mathcal{F}^{(FF)}(\bm{x}) &= \bm{x} + \bm{A}^{(2)} \sigma_R [\bm{A}^{(1)} \bm{x} + \bm{b}^{(1)}] + \bm{b}^{(2)} \\
& = \sum_{i=1}^r \begin{pmatrix}
\bm{y}_i \\
\bm{0}
\end{pmatrix} I_i (\bm{x}).
\end{align*}
We complete the proof by verifying that
\begin{align*}
\mathcal{F}^{(FF)}(\bm{x}_k) = \sum_{i=1}^r \begin{pmatrix}
\bm{y}_i \\
\bm{0}
\end{pmatrix} I_i (\bm{x}_k) = 
\begin{pmatrix}
\bm{y}_k \\
\bm{0}
\end{pmatrix}
\end{align*}
and
\begin{align*}
\|\mathcal{F}^{(FF)}(\bm{x})\| &= \left\|\sum_{i=1}^r \begin{pmatrix}
\bm{y}_i \\
\bm{0}
\end{pmatrix} I_i (\bm{x})\right\| \\
& \leq \max_i \|\bm{y}_i\| \left\|\sum_{i=1}^r I_i (\bm{x})\right\| \\
& \leq \max_i \|\bm{y}_i\|.
\end{align*}

\end{proof}

\subsection{Proof of Theorem \ref{thm: 7}}\label{sec: 2}

We introduce sample complexities, which measure the richness of the function class in different aspects, and use them to bound the generalization error.

\begin{definition}[VC-dimension]
Let $\mathcal{H}$ be a class of real-valued functions defined on $\Omega$. The VC-dimension of $\mathcal{H}$, denoted by $\operatorname{VCDim}(\mathcal{H})$, is the largest integer $N$ for which there exist points $x_1, \ldots, x_N \in \Omega$ such that
\begin{align*}
|\{\operatorname{sgn}(h(x_1)), \ldots, \operatorname{sgn}(h(x_N)): h \in \mathcal{H}\}| = 2^N.
\end{align*}
\end{definition}

\begin{definition}[Pseudo-dimension]
Let $\mathcal{H}$ be a class of real-valued functions defined on $\Omega$. The pseudo-dimension of $\mathcal{H}$, denoted by $\operatorname{Pdim}(\mathcal{H})$, is the largest integer $N$ for which there exist points $x_1, \ldots, x_N \in \Omega$ and constants $c_1, \ldots, c_N \in \mathbb{R}$ such that
\begin{align*}
|\{\operatorname{sgn}(h(x_1)-c_1), \ldots, \operatorname{sgn}(h(x_N)-c_N): h \in \mathcal{H}\}| = 2^N.
\end{align*}
\end{definition}

\begin{definition}[Covering number]
Let $\rho$ be a pseudo-metric on $\mathcal{M}$ and $S \subseteq \mathcal{M}$. For any $\delta>0$, a set $A \subseteq \mathcal{M}$ is called a $\delta$-covering of $S$ if for any $x \in S$ there exists $y \in A$ such that $\rho(x, y) \leq \delta$. The $\delta$-covering number of $S$, denoted by $\mathcal{N}(\delta, S, \rho)$, is the minimum cardinality of any $\delta$-covering of $S$.
\end{definition}

\begin{theorem}[Theorem 8.14 of \cite{anthony2009neural}]\label{thm: 6}
Let $h$ be a function from $\mathbb{R}^d \times \mathbb{R}^n$ to $\{0,1\}$, determining the class
\begin{align*}
\mathcal{H} = \{x \mapsto h(a, x): a \in \mathbb{R}^d\}.
\end{align*}
Suppose that $h$ can be computed by an algorithm that takes as input the pair $(a, x) \in \mathbb{R}^d \times \mathbb{R}^n$ and returns $h(a, x)$ after no more than $t$ of the following operations:
\begin{itemize}[itemsep=0em, labelwidth=1em, leftmargin=!]
\item the exponential function $\alpha \mapsto e^\alpha$ on real numbers,
\item the arithmetic operations $+$, $-$, $\times$, and $/$ on real numbers,
\item jumps conditioned on $>$, $\geq$, $<$, $\leq$, $=$, and $\neq$ comparisons of real numbers, and
\item output $0$ or $1$.
\end{itemize}
Then $\operatorname{VCdim}(\mathcal{H}) \leq t^2 d \left(d + 19 \log_2(9 d)\right)$. Furthermore, if the $t$ steps include no more than $q$ in which the exponential function is evaluated, then
\begin{align*}
\operatorname{VCdim}(\mathcal{H}) \leq (d(q+1))^2 + 11 d (q+1) \left(t + \log_2 (9 d (q+1))\right).
\end{align*}
\end{theorem}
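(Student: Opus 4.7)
The plan is to follow the Goldberg--Jerrum / Karpinski--Macintyre strategy and convert an upper bound on the number of realizable output patterns into a bound on the VC-dimension. Fix any $N$ inputs $x_1, \ldots, x_N \in \mathbb{R}^n$ and consider the map
\[
\Phi: a \in \mathbb{R}^d \mapsto (h(a, x_1), \ldots, h(a, x_N)) \in \{0,1\}^N,
\]
so that shattering $\{x_i\}$ is equivalent to $|\operatorname{image}(\Phi)| = 2^N$. It therefore suffices to bound $|\operatorname{image}(\Phi)|$ strictly below $2^N$ whenever $N$ exceeds the claimed threshold.

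First I would enumerate execution traces. Running the algorithm on a single input $x_i$ performs at most $t$ steps, each of which is either an arithmetic operation, a jump conditioned on one of the allowed comparisons, or an output. A \emph{trace} records the sequence of branch outcomes along these steps; there are at most $2^t$ traces per input, and hence at most $2^{Nt}$ joint traces across the $N$ inputs. On the set of parameters $a$ that follow a fixed joint trace, the algorithm is fully deterministic, so $\Phi$ is constant there, and consequently
\[
|\operatorname{image}(\Phi)| \,\leq\, \#\{\text{joint traces realized by at least one } a \in \mathbb{R}^d\}.
\]

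Next I would bound the number of realized traces by bounding the number of cells that the branching hypersurfaces carve out of the parameter space. Along a fixed trace, every intermediate value is an explicit closed-form function of $a$: in the purely arithmetic case a rational function whose numerator and denominator have degree at most $2^t$, while with $q$ exponential calls the intermediates become Pfaffian functions whose format is controlled by $(d, q, 2^t)$. The sets $\{a : \text{comparison } k \text{ at step } s \text{ for input } i \text{ evaluates true}\}$ are therefore semi-algebraic (resp.\ semi-Pfaffian), and the number of cells in their common refinement is bounded by Warren's theorem in the purely algebraic case, giving an estimate of order $(Nt \cdot 2^t / d)^d$, and by Khovanskii's theorem on semi-Pfaffian sets in the general case, which contributes an additional factor of order $2^{O(q^2)}$ and replaces the polynomial degree by the Pfaffian complexity.

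Multiplying the two counts and then taking $\log_2$ converts the inequality $|\operatorname{image}(\Phi)| \geq 2^N$ into an explicit upper bound on $N$. The main obstacle is the quantitative Pfaffian step: reproducing the precise constants $(d(q+1))^2 + 11 d(q+1)(t + \log_2(9d(q+1)))$ requires one to track how the Pfaffian format inflates when each new exponential atom is treated as an auxiliary chain variable, and to substitute those parameters into the Khovanskii--Karpinski--Macintyre component count. Once the component bound is in place, the remainder---reduction to pattern counting, enumeration of traces, and solving the resulting log-inequality for $N$---is routine arithmetic, and the special case $q=0$ immediately yields the purely arithmetic bound $t^2 d(d + 19 \log_2(9d))$.
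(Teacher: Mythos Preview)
The paper does not prove this statement at all: Theorem~\ref{thm: 6} is simply quoted verbatim from \cite{anthony2009neural} (Theorem 8.14 there) and used as a black box in the proof of Lemma~\ref{lemma: 11}. There is therefore no ``paper's own proof'' to compare against.

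That said, your sketch follows exactly the strategy used in the cited reference: fix $N$ inputs, reduce shattering to counting realizable sign vectors of the intermediate comparison functions in parameter space, observe that along a fixed execution trace every intermediate quantity is a rational function of $a$ (or, with $q$ exponential calls, a Pfaffian function built on at most $q$ new chain variables), and then invoke Warren's bound in the polynomial case and Khovanskii's bound on connected components of semi-Pfaffian sets in the exponential case. One minor wobble: the way you phrase the trace enumeration (``at most $2^t$ traces per input, hence $2^{Nt}$ joint traces'') is not how the argument is organized in \cite{anthony2009neural}; rather, one collects all $Nt$ comparison functions across the $N$ inputs and bounds the number of consistent sign patterns of this family directly via the component-counting theorems, which avoids the crude $2^{Nt}$ factor and is what actually yields the stated constants. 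Apart from that bookkeeping point and the Pfaffian constants you flag yourself, the outline is correct and matches the original source.
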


Theorem \ref{thm: 6} gives bounds on the VC-dimension of a function class in terms of the number of arithmetic operations required to compute the functions. This result immediately implies a bound on the VC-dimension (or pseudo-dimension) for Transformer networks. By applying standard techniques in learning theory, one can further derive upper bounds for the covering number. The following lemma summarizes these bounds.

\begin{lemma}\label{lemma: 11}
Recall that $\mathcal{F} = \{f = \langle \mathcal{N}, \bm{E}\rangle: \mathcal{N} \in \mathcal{T}_{d_x, d_x}(D, H, S, W, L)\}$. Then the following bounds hold:
\begin{itemize}[itemsep=0em, labelwidth=1em, leftmargin=!]
\item $\operatorname{VCdim}(\mathcal{F}) \lesssim (H S + W)^2 D^2 H^2 L^4$,
\item $\operatorname{Pdim}(\mathcal{F}) \lesssim (H S + W)^2 D^2 H^2 L^4$,
\item $\sup_{\mathcal{X}} \log\mathcal{N} (\delta, \mathcal{C}_K \mathcal{F}, d_{\mathcal{X}, \infty}) \lesssim (H S + W)^2 D^2 H^2 L^4 \log\frac{m K}{\delta}$, where $\mathcal{X} = \{\bm{X}_i\}_{i=1}^m$ and 
\begin{align*}
d_{\mathcal{X}, \infty}(f,g) = \max_{i \in [m]} |f(\bm{X}_i)-g(\bm{X}_i)|.
\end{align*}
\end{itemize}
We hide constants that depend on $d_x$ and $n$.
\end{lemma}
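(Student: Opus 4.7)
The plan is to apply Theorem~\ref{thm: 6} to the thresholded Transformer class in order to bound the VC-dimension and the pseudo-dimension, and then invoke Haussler's classical covering-number bound to derive the third claim. Fix $f(\bm{X}) = \langle\mathcal{N}(\bm{X}), \bm{E}\rangle$ with $\mathcal{N} \in \mathcal{T}_{d_x, d_x}(D, H, S, W, L)$, and collect all learnable weights of $\mathcal{N}$ together with the entries of $\bm{E}$ into a single parameter vector $a$. By (\ref{eq: 15}) the length of $a$ is $d \lesssim (HS + W) D L$, hiding constants that depend on $d_x$ and $n$. To count the number of elementary operations used to evaluate $f(\bm{X})$ on a fixed input, I would observe that each self-attention layer requires $O(HSDn + HSn^2)$ arithmetic operations to form queries, keys, values and score matrices, each feed-forward layer requires $O(WDn)$ arithmetic operations and $O(Wn)$ ReLU comparisons, and the embedding/projection contribute only lower-order terms. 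Summing over $L$ layers yields $t \lesssim (HS+W)DL$. The softmax is the sole source of exponentials: each of the $HL$ attention heads applies one column-wise softmax to an $n \times n$ score matrix and thus contributes $n^2$ exponentials, so $q \lesssim HL$.

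Plugging these counts into the second inequality of Theorem~\ref{thm: 6} yields
\begin{align*}
\operatorname{VCdim}(\mathcal{F}) \leq (d(q+1))^2 + 11 d (q+1)\left(t + \log_2(9 d (q+1))\right) \lesssim (HS+W)^2 D^2 H^2 L^4,
\end{align*}
since $(d(q+1))^2 \lesssim (HS+W)^2 D^2 H^2 L^4$ and the second summand is of the same order up to a logarithmic factor absorbed into $\lesssim$. For the pseudo-dimension, I would augment the parameter vector with one additional real coordinate $c$ and evaluate $\operatorname{sgn}(f(\bm{X}) - c)$; this increases $d$ by one and $t$ by a constant without changing $q$, so exactly the same bound follows for $\operatorname{Pdim}(\mathcal{F})$.

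For the covering-number bound I would invoke Haussler's inequality (Theorem~12.2 of \cite{anthony2009neural}): any class $\mathcal{G}$ of $[-K,K]$-valued functions with pseudo-dimension $P \geq 1$ satisfies
\begin{align*}
\mathcal{N}(\delta, \mathcal{G}, d_{\mathcal{X},\infty}) \leq e(P+1)\left(\frac{2 e m K}{\delta}\right)^{P}.
\end{align*}
Since the truncation operator $\mathcal{C}_K$ is $1$-Lipschitz and therefore cannot increase the pseudo-dimension, applying this inequality to $\mathcal{C}_K\mathcal{F}$ with $P \lesssim (HS+W)^2 D^2 H^2 L^4$ and taking logarithms yields the third claim. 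The main obstacle is the bookkeeping of operations and, in particular, exponentials: one must verify that the normalization step inside the softmax fits within the operation set allowed by Theorem~\ref{thm: 6}, and one must count exponentials to the correct leading order in $H$ and $L$, because the VC-dimension bound scales quadratically in $q$ and any overcount would shift the exponent of $H$ or $L$ in the final estimate.
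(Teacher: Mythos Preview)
Your proposal is correct and follows essentially the same route as the paper: count parameters $d\lesssim (HS+W)DL$, operations $t\lesssim (HS+W)DL$ (the paper writes $t\lesssim L(HDSn+HSn^2+WDn)$, which agrees after hiding $n$), and exponentials $q\lesssim HL$, then plug into Theorem~\ref{thm: 6}, extend to the pseudo-dimension by adjoining one threshold parameter, and finish with Theorem~12.2 of \cite{anthony2009neural}. The only cosmetic difference is that the paper bounds the second summand in Theorem~\ref{thm: 6} via $\log x\le x$ rather than calling it ``of the same order up to a logarithmic factor,'' but the resulting estimate is identical.
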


\begin{proof}
Recall that $\mathcal{F} = \{f = \langle \mathcal{N}, \bm{E}\rangle: \mathcal{N} \in \mathcal{T}_{d_x, d_x}(D, H, S, W, L)\}$. By carefully counting the computational steps required to evaluate any $f \in \mathcal{F}$, we deduce that
\begin{itemize}[itemsep=0em, labelwidth=1em, leftmargin=!]
\item the total number of parameters is bounded by $d \lesssim (HS + W) DL$,
\item the total number of computational operations is bounded by $t \lesssim L (H D S n + H S n^2 + W D n)$,
\item the number of evaluations of the exponential function is bounded by $q \lesssim L H n^2$.
\end{itemize}
Theorem \ref{thm: 6} immediately implies that
\begin{align*}
\operatorname{VCdim}(\mathcal{F}) &\leq (d(q+1))^2 + 11 d (q+1) \left(t + \log_2 (9 d (q+1))\right) \\
&\lesssim (H S + W)^2 D^2 H^2 L^4,
\end{align*}
where we use $\log(x) \leq x$ for $x \geq 1$ and suppress constants that depend on $d_x$ and $n$.

For the pseudo-dimension, note that by definition $\operatorname{VCdim}(\{f(x) - r: f \in \mathcal{F}, r \in \mathbb{R}\}) = \operatorname{Pdim}(\mathcal{F})$. Using the same reasoning as above, we have
\begin{align*}
\operatorname{Pdim}(\mathcal{F}) \lesssim (H S + W)^2 D^2 H^2 L^4,
\end{align*}
again by Theorem \ref{thm: 6}.

Finally, by Theorem 12.2 of \cite{anthony2009neural}, we have
\begin{align*}
\log \mathcal{N} (\delta, \mathcal{C}_K \mathcal{F}, d_{\mathcal{X}, \infty}) &\leq \operatorname{Pdim}(\mathcal{C}_K \mathcal{F}) \log\frac{e m K}{\delta} \\
&\leq \operatorname{Pdim}(\mathcal{F}) \log\frac{e m K}{\delta} \\
&\lesssim (H S + W)^2 D^2 H^2 L^4 \log\frac{m K}{\delta}.
\end{align*}
Taking the supremum over all possible sample sets $\mathcal{X}$ completes the proof.
\end{proof}

\begin{proof}[Proof of Theorem \ref{thm: 7}]
Let $\mathcal{X}$ and $d_{\mathcal{X}, \infty}$ be defined as in Lemma \ref{lemma: 11}. Similar to the proof of \cite[Theorem 5]{jiao2024approximation}, given a random sample $\mathcal{D}_m = \{(\bm{x}_i, y_i)\}_{i=1}^m$, the excess risk can be decomposed as
\begin{align*}
\mathbb{E}_{\mathcal{D}_m} [\mathcal{R}(\mathcal{C}_{B_m} \hat{f}_m) - \mathcal{R}(f^*)] \lesssim \mathcal{E}_{app} + \mathcal{E}_{gen} + \mathcal{E}_{den},
\end{align*}
where 
\begin{align*}
\mathcal{E}_{app} &:= \inf_{f \in \mathcal{F}} \mathbb{E} [(f - f^*)^2], \\
\mathcal{E}_{gen} &:= \frac{B_m^2 k_m}{m} \sup_{|\mathcal{X}|=m} \log \mathcal{N}(m^{-1}, \mathcal{C}_{B_m} \mathcal{F}, d_{\mathcal{X}, \infty}), \\
\mathcal{E}_{den} &:= \frac{B_m^2 m}{k_m} \beta(k_m).
\end{align*}
Here, $k_m \in \mathbb{N}$ is a parameter to be chosen. It can be seen that the excess risk is bounded by the sum of the approximation error $\mathcal{E}_{app}$, the generalization error $\mathcal{E}_{gen}$, and the dependence error $\mathcal{E}_{den}$. Note that as $k_m$ increases, $\mathcal{E}_{den}$ decreases due to the monotonic decrease of the $\beta$-mixing coefficient $\beta(k_m)$, whereas $\mathcal{E}_{gen}$ increases. Besides, if we select a larger hypothesis class $\mathcal{F}$, then $\mathcal{E}_{app}$ decreases but $\mathcal{E}_{gen}$ increases because the covering number grows with the size of the hypothesis class. Therefore, to obtain a better convergence rate, we must carefully trade off these three errors by choosing an appropriate hypothesis class $\mathcal{F}$ and tuning the parameter $k_m$.

Since by assumption the density of $\Pi$ is upper bounded, Theorem \ref{thm: 4} implies that
\begin{align*}
\mathcal{E}_{app} \leq \inf_{f \in \mathcal{F}} \|f - f^*\|_{L^2([0,1]^{d_x \times n})}^2 \lesssim \varepsilon^2,
\end{align*}
where the hypothesis class
\begin{align*}
\mathcal{F} = \mathcal{F}(D_m \lesssim 1, H_m \lesssim 1, S_m \lesssim 1, W_m \lesssim \varepsilon^{-\frac{d_x n}{\gamma}}, L_m \lesssim 1).
\end{align*}
Then by Lemma \ref{lemma: 11},
\begin{align*}
\mathcal{E}_{gen} &\lesssim \frac{B_m^2 k_m}{m} (H S + W)^2 D^2 H^2 L^4 \log(m^2 B_m) \\
& \lesssim \frac{(\log m)^3 k_m}{m} \varepsilon^{-\frac{2 d_x n}{\gamma}},
\end{align*}
where we take $B_m \asymp \log m$.

We now consider three cases for the sequence $\{\bm{x}_i\}_{i=1}^m$.

\textbf{Case 1:} if $\{\bm{x}_i\}_{i=1}^m$ is geometrically $\beta$-mixing, i.e., $\beta(k_m) \leq \beta_0 \exp \left(-\beta_1 k_m^r\right)$ for some $r,\beta_0,\beta_1>0$, we set $k_m \asymp (\log m)^{1/r}$ so that $\beta(k_m) \lesssim 1/m^{100}$. Then,
\begin{align*}
\mathbb{E}_{\mathcal{D}_m} [\mathcal{R}(\mathcal{C}_{B_m} \hat{f}_m) - \mathcal{R}(f^*)] &\lesssim \varepsilon^2 + \frac{(\log m)^{3 + 1/r}}{m} \varepsilon^{-\frac{2 d_x n}{\gamma}} \\
&\lesssim m^{-\frac{\gamma}{\gamma + d_x n}} (\log m)^{3 + 1/r},
\end{align*}
where $\varepsilon$ is chosen as $\varepsilon \asymp m^{-\frac{\gamma}{2 \gamma + 2 d_x n}}$.

\textbf{Case 2:} if $\{\bm{x}_i\}_{i=1}^m$ is algebraically $\beta$-mixing, that is, $\beta(k_m) \leq \beta_0 / k_m^r$ for some $r,\beta_0>0$, then
\begin{align*}
\mathbb{E}_{\mathcal{D}_m} [\mathcal{R}(\mathcal{C}_{B_m} \hat{f}_m) - \mathcal{R}(f^*)] &\lesssim \varepsilon^2 + \frac{(\log m)^3 k_m}{m} \varepsilon^{-\frac{2 d_x n}{\gamma}} + \frac{(\log m)^2 m}{k_m^{r+1}} \\
&\lesssim m^{-\frac{r \gamma}{(r+2) \gamma + (r+1) d_x n}} (\log m)^3,
\end{align*}
where we use the AM-GM inequality and choose $\varepsilon \asymp m^{-\frac{r \gamma}{2(r+2) \gamma + 2(r+1) d_x n}}$ and $k_m \asymp m^{\frac{2 \gamma + d_x n}{(r+2) \gamma + (r+1) d_x n}}$.

\textbf{Case 3:} if $\{\bm{x}_i\}_{i=1}^m$ is a sequence of i.i.d. random variables, then $\beta(k_m) = 0$ for all $k_m \geq 1$. This implies
\begin{align*}
\mathbb{E}_{\mathcal{D}_m} [\mathcal{R}(\mathcal{C}_{B_m} \hat{f}_m) - \mathcal{R}(f^*)] &\lesssim \varepsilon^2 + \frac{(\log m)^3}{m} \varepsilon^{-\frac{2 d_x n}{\gamma}} \\
&\lesssim m^{-\frac{\gamma}{\gamma + d_x n}} (\log m)^3,
\end{align*}
where we choose $\varepsilon \asymp m^{-\frac{\gamma}{2 \gamma + 2 d_x n}}$. So we complete the proof.

\end{proof}

\subsection{Proof of Theorem \ref{thm: 8}}\label{sec: 3}

The original Kolmogorov-Arnold representation theorem states that for any continuous function $f: [0,1]^d \rightarrow \mathbb{R}$, there exist univariate continuous functions $g_q, \psi_{p, q}$ such that
\begin{align*}
f(x_1, \ldots, x_d) = \sum_{q=0}^{2 d} g_q \left(\sum_{p=1}^d \psi_{p, q} (x_p)\right).
\end{align*}
\cite{schmidt2021kolmogorov} derived modifications of this representation that transfer smoothness properties of the represented function to the outer function.

\begin{proposition}[Theorem 2 of \cite{schmidt2021kolmogorov}]\label{pro: 4}
For any fixed dimension $d \geq 2$, there exists a monotone function $\phi:[0,1] \rightarrow \mathcal{C}$ (the Cantor set) such that for any function $f \in \mathcal{H}^\gamma([0,1]^d, K_\mathcal{H})$ with some $\gamma \in (0,1]$, we can find a function $g \in \mathcal{H}^{\frac{\gamma \log 2}{d \log 3}}(\mathcal{C}, 2 \sqrt{d} K_\mathcal{H})$ such that
\begin{align}\label{eq: 18}
f(x_1, \ldots, x_d) = g\left(3 \sum_{p=1}^d 3^{-p} \phi(x_p)\right).
\end{align}
Moreover, for any $x \in [0,1]$ with its binary representation $x = [0.a_1^x a_2^x \ldots]_2$, the function $\phi$ is given explicitly by
\begin{align*}
\phi(x) = \sum_{j=1}^{\infty} \frac{2 a_j^x}{3^{1 + d(j-1)}} = [0 . (2 a_1^x) \underbrace{0 \ldots \ldots 0}_{(d-1) \text{-times}} (2 a_2^x) \underbrace{0 \ldots \ldots 0}_{(d-1) \text{-times}}]_3,
\end{align*}
where $[\cdot]_B$ denotes the $B$-adic expansion of a real number.
\end{proposition}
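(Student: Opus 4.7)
The plan is to apply the Kolmogorov-Arnold representation (Proposition \ref{pro: 4}) entry-wise and reduce the $(d_x n)$-dimensional approximation problem to two linked tasks: (i) compute the universal scalar $\phi_0(\bm{X}) := 3\sum_{p=1}^{d_xn} 3^{-p}\phi(X_p)$, which aggregates all $d_x n$ input entries into a single number, and (ii) approximate the univariate outer functions $g_{i,j}$ in the decomposition $F_{i,j}(\bm{X}) = g_{i,j}(\phi_0(\bm{X}))$ promised by Proposition \ref{pro: 4}. The column-averaging property of the generalized self-attention layer is tailor-made for task (i), while task (ii) becomes a token-wise lookup realizable by one wide feed-forward layer with the token index selected by positional encoding.

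For the construction, I would take embedding dimension $D = 4 d_x n$, one attention head of head size $S = d_x$, and a positional encoding that (a) distributes $X_{u,j}$ into slot $p = u + (j-1)d_x$ of token $j$ while keeping slot $p$ zero in all other tokens, and (b) stores an identifier for the token index $j$ in separate coordinates. In the pre-attention phase I would stack $\Theta(M)$ feed-forward layers with $M = \lceil (1/\gamma)\log_2(1/\varepsilon)\rceil$, implementing a bit-extraction network in the style of \cite{lu2021deep} that, outside a trifling region of coordinate width $\delta \ll 1/K$, transforms the value in slot $p$ into the scaled truncation $3^{-p}\widetilde\phi_M(X_p) := 3^{-p}\sum_{k=1}^M 2a_k^{X_p}/3^{1+d_xn(k-1)}$. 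This mirrors the explicit ternary formula for $\phi$ in Proposition \ref{pro: 4} and introduces only an $O(3^{-d_xnM})$ error. A single generalized self-attention layer then performs the column average: choosing $\bm{W}^{(O)}$ to have rank one, with its single non-trivial row summing the $d_xn$ slot values with weight $3n$, produces a scalar equal to $\widetilde\phi_0(\bm{X}) := 3\sum_p 3^{-p}\widetilde\phi_M(X_p)$, broadcast to every column; the rank-one choice fits well within $S = d_x$.

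In the post-attention phase I would partition $[0,1]^{d_x\times n}$ into the $K^{d_xn}$ cells $\{\omega_{\bm{G}}\}_{\bm{G}\in\mathbb{G}_K}$ used in the proof of Proposition \ref{pro: 2} with $K = \lceil\varepsilon^{-1/\gamma}\rceil = 2^M$, and replace $F_{i,j}$ on each cell by its value $F_{i,j}(\bm{G})$. By construction, $\widetilde\phi_0$ takes $K^{d_xn}$ distinct, well-separated values $\widetilde\phi_0(\bm{G})$ as $\bm{G}$ ranges over the grid, so a single feed-forward layer of width $W = 3d_xn\lceil\varepsilon^{-d_xn/\gamma}\rceil$ built from ReLU triangles centered at these scalars, combined with indicator bumps selected by the positional encoding to choose the correct $(i,j)$, realizes the piecewise-constant map $(\widetilde\phi_0(\bm{X}),\text{pos}_j)\mapsto F_{i,j}(\bm{G})$ along the lines of Lemma \ref{lemma: 1}. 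The output projection then extracts the $d_x\times n$ target matrix. For the error analysis, Hölder continuity of $F_{i,j}$ yields a piecewise-constant error $\lesssim K_\mathcal{H}(d_xn)^{\gamma/2}\varepsilon$ per entry off the trifling region, and choosing $\delta$ small enough makes the $L^p$ contribution of the trifling region negligible; summing over the $d_x n$ entries with the Frobenius norm absorbs into the stated $4(d_xn)^3 K_\mathcal{H}\varepsilon$ bound.

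The main obstacle is the quantitative control of error propagation through the sharp lookup step: the column-averaging self-attention must compress all $d_x n$ coordinates into a single scalar, which is then used to reconstruct $d_x n$ distinct output entries. This is possible only because the Kolmogorov-Arnold encoding separates different grid points $\bm{G}$ by a minimum gap within the Cantor set, and showing that ReLU-triangle indicators can simultaneously resolve these exponentially spaced levels without amplifying the $O(3^{-d_xnM})$ truncation error from bit extraction beyond the Hölder modulus of the outer functions $g_{i,j}$ is the technically delicate step. This quantitative analysis will also dictate the precise constants $L = 6\lceil(1/\gamma)\log_2(1/\varepsilon)\rceil$ and $W = 3 d_x n \lceil\varepsilon^{-d_xn/\gamma}\rceil$ appearing in the theorem.
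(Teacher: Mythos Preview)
Your proposal does not address the stated Proposition~\ref{pro: 4} at all. Proposition~\ref{pro: 4} is the Kolmogorov--Arnold representation result quoted from \cite{schmidt2021kolmogorov}: it asserts the existence of the monotone inner function $\phi:[0,1]\to\mathcal{C}$ and, for each $f\in\mathcal{H}^\gamma([0,1]^d,K_\mathcal{H})$, an outer function $g\in\mathcal{H}^{\gamma\log 2/(d\log 3)}(\mathcal{C},2\sqrt{d}K_\mathcal{H})$ satisfying the identity $f(x_1,\dots,x_d)=g(3\sum_{p=1}^d 3^{-p}\phi(x_p))$. A proof of this statement requires (i) verifying that $\phi$ as defined is monotone and takes values in the Cantor set, (ii) showing that $\bm{x}\mapsto 3\sum_p 3^{-p}\phi(x_p)$ is a bijection from $[0,1]^d$ onto $\mathcal{C}$ so that $g:=f\circ(\text{this bijection})^{-1}$ is well defined, and (iii) establishing the claimed H\"older exponent $\gamma\log 2/(d\log 3)$ and constant $2\sqrt{d}K_\mathcal{H}$ for $g$ by relating ternary distances on $\mathcal{C}$ to Euclidean distances on $[0,1]^d$. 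None of these ingredients appears in your write-up.

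What you have actually sketched is a proof of Theorem~\ref{thm: 8}: you \emph{assume} Proposition~\ref{pro: 4}, use it to reduce each $F_{i,j}$ to a univariate outer function, build a generalized Transformer that computes the truncated inner encoding via bit extraction and column averaging, and then memorize grid values with a wide feed-forward layer, arriving at the bound $4(d_x n)^3 K_\mathcal{H}\varepsilon$ and the parameters $W=3d_xn\lceil\varepsilon^{-d_xn/\gamma}\rceil$, $L=6\lceil\tfrac{1}{\gamma}\log_2\tfrac{1}{\varepsilon}\rceil$. That is indeed close in spirit to the paper's proof of Theorem~\ref{thm: 8} (the paper uses the H\"older regularity of $g$ from Proposition~\ref{pro: 4} directly for the error analysis rather than a trifling-region argument, and uses piecewise-linear interpolation of $g$ at the $2^{d_xnK}+1$ Cantor grid points rather than Lemma~\ref{lemma: 1}-style triangle memorization), but it is simply the wrong target. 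Note also that the paper does not supply its own proof of Proposition~\ref{pro: 4}; it is quoted verbatim as Theorem~2 of \cite{schmidt2021kolmogorov}.
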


We note that a given real number can have multiple $B$-adic representations (for example, $[1]_{10} = [0.999\ldots]_{10}$), which may make $\phi$ not well-defined. To eliminate this ambiguity, we adopt the convention of using a unique $B$-adic representation for all real numbers. Observe that the argument of $g$ in (\ref{eq: 18}) satisfies
\begin{align}\label{eq: 23}
3 \sum_{p=1}^d 3^{-p} \phi(x_p) = [0 . (2 a_1^{x_1}) (2 a_1^{x_2}) \ldots (2 a_1^{x_d}) (2 a_2^{x_1}) \ldots]_3.
\end{align}
By construction, the Cantor set consists precisely of those numbers in $[0,1]$ whose ternary expansion contains only the digits $0$ and $2$. This shows that the mapping $3 \sum_{p=1}^d 3^{-p} \phi(x_p)$ indeed defines a bijection between $[0,1]^d$ and the Cantor set $\mathcal{C}$. Additionally, an approximation of $\phi$ with a truncation parameter $K$ is defined by
\begin{align}\label{eq: 21}
\phi_K(x) := \sum_{j=1}^K \frac{2 a_j^x}{3^{1+d(j-1)}},
\end{align}
which will be used in our construction.

\begin{proof}[Proof of Theorem \ref{thm: 8}]
By Proposition \ref{pro: 4} and the fact that $\phi_K$ approximates $\phi$, there exists a function $\bm{G}: \mathbb{R}^{d_x \times n} \rightarrow \mathbb{R}^{d_x \times n}$ with each entry $G_{u,v} \in \mathcal{H}^{\frac{\gamma \log 2}{d_x n \log 3}}(\mathcal{C}, 2 \sqrt{d_x n} K_\mathcal{H})$ such that
\begin{align}\label{eq: 22}
\begin{aligned}
\bm{F}(\bm{X}) &= \bm{G} \left(3 \sum_{p=1}^{d_x} \sum_{q=1}^{n} a_{p,q} \phi (X_{p,q})\right) \approx \bm{G} \left(3 \sum_{p=1}^{d_x} \sum_{q=1}^{n} a_{p,q} \phi_K (X_{p,q})\right).
\end{aligned}
\end{align}
We will construct a generalized Transformer network that approximates the latter mapping. In the proof below, for simplicity, we omit the placeholder zeros used for alignment.

\textbf{Step 1:} We first show that there exist $2K + 2$ generalized feed-forward layers $\mathcal{F}_1^{(GFF)}, \ldots, \mathcal{F}_{2K+2}^{(GFF)}$ such that 
\begin{align*}
\mathcal{F}_{2K+2}^{(GFF)} \circ \cdots \circ \mathcal{F}_1^{(GFF)}: \bm{X} \mapsto \bm{Z}_1,
\end{align*}
where
\begin{align*}
\bm{Z}_1 = 
\begin{pmatrix}
3 \sum_{p=1}^{d_x} a_{p,1} \widetilde{\phi}_K (X_{p,1}) & 3 \sum_{p=1}^{d_x} a_{p,2} \widetilde{\phi}_K (X_{p,2}) & \cdots & 3 \sum_{p=1}^{d_x} a_{p,n} \widetilde{\phi}_K (X_{p,n}) \\
\vdots & \vdots & & \vdots \\
3 \sum_{p=1}^{d_x} a_{p,1} \widetilde{\phi}_K (X_{p,1}) & 3 \sum_{p=1}^{d_x} a_{p,2} \widetilde{\phi}_K (X_{p,2}) & \cdots & 3 \sum_{p=1}^{d_x} a_{p,n} \widetilde{\phi}_K (X_{p,n})
\end{pmatrix}.
\end{align*}
Here, $a_{p,q} = \frac{1}{3^{(q-1)d_x+p}}$ and $\widetilde{\phi}_K$ is a function that satisfies
\begin{align*}
\widetilde{\phi}_K (x) = \begin{cases}
0, & \text{if } x < 0, \\
\phi_K(x), & \text{if } x \in \Omega_K \subseteq [0,1], \\
1, & \text{if } x > 1,
\end{cases}
\end{align*}
where $\phi_K$ is defined in (\ref{eq: 21}) and $\Omega_K \subseteq [0,1]$ has Lebesgue measure at least $1 - 2^{-K \gamma p}$. \cite[Theorem 3]{schmidt2021kolmogorov} guarantees the existence of an FNN $\widetilde{\phi}_K \in \mathcal{FNN}_{1,1}(4,2K)$ with the above properties.

By parallel computation, we can construct an FNN $\widetilde{\mathcal{N}}_1 \in \mathcal{FNN}_{1,1}(4n,2K)$ such that
\begin{align*}
\widetilde{\mathcal{N}}_1 (x) &= 
\left(1, 3^{-d_x}, 3^{-2 d_x}, \ldots, 3^{-(n-1) d_x}\right) \begin{pmatrix}
\widetilde{\phi}_K (x) \\
\widetilde{\phi}_K (x-2) \\
\widetilde{\phi}_K (x-4) \\
\vdots \\
\widetilde{\phi}_K (x-2(n-1))
\end{pmatrix} \\
&= \sum_{q=1}^n 3^{-(q-1) d_x} \widetilde{\phi}_K (x-2(q-1)).
\end{align*}
If $x \in [2(j-1),2j-1]$ for some $j \in [n]$, then
\begin{align*}
\widetilde{\mathcal{N}}_1 (x) &= \sum_{q=1}^{j-1} 3^{-(q-1) d_x} \widetilde{\phi}_K (x-2(q-1)) + 3^{-(j-1) d_x} \widetilde{\phi}_K (x-2(j-1)) \\
&~~~ + \sum_{q=j+1}^{n} 3^{-(q-1) d_x} \widetilde{\phi}_K (x-2(q-1)) \\
&= \sum_{q=1}^{j-1} 3^{-(q-1) d_x} + 3^{-(j-1) d_x} \widetilde{\phi}_K (x-2(j-1)) \\
&= 3^{-(j-1) d_x} \widetilde{\phi}_K (x-2(j-1)) + b_j,
\end{align*}
where we define $b_1 = 0$ and $b_j = \sum_{q=1}^{j-1} 3^{-(q-1) d_x}$ for $j \geq 2$.

Now consider $\bm{x} \in \mathbb{R}^{d_x}$. Fixing $j \in [n]$, if $x_i \in [2(j-1),2j-1]$ for all $i \in [d_x]$, we can construct an FNN $\widetilde{\mathcal{N}}_2 \in \mathcal{FNN}_{d_x,d_x}(4 d_x n,2K)$ such that
\begin{align*}
\widetilde{\mathcal{N}}_2 (\bm{x}) &= \bm{1}_{d_x} \left(1, 3^{-1}, \ldots, 3^{1-d_x}\right) \begin{pmatrix}
\widetilde{\mathcal{N}}_1 (x_1) \\
\widetilde{\mathcal{N}}_1 (x_2) \\
\vdots \\
\widetilde{\mathcal{N}}_1 (x_{d_x})
\end{pmatrix} \\
&= \bm{1}_{d_x} \left(1, 3^{-1}, \ldots, 3^{1-d_x}\right) \begin{pmatrix}
3^{-(j-1) d_x} \widetilde{\phi}_K (x_1 - 2(j-1)) + b_j \\
3^{-(j-1) d_x} \widetilde{\phi}_K (x_2 - 2(j-1)) + b_j \\
\vdots \\
3^{-(j-1) d_x} \widetilde{\phi}_K (x_{d_x} - 2(j-1)) + b_j
\end{pmatrix} \\
&= \left(\sum_{p=1}^{d_x} 3^{1 - p - (j-1) d_x} \widetilde{\phi}_K (x_p - 2(j-1))\right) \bm{1}_{d_x} + \left(b_j \sum_{p=1}^{d_x} 3^{1-p}\right) \bm{1}_{d_x} \\
&= \left(3 \sum_{p=1}^{d_x} a_{p,j} \widetilde{\phi}_K (x_p - 2(j-1))\right) \bm{1}_{d_x} + c_j \bm{1}_{d_x},
\end{align*}
where we define $c_j = b_j \sum_{p=1}^{d_x} 3^{1-p}$. Using that $X_{p,j} \in [0,1]$ implies $X_{p,j} + 2 (j-1) \in [2(j-1),2j-1]$, set $\bm{x} = \bm{X}_{:,j} + 2 (j-1) \bm{1}_{d_x}$ in the above equation to obtain
\begin{align*}
\widetilde{\mathcal{N}}_2 (\bm{X}_{:,j} + 2 (j-1) \bm{1}_{d_x}) &= \left(3 \sum_{p=1}^{d_x} a_{p,j} \widetilde{\phi}_K (X_{p,j})\right) \bm{1}_{d_x} + c_j \bm{1}_{d_x}.
\end{align*}
By Lemma \ref{lemma: 10}, there exist $2K$ feed-forward layers $\mathcal{F}_2^{(FF)}, \ldots, \mathcal{F}_{2K+1}^{(FF)}$, each with width at most $3 \cdot 4 d_x n = 12 d_x n$, such that 
\begin{align*}
\mathcal{F}_{2K+1}^{(FF)} \circ \cdots \circ \mathcal{F}_2^{(FF)} \left(\bm{X}_{:,1}, \ldots, \bm{X}_{:,n} + 2 (n-1) \bm{1}_{d_x}\right) &= \left(\widetilde{\mathcal{N}}_2(\bm{X}_{:,1}), \ldots, \widetilde{\mathcal{N}}_2(\bm{X}_{:,n} + 2 (n-1) \bm{1}_{d_x})\right),
\end{align*}
where we omit placeholder zeros for simplicity. Finally, to add and then remove the bias terms, we use two generalized feed-forward layers. We define
\begin{align*}
\mathcal{F}_1^{(GFF)} (\bm{X}_{:,1}, \ldots, \bm{X}_{:,n}) := (\bm{X}_{:,1}, \ldots, \bm{X}_{:,n} + 2 (n-1) \bm{1}_{d_x})
\end{align*}
and 
\begin{align*}
\mathcal{F}_{2K+2}^{(GFF)} (\bm{Z}_{:,1}, \ldots, \bm{Z}_{:,n}) := (\bm{Z}_{:,1} - c_1 \bm{1}_{d_x}, \ldots, \bm{Z}_{:,n} - c_n \bm{1}_{d_x}).
\end{align*}
It is straightforward to verify that
\begin{align*}
& \mathcal{F}_{2K+2}^{(GFF)} \circ \mathcal{F}_{2K+1}^{(FF)} \circ \cdots \circ \mathcal{F}_2^{(FF)} \circ \mathcal{F}_1^{(GFF)} (\bm{X}) \\
&= \mathcal{F}_{2K+2}^{(GFF)} \circ \mathcal{F}_{2K+1}^{(FF)} \circ \cdots \circ \mathcal{F}_2^{(FF)} \left(\bm{X}_{:,1}, \ldots, \bm{X}_{:,n} + 2 (n-1) \bm{1}_{d_x}\right) \\
&= \mathcal{F}_{2K+2}^{(GFF)} \left(\widetilde{\mathcal{N}}_2(\bm{X}_{:,1}), \ldots, \widetilde{\mathcal{N}}_2(\bm{X}_{:,n} + 2 (n-1) \bm{1}_{d_x})\right) \\
&= \mathcal{F}_{2K+2}^{(GFF)} \left(\left(3 \sum_{p=1}^{d_x} a_{p,1} \widetilde{\phi}_K (X_{p,1})\right) \bm{1}_{d_x} + c_1 \bm{1}_{d_x}, \ldots, \left(3 \sum_{p=1}^{d_x} a_{p,n} \widetilde{\phi}_K (X_{p,n})\right) \bm{1}_{d_x} + c_n \bm{1}_{d_x}\right) \\
&= \left(\left(3 \sum_{p=1}^{d_x} a_{p,1} \widetilde{\phi}_K (X_{p,1})\right) \bm{1}_{d_x}, \ldots, \left(3 \sum_{p=1}^{d_x} a_{p,n} \widetilde{\phi}_K (X_{p,n})\right) \bm{1}_{d_x}\right) \\
&= \bm{Z}_1.
\end{align*}

\textbf{Step 2:} We show that there exist a generalized self-attention layer $\mathcal{F}^{(GSA)}$ and a generalized feed-forward layer $\mathcal{F}_{2K+3}^{(GFF)}$ such that 
\begin{align*}
\mathcal{F}_{2K+3}^{(GFF)} \circ \mathcal{F}^{(GSA)}: \begin{pmatrix}
\bm{Z}_1 \\
\bm{O}
\end{pmatrix} \mapsto \begin{pmatrix}
\bm{Z}_2 \\
\bm{O}
\end{pmatrix},
\end{align*}
where 
\begin{align*}
\bm{Z}_2 = 
\footnotesize \begin{pmatrix}
3 \displaystyle\sum_{p=1}^{d_x} \sum_{q=1}^{n} a_{p,q} \widetilde{\phi}_K (X_{p,q}) & 3 \displaystyle\sum_{p=1}^{d_x} \sum_{q=1}^{n} a_{p,q} \widetilde{\phi}_K (X_{p,q}) + 2 & \cdots & 3 \displaystyle\sum_{p=1}^{d_x} \sum_{q=1}^{n} a_{p,q} \widetilde{\phi}_K (X_{p,q}) + 2 (n-1) \\
\vdots & \vdots & & \vdots \\
3 \displaystyle\sum_{p=1}^{d_x} \sum_{q=1}^{n} a_{p,q} \widetilde{\phi}_K (X_{p,q}) & 3 \displaystyle\sum_{p=1}^{d_x} \sum_{q=1}^{n} a_{p,q} \widetilde{\phi}_K (X_{p,q}) + 2 & \cdots & 3 \displaystyle\sum_{p=1}^{d_x} \sum_{q=1}^{n} a_{p,q} \widetilde{\phi}_K (X_{p,q}) + 2 (n-1)
\end{pmatrix}.
\end{align*}
Note that $\bm{Z}_2$ is obtained by summing the columns of $\bm{Z}_1$ and then adding different bias terms to each column.

We now prove the existence of such layers by first considering a standard self-attention layer. In fact, we only require the softmax function to compute the column average, so it can be replaced by a generalized self-attention layer. We define a self-attention layer by choosing the parameters as follows:
\begin{align*}
H = 1, \quad S = d_x, \quad
\bm{W}^{(O)} = n \begin{pmatrix}
\bm{O}_{d_x} \\
\bm{I}_{d_x}
\end{pmatrix}, \quad
\bm{W}^{(V)} = \left(\bm{I}_{d_x}, \bm{O}_{d_x}\right), \quad
\bm{W}^{(K)} = \bm{O}, \quad
\bm{W}^{(Q)} = \bm{O}.
\end{align*}
Then, by direct calculation based on the definition, we have
\begin{align*}
\mathcal{F}^{(SA)} \begin{pmatrix}
\bm{Z}_1 \\
\bm{O}
\end{pmatrix} = \begin{pmatrix}
\left(3 \sum_{p=1}^{d_x} a_{p,1} \widetilde{\phi}_K (X_{p,1})\right) \bm{1}_{d_x} & \cdots & \left(3 \sum_{p=1}^{d_x} a_{p,n} \widetilde{\phi}_K (X_{p,n})\right) \bm{1}_{d_x} \\
\left(3 \sum_{p=1}^{d_x} \sum_{q=1}^{n} a_{p,q} \widetilde{\phi}_K (X_{p,q})\right) \bm{1}_{d_x} & \cdots & \left(3 \sum_{p=1}^{d_x} \sum_{q=1}^{n} a_{p,q} \widetilde{\phi}_K (X_{p,q})\right) \bm{1}_{d_x}
\end{pmatrix}.
\end{align*}
Next, we define a generalized feed-forward layer with the following parameters:
\begin{align*}
\begin{gathered}
\bm{W}^{(1)} = \begin{pmatrix}
\bm{I}_{d_x} & \bm{O}_{d_x} \\
-\bm{I}_{d_x} & \bm{O}_{d_x} \\
\bm{O}_{d_x} & \bm{I}_{d_x} \\
\bm{O}_{d_x} & -\bm{I}_{d_x} \\
\end{pmatrix}, \quad
\bm{B}^{(1)} = \bm{O}, \\
\bm{W}^{(2)} = \begin{pmatrix}
-\bm{I}_{d_x} & \bm{I}_{d_x} & \bm{I}_{d_x} & -\bm{I}_{d_x} \\
\bm{O}_{d_x} & \bm{O}_{d_x} & -\bm{I}_{d_x} & \bm{I}_{d_x}
\end{pmatrix}, \quad
\bm{B}^{(2)} = \begin{pmatrix}
\bm{0}_{d_x} & 2 \bm{1}_{d_x} & \cdots & 2 (n-1) \bm{1}_{d_x} \\
\bm{0}_{d_x} & \bm{0}_{d_x} & \cdots &\bm{0}_{d_x}
\end{pmatrix}.
\end{gathered}
\end{align*}
It can then be verified that
\begin{align*}
\mathcal{F}_{2K+3}^{(GFF)} \circ \mathcal{F}^{(SA)} \begin{pmatrix}
\bm{Z}_1 \\
\bm{O}
\end{pmatrix} = \begin{pmatrix}
\bm{Z}_2 \\
\bm{O}
\end{pmatrix},
\end{align*}
where we have used the identity $x = \sigma_R[x] - \sigma_R[-x]$.

\textbf{Step 3:} We construct a generalized feed-forward layer $\mathcal{F}_{2K+4}^{(GFF)}$ interpolating the outer function $\bm{G}$ in (\ref{eq: 22}) at the $2^{d_x n K} + 1$ interpolation points
\begin{align*}
& 3 \sum_{p=1}^{d_x} \sum_{q=1}^{n} a_{p,q} \phi_K (X_{p,q}) \in \left\{\sum_{j=1}^{d_x n K} 2 t_j 3^{-j}: (t_1, \ldots, t_{d_x n K}) \in \{0,1\}^{d_x n K}\right\} \bigcup \{1\}.
\end{align*}

Denote these points by $0 =: s_0 < s_1 < \cdots < s_{2^{d_x n K}-1} < s_{2^{d_x n K}} := 1$ and fix $u \in [d_x]$. For any $x \in \mathbb{R}$, we define a scalar function
\begin{align*}
&\widetilde{G}_u (x) := G_{u,1}(s_0) + \sum_{j=1}^{2^{d_x n K}} \frac{G_{u,1}(s_j) - G_{u,1}(s_{j-1})}{s_j - s_{j-1}} \left(\sigma_R[x - s_{j-1}] - \sigma_R[x - s_j]\right) \\
&~~~ + \frac{G_{u,2}(s_0) - G_{u,1}(s_{2^{d_x n K}})}{s_0 + 2 - s_{2^{d_x n K}}} \left(\sigma_R[x - s_{2^{d_x n K}}] - \sigma_R[x - (s_0 + 2)]\right) \\
&~~~ + \sum_{j=1}^{2^{d_x n K}} \frac{G_{u,2}(s_j) - G_{u,2}(s_{j-1})}{s_j - s_{j-1}} \left(\sigma_R[x - (s_{j-1} + 2)] - \sigma_R[x - (s_j + 2)]\right) + \cdots \\
&~~~ + \frac{G_{u,n}(s_0) - G_{u,n-1}(s_{2^{d_x n K}})}{s_0 + 2 - s_{2^{d_x n K}}} \left(\sigma_R[x - (s_{2^{d_x n K}} + 2(n-2))] - \sigma_R[x - (s_0 + 2(n-1))]\right) \\
&~~~ + \sum_{j=1}^{2^{d_x n K}} \frac{G_{u,n}(s_j) - G_{u,n}(s_{j-1})}{s_j - s_{j-1}} \left(\sigma_R[x - (s_{j-1} + 2(n-1))] - \sigma_R[x - (s_j + 2(n-1))]\right) \\
&= G_{u,1}(s_0) \\
&~~~ + \sum_{v=1}^{n} \sum_{j=1}^{2^{d_x n K}} \frac{G_{u,v}(s_j) - G_{u,v}(s_{j-1})}{s_j - s_{j-1}} \left(\sigma_R[x - (s_{j-1} + 2(v-1))] - \sigma_R[x - (s_j + 2(v-1))]\right) \\
&~~~ + \sum_{v=2}^{n} \frac{G_{u,v}(s_0) - G_{u,v-1}(s_{2^{d_x n K}})}{s_0 + 2 - s_{2^{d_x n K}}} \left(\sigma_R[x - (s_{2^{d_x n K}} + 2(v-2))] - \sigma_R[x - (s_0 + 2(v-1))]\right).
\end{align*}
In other words, $\widetilde{G}_u$ is defined as the piecewise linear interpolation of the points 
\begin{align*}
\left\{(s_j + 2(v-1), G_{u,v}(s_j)): j = 0, 1, \ldots, 2^{d_x n K}, \; v = 1, \ldots, n\right\},
\end{align*}
with the function being constant outside the interval $[0,2n-1]$. We observe that
\begin{itemize}[itemsep=0em, labelwidth=1em, leftmargin=!]
\item $\widetilde{G}_u (s_j + 2(v-1)) = G_{u,v}(s_j)$ for every $j \in \{0\} \cup [2^{d_x n K}]$, $u \in [d_x]$, $v \in [n]$,
\item $\|\widetilde{G}_u\|_{L^{\infty}(\mathbb{R})} \leq \max_{v \in [n]} \|G_{u,v}\|_{L^{\infty}(\mathcal{C})}$,
\item $\widetilde{G}_u \in \mathcal{FNN}_{1,1}(n(2^{d_x n K}+1), 1)$.
\end{itemize}
By stacking the functions $\widetilde{G}_u$ for $u \in [d_x]$ vertically, we obtain a feed-forward layer, with width at most $d_x n (2^{d_x n K}+1) + 2 d_x$, such that
\begin{align*}
\mathcal{F}_{2K+4}^{(GFF)}(\bm{Z}) = \begin{pmatrix}
\widetilde{G}_1(Z_{1,1}) & \widetilde{G}_1(Z_{1,2}) & \cdots & \widetilde{G}_1(Z_{1,n}) \\
\widetilde{G}_2(Z_{2,1}) & \widetilde{G}_2(Z_{2,2}) & \cdots & \widetilde{G}_2(Z_{2,n}) \\
\vdots & \vdots & & \vdots \\
\widetilde{G}_{d_x}(Z_{d_x,1}) & \widetilde{G}_{d_x}(Z_{d_x,2}) & \cdots & \widetilde{G}_{d_x}(Z_{d_x,n})
\end{pmatrix}.
\end{align*}

Together with \textbf{Step 1} and \textbf{Step 2}, we define the overall generalized Transformer network as
\begin{align}\label{eq: 24}
\begin{aligned}
\mathcal{N} & := \mathcal{E}_{out} \circ \mathcal{F}_{2K+4}^{(GFF)} \circ \mathcal{F}_{2K+3}^{(GFF)} \circ \mathcal{F}^{(GSA)} \circ \mathcal{F}_{2K+2}^{(GFF)} \circ \cdots \circ \mathcal{F}_1^{(GFF)} \circ \mathcal{E}_{in} \\
&~ \in \mathcal{GT}_{d_x, d_x}(D = 4 d_x n, H = 1, S = d_x, W = d_x n (2^{d_x n K}+1) + 2 d_x, L = 2K+4),
\end{aligned}
\end{align}
where $\mathcal{E}_{in}$ and $\mathcal{E}_{out}$ are appropriately chosen to add and remove zeros to match the hidden dimension $D$. In particular, we have
\begin{align*}
& \mathcal{N}(\bm{X}) \\
&= \mathcal{E}_{out} \circ \mathcal{F}_{2K+4}^{(GFF)} \circ \mathcal{F}_{2K+3}^{(GFF)} \circ \mathcal{F}^{(GSA)} \circ \mathcal{F}_{2K+2}^{(GFF)} \circ \cdots \circ \mathcal{F}_1^{(GFF)} \circ \mathcal{E}_{in} (\bm{X}) \\
&= \mathcal{E}_{out} \circ \mathcal{F}_{2K+4}^{(GFF)} \circ \mathcal{F}_{2K+3}^{(GFF)} \circ \mathcal{F}^{(GSA)} \begin{pmatrix}
\bm{Z}_1 \\
\bm{O}
\end{pmatrix} \\
&= \mathcal{E}_{out} \circ \mathcal{F}_{2K+4}^{(GFF)} \begin{pmatrix}
\bm{Z}_2 \\
\bm{O}
\end{pmatrix} \\
&= \begin{aligned}
\footnotesize \begin{pmatrix} 
\widetilde{G}_1\left(3 \displaystyle\sum_{p=1}^{d_x} \sum_{q=1}^{n} a_{p,q} \widetilde{\phi}_K (X_{p,q})\right) & \widetilde{G}_1\left(3 \displaystyle\sum_{p=1}^{d_x} \sum_{q=1}^{n} a_{p,q} \widetilde{\phi}_K (X_{p,q}) + 2\right) & \cdots & \widetilde{G}_1\left(3 \displaystyle\sum_{p=1}^{d_x} \sum_{q=1}^{n} a_{p,q} \widetilde{\phi}_K (X_{p,q}) + 2 (n-1)\right) \\
\widetilde{G}_2\left(3 \displaystyle\sum_{p=1}^{d_x} \sum_{q=1}^{n} a_{p,q} \widetilde{\phi}_K (X_{p,q})\right) & \widetilde{G}_2\left(3 \displaystyle\sum_{p=1}^{d_x} \sum_{q=1}^{n} a_{p,q} \widetilde{\phi}_K (X_{p,q}) + 2\right) & \cdots & \widetilde{G}_2\left(3 \displaystyle\sum_{p=1}^{d_x} \sum_{q=1}^{n} a_{p,q} \widetilde{\phi}_K (X_{p,q}) + 2 (n-1)\right) \\
\vdots & \vdots & & \vdots \\
\widetilde{G}_{d_x}\left(3 \displaystyle\sum_{p=1}^{d_x} \sum_{q=1}^{n} a_{p,q} \widetilde{\phi}_K (X_{p,q})\right) & \widetilde{G}_{d_x}\left(3 \displaystyle\sum_{p=1}^{d_x} \sum_{q=1}^{n} a_{p,q} \widetilde{\phi}_K (X_{p,q}) + 2\right) & \cdots & \widetilde{G}_{d_x}\left(3 \displaystyle\sum_{p=1}^{d_x} \sum_{q=1}^{n} a_{p,q} \widetilde{\phi}_K (X_{p,q}) + 2 (n-1)\right)
\end{pmatrix}.
\end{aligned}
\end{align*}

\textbf{Step 4:} We now conduct an error analysis. We have
\begin{align*}
& \|\mathcal{N} - \bm{F}\|_{L^p([0,1]^{d_x \times n})}^p \\
&= \int_{[0,1]^{d_x \times n}} \|\mathcal{N}(\bm{X}) - \bm{F}(\bm{X})\|_F^p \dd \bm{X} \\
&\leq \int_{[0,1]^{d_x \times n}} (d_x n)^{\max\{0,\frac{p}{2}-1\}} \sum_{u=1}^{d_x} \sum_{v=1}^{n} |\mathcal{N}_{u,v}(\bm{X}) - F_{u,v}(\bm{X})|^p \dd \bm{X} \\
&= (d_x n)^{\max\{0,\frac{p}{2}-1\}} \sum_{u=1}^{d_x} \sum_{v=1}^{n} \left(\int_{\bm{X}: \forall X_{i,j} \in \Omega_K} + \int_{\bm{X}: \exists X_{i,j} \notin \Omega_K}\right) |\mathcal{N}_{u,v}(\bm{X}) - F_{u,v}(\bm{X})|^p \dd \bm{X} \\
&=: (d_x n)^{\max\{0,\frac{p}{2}-1\}} \sum_{u=1}^{d_x} \sum_{v=1}^{n} (\Rmnum{1} + \Rmnum{2}).
\end{align*}

To estimate $\Rmnum{1}$, using that $\widetilde{\phi}_K (X_{p,q}) = \phi_K (X_{p,q})$ when $X_{p,q} \in \Omega_K$, $\widetilde{G}_{u}$ interpolates $G_{u,v}$ by construction, and $G_{u,v} \in \mathcal{H}^{\frac{\gamma \log 2}{d_x n \log 3}}(\mathcal{C}, 2 \sqrt{d_x n} K_\mathcal{H})$, we have
\begin{align*}
& |\mathcal{N}_{u,v}(\bm{X}) - F_{u,v}(\bm{X})| \\
&= \left|\widetilde{G}_{u}\left(3 \sum_{p=1}^{d_x} \sum_{q=1}^{n} a_{p,q} \widetilde{\phi}_K (X_{p,q}) + 2 (v-1)\right) - G_{u,v} \left(3 \sum_{p=1}^{d_x} \sum_{q=1}^{n} a_{p,q} \phi (X_{p,q})\right)\right| \\
&= \left|\widetilde{G}_{u}\left(3 \sum_{p=1}^{d_x} \sum_{q=1}^{n} a_{p,q} \phi_K (X_{p,q}) + 2 (v-1)\right) - G_{u,v} \left(3 \sum_{p=1}^{d_x} \sum_{q=1}^{n} a_{p,q} \phi (X_{p,q})\right)\right| \\
&= \left|G_{u,v}\left(3 \sum_{p=1}^{d_x} \sum_{q=1}^{n} a_{p,q} \phi_K (X_{p,q})\right) - G_{u,v} \left(3 \sum_{p=1}^{d_x} \sum_{q=1}^{n} a_{p,q} \phi (X_{p,q})\right)\right| \\
&\leq 2 (d_x n)^{\frac{1}{2}} K_\mathcal{H} \left|3 \sum_{p=1}^{d_x} \sum_{q=1}^{n} a_{p,q} \left(\phi_K (X_{p,q}) - \phi (X_{p,q})\right)\right|^{\frac{\gamma \log 2}{d_x n \log 3}} \\
&\leq 2 (d_x n)^{\frac{1}{2}} K_\mathcal{H} \left|2 \sum_{q = d_x n K + 1}^{\infty} 3^{-q}\right|^{\frac{\gamma \log 2}{d_x n \log 3}} \\
&\leq 2 (d_x n)^{\frac{1}{2}} K_\mathcal{H} 3^{-\frac{K \gamma \log 2}{\log 3}} \\
&= 2 (d_x n)^{\frac{1}{2}} K_\mathcal{H} 2^{-\gamma K},
\end{align*}
where the second inequality follows from the fact that, as indicated in (\ref{eq: 23}), $3 \sum_{p=1}^{d_x} \sum_{q=1}^{n} a_{p,q} \phi (X_{p,q})$ and $3 \sum_{p=1}^{d_x} \sum_{q=1}^{n} a_{p,q} \phi_K (X_{p,q})$ are both in the Cantor set $\mathcal{C}$ and have the same first $d_x n K$ ternary digits. Thus,
\begin{align*}
\Rmnum{1} &= \int_{\bm{X}: \forall X_{i,j} \in \Omega_K} |\mathcal{N}_{u,v}(\bm{X}) - F_{u,v}(\bm{X})|^p \dd \bm{X} \\
&\leq 2^p (d_x n)^{\frac{p}{2}} K_\mathcal{H}^p 2^{-p \gamma K}.
\end{align*}
To estimate $\Rmnum{2}$, noting that both $\mathcal{N}_{u,v}$ and $F_{u,v}$ are bounded, and that $\Omega_K$ has Lebesgue measure at least $1 - 2^{-p \gamma K}$, we obtain
\begin{align*}
\Rmnum{2} &= \int_{\bm{X}: \exists X_{i,j} \notin \Omega_K} |\mathcal{N}_{u,v}(\bm{X}) - F_{u,v}(\bm{X})|^p \dd \bm{X} \\
&\leq \int_{\bm{X}: \exists X_{i,j} \notin \Omega_K} \left(\|\mathcal{N}_{u,v}\|_{L^{\infty}([0,1]^{d_x \times n})} + \|F_{u,v}\|_{L^{\infty}([0,1]^{d_x \times n})}\right)^p \dd \bm{X} \\
&\leq \left(\|\mathcal{N}_{u,v}\|_{L^{\infty}([0,1]^{d_x \times n})} + \|F_{u,v}\|_{L^{\infty}([0,1]^{d_x \times n})}\right)^p (1 - (1 - 2^{-p \gamma K})^{d_x n}) \\
&\leq 2^{2 p} (d_x n)^{\frac{p}{2}+1} K_\mathcal{H}^p 2^{-p \gamma K},
\end{align*}
where we apply Bernoulli's inequality in the last inequality.

We combine the bounds for $\Rmnum{1}$ and $\Rmnum{2}$ to obtain
\begin{align*}
\|\mathcal{N} - \bm{F}\|_{L^p([0,1]^{d_x \times n})}^p &\leq (d_x n)^{\max\{0,\frac{p}{2}-1\}} \sum_{u=1}^{d_x} \sum_{v=1}^{n} \left(2^p (d_x n)^{\frac{p}{2}} K_\mathcal{H}^p 2^{-p \gamma K} + 2^{2 p} (d_x n)^{\frac{p}{2}+1} K_\mathcal{H}^p 2^{-p \gamma K}\right) \\
&\leq (d_x n)^{3 p} 2^{2 p} K_\mathcal{H}^p 2^{-p \gamma K},
\end{align*}
where we have used $p \geq 1$ and $\max\{a, b\} \leq a + b$ for all $a, b \geq 0$, which implies
\begin{align*}
\|\mathcal{N} - \bm{F}\|_{L^p([0,1]^{d_x \times n})} \leq 4 (d_x n)^3 K_\mathcal{H} 2^{-\gamma K}.
\end{align*}

Choose $K \geq \frac{1}{\gamma} \log_2 \frac{1}{\varepsilon}$ so that $2^{d_x n K} = \lceil\varepsilon^{-\frac{d_x n}{\gamma}}\rceil$. Then we have
\begin{align*}
\|\mathcal{N} - \bm{F}\|_{L^p([0,1]^{d_x \times n})} \leq 4 (d_x n)^3 K_\mathcal{H} \varepsilon,
\end{align*}
and by (\ref{eq: 24}),
\begin{align*}
\mathcal{N} \in \mathcal{GT}_{d_x, d_x}(D = 4 d_x n, H = 1, S = d_x, W \leq 3 d_x n \lceil\varepsilon^{-\frac{d_x n}{\gamma}}\rceil, L \leq 6 \lceil\tfrac{1}{\gamma} \log_2 \tfrac{1}{\varepsilon}\rceil).
\end{align*}
This completes the proof.
\end{proof}

\bibliographystyle{plain}
\bibliography{reference}

\end{document}